\title{ Saddle-to-Saddle Dynamics \\ in Diagonal Linear Networks}
\author{%
  Scott Pesme \\
  EPFL \\
  \texttt{scott.pesme@epfl.ch}
  % examples of more authors
  \And
  Nicolas Flammarion \\
  EPFL \\
  \texttt{nicolas.flammarion@epfl.ch}
  % \texttt{email} \\
  % \AND
  % Coauthor \\
  % Affiliation \\
  % Address \\
  % \texttt{email} \\
  % \And
  % Coauthor \\
  % Affiliation \\
  % Address \\
  % \texttt{email} \\
  % \And
  % Coauthor \\
  % Affiliation \\
  % Address \\
  % \texttt{email} \\
}
\begin{document}

\maketitle

\begin{abstract}
In this paper we fully describe the trajectory of gradient flow over $2$-layer diagonal linear networks for the regression setting in the limit of vanishing initialisation. We show that the limiting flow successively 
jumps from a saddle of the training loss to another until reaching the minimum $\ell_1$-norm solution. We explicitly characterise the visited saddles as well as the jump times through a recursive algorithm reminiscent of the LARS algorithm used for computing the Lasso path. 
Starting from the zero vector, coordinates are successively activated until the minimum $\ell_1$-norm solution is recovered, revealing an incremental learning. Our proof leverages a convenient arc-length time-reparametrisation which enables to keep track of the transitions between the jumps. Our analysis requires negligible assumptions on the data, applies to both under and overparametrised settings and covers complex cases where there is no monotonicity of the number of active coordinates.
We provide numerical experiments to support our findings.

\end{abstract}

%\vspace*{0.2em}
\section{Introduction}
%\vspace*{0.5em}

Strikingly simple algorithms such as gradient descent are driving forces for deep learning and have led to remarkable empirical results.  Nonetheless, understanding the performances of such methods remains a challenging and exciting mystery: (i) their global convergence on highly non-convex losses is far from being trivial and (ii) the fact that they lead to solutions which generalise well \citep{zhang2017understanding} is still not fully understood.

To explain this second point, a major line of work has focused on the concept of implicit regularisation: amongst the infinite space of zero-loss solutions, the optimisation process must be implicitly biased towards solutions which have good generalisation properties for the considered real-world prediction tasks. Many papers have therefore shown
%therefore focused on showing 
that gradient methods have the fortunate property of asymptotically leading to solutions which have a well-behaving structure \citep{neyshabur2017implicit,gunasekar2017implicit,chizat2020implicit}. 

% For example, convergence towards max-margin classifiers have been shown in various classification settings \citep{soudry2018implicit,lyu2020Gradient,chizat2020implicit} and convergence towards low-norm solutions in regression settings \citep{woodworth2020kernel,boursier2022gradient}.

Aside from these results which mostly focus on characterising the asymptotic solution, a slightly different point of view has been to try to describe the full trajectory. Indeed it has been experimentally observed that gradient methods with small initialisations have the property of learning models of increasing complexity across the training of neural networks \citep{kalimeris2019sgd}. This behaviour is usually referred to as \textit{incremental learning} or as a \textit{saddle-to-saddle process} and describes learning curves which are piecewise constant: the training process makes very little progress for some time, followed by a sharp transition where a new ``feature'' is suddenly learned. In terms of optimisation trajectory, this corresponds to the iterates "jumping" from a saddle of the training loss to another.

Several settings exhibiting such dynamics for small initialisation have been considered: matrix and tensor factorisation \citep{razin2021implicit,jiang2022algorithmic}, simplified versions of diagonal linear networks~\citep{gissin2020the,berthier2022incremental}, linear networks~\citep{gidel2019implicit,saxe2019mathematical,jacot2021saddle}, $2$-layer neural networks with orthogonal inputs~\citep{boursier2022gradient}, learning leap functions with $2$-layer neural networks~\citep{abbe2023sgd}
and
matrix sensing  \citep{arora2019implicit,li2021towards,jin2023understanding}. However, all these results require restrictive assumptions on the data or only characterise the first jump. Obtaining a complete picture of the saddle-to-saddle process by describing all the visited saddles and jump times is mathematically challenging and still missing. We intend to fill this gap by considering diagonal linear networks which are simplified neural networks that have received significant attention lately~\citep{woodworth2020kernel,vaskevicius2019implicit,haochen2021shape,pesme2021implicit,even2023s} as they are ideal proxy models for gaining a deeper understanding of complex phenomenons such as saddle-to-saddle dynamics.

\begin{figure}[t]
\centering
\includegraphics[width=0.99\linewidth]{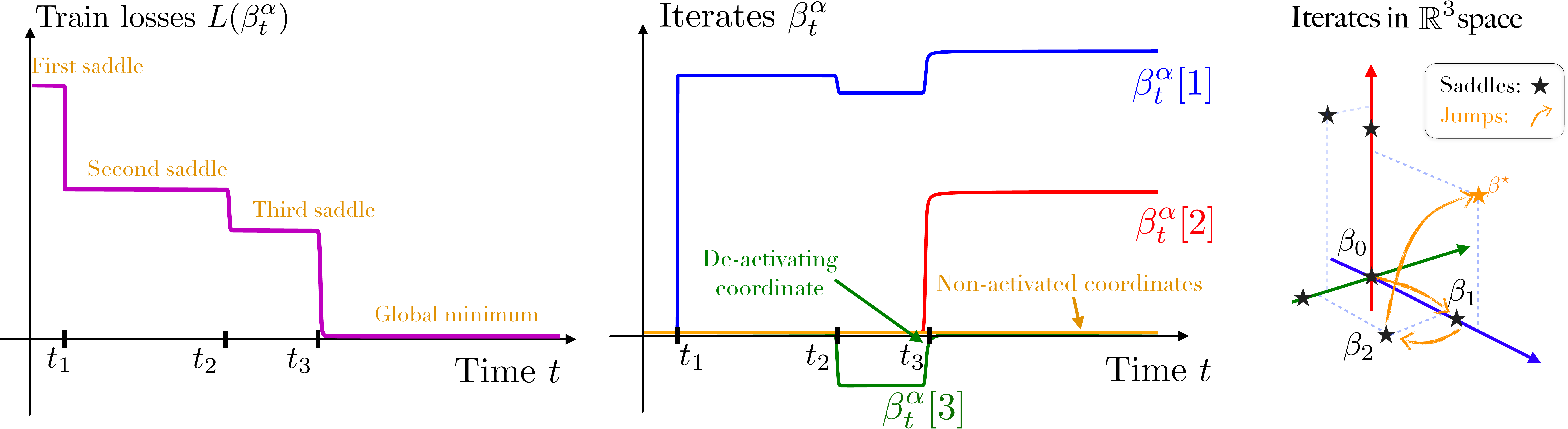}
% \caption{SGD $b=1$\label{subfig:sgd}}
% \hspace*{-15pt}
% \vspace*{-0.5cm}
\caption{Gradient flow $(\beta^\alpha_t)_t$ with small initialisation scale $\alpha$ over a $2$-layer diagonal linear network (for the precise experimental setting, see \Cref{app:sec:experiments}). \textit{Left:} Training loss across time, the learning is piecewise constant. \textit{Middle:} The magnitudes of the coordinates are plotted across time: the process is piecewise constant. \textit{Right:} In the $\R^3$ space in which the iterates evolve (the remaining coordinates stay at $0$), the iterates jump from a saddle of the training loss to another. The jumping times $t_i$ as well as the visited saddles $\beta_i$ are entirely predicted by our theory. \label{fig1}
}
% \vspace*{-0.5cm} 
\end{figure}

\subsection{Informal statement of the main result}
% \hspace*{0.001em}

% \vspace*{1em}
In this paper, we provide a full description of the trajectory of gradient flow over $2$-layer diagonal linear networks in the limit of vanishing initialisation.  The main result is informally presented here.
\begin{restatable}[Main   result, informal]{thm}{mainthm}\label{thm:intro_thm}
In the regression setting and in the limit of vanishing initialisation, the trajectory of gradient flow over a $2$-layer diagonal linear network converges  towards a limiting process which is piecewise constant: the iterates successively jump from a saddle of the training loss to another, each visited saddle and jump time can recursively be computed through an algorithm~(Algorithm \ref{alg:algo}) reminiscent of the LARS algorithm for the Lasso.
\end{restatable}
%
%The \textbf{incremental learning} arises from the fact that the saddles have a very particular structure as they correspond to constrained minimisers of the training loss,  where the constraint is on the set of non-zero coordinates.
The incremental learning stems from the particular structure of the saddles as they correspond  to minimisers of the training loss with a constraint on the set of non-zero coordinates. The saddles therefore correspond to sparse vectors which partially fit the dataset. 
For simple datasets, a consequence of our main result is that \textbf{the limiting trajectory successively starts from the zero vector and successively learns the support of the sparse ground truth vector until reaching it}. \textbf{However, we make minimal assumptions on the data and our analysis also holds for complex datasets}. In that case, the successive active sets are not necessarily increasing in size and coordinates can deactivate as well as activate until reaching the minimum $\ell_1$-norm solution (see \Cref{fig1} (middle) for an example of a deactivating coordinate). 
The regression setting and the diagonal network architecture are introduced in \Cref{sec:setup}.
 \Cref{sec:construction} provides an intuitive construction of the limiting saddle-to-saddle dynamics and presents the algorithm that characterises it.
Our main result regarding the convergence of the iterates towards this process is presented in \Cref{sec:main} and further discussion is provided in \Cref{sec:discussion}.
%For more complex datasets, the successive active sets are not necessarily increasing in size and coordinates can deactivate as well as activate until reaching the minimum $\ell_1$-norm solution. The regression setting and the diagonal network architecture are introduced in \Cref{sec:setup}. Our main result exhibiting the limiting saddle-to-saddle dynamics is provided \Cref{sec:main}.

\section{Problem setup and leveraging the mirror structure}\label{sec:setup}
\subsection{Setup}\label{subsec:setup}
% \vspace{-0.5em}
%
\myparagraph{Linear regression.} 
We study a linear regression problem  with inputs $(x_1, \dots, x_n) \in (\R^d)^n$  and outputs $(y_1, \dots, y_n) \in \R^n$. We consider the typical quadratic loss:
\begin{equation}\label{eq:quadratic_loss}
    L(\beta)= \frac{1}{2n}\sum_{i=1}^n(\langle \beta, x_i \rangle-y_i)^2\,.
\end{equation}
%
% Though our results hold more generally, the reader can think of the dimension $d$ being much larger than the number of samples $n$. This setting is commonly referred to as the \textit{overparametrised setting} in which there exists an infinite number of minimisers $\beta^\star \in \R^d$ which interpolate the training data, \textit{i.e.} $y_i = \langle \beta^\star, x_i \rangle$ for all $1 \leq i \leq n$. These predictors are referred to as   \textit{interpolators}.
% The \textit{overparametrised} setting, where $d > n$ and where there exists an infinite number of minimisers $\beta^\star \in \R^d$ which attain zero training loss and which correspond to linear predictors that perfectly fit the training data, \textit{i.e.} $y_i = \langle \beta^\star, x_i \rangle$ for all $1 \leq i \leq n$. These predictors are referred to as   \textit{interpolators}. %The only assumption we need on the inputs
We make no assumption on the number of samples $n$ nor the dimension $d$. The only assumption we make on the data throughout the paper is that  the inputs  $(x_1, \dots, x_n)$  are in \textit{general position}.
In order to state this assumption, let $X\in\R^{n \times d}$ be the feature matrix whose $i^{th}$ row is $x_i$ and let $\tilde{x}_j \in \R^n$ be its $j^{th}$ column for $j \in [d]$. 
\begin{assump}[General position]\label{ass:general_pos}
For any $k \leq \min(n, d)$ and arbitrary signs $\sigma_1, \dots, \sigma_{k} \in \{-1, 1\}$, the affine span of any $k$ points $\sigma_1 \tilde{x}_{j_1}, \dots, \sigma_{k} \tilde{x}_{j_{k}}$ does not contain any element of the set $\{ \pm \tilde{x}_j, j \neq j_1, \dots, j_{k}  \}$.
\end{assump}
This assumption is slightly technical but is standard in the Lasso literature~\citep{tibshirani2013lasso}. Note that it is not restrictive as it is almost surely satisfied when the data is drawn from a continuous probability distribution~\citep[Lemma 4]{tibshirani2013lasso}. Letting $\mathcal{S} = \argmin_\beta L(\beta)$ denote the affine space of solutions, \Cref{ass:general_pos} ensures that the minimisation problem $\min_{\beta^\star \in \mathcal{S}} \Vert \beta^\star \Vert_1$ has a unique minimiser which we denote $\beta^\star_{\ell_1}$ and which corresponds to the minimum $\ell_1$-norm solution.
%This assumption is slightly technical but standard in the Lasso framework \citep{tibshirani2013lasso}, we discuss it in more details later. Note that this assumption is not restrictive as it holds almost surely when the data is drawn from a continuous probability distribution~\citep[Lemma 4]{tibshirani2013lasso}.

\myparagraph{2-layer diagonal linear network.}
In an effort to understand the training dynamics of neural networks, we consider a $2$-layer diagonal linear network which corresponds to writing the regression vector $\beta$ as
\begin{align}
\label{eq:DLN}
\beta_w = u\odot v \  \text{ where } \
w = (u,v) \in\R^{2d}\,.
\end{align}
This parametrisation can be interpreted as a simple neural network $x \mapsto \langle u, \sigma(\diag(v) x) \rangle$ where $u$ are the output weights, the diagonal matrix $\diag(v)$ represents the inner weights, and the activation $\sigma$ is the identity function. We refer to $w = (u, v) \in \R^{2d}$ as the \textit{weights} and to
$\beta \coloneqq u \odot v \in \R^d$ as the \textit{prediction parameter}. 
With the parametrisation~\eqref{eq:DLN}, the loss function $F$ over the parameters $w =(u,v)\in \R^{2d}$ is defined as:
\begin{align}
\label{eq:non-convex-F-loss}
F(w) \coloneqq L(u\odot v)= \frac{1}{2 n} \sum_{i=1}^n (\langle u \odot v, x_i \rangle - y_i)^2\,.
\end{align}
Though this reparametrisation is simple, the associated optimisation problem is non-convex and highly non-trivial training dynamics already occur.   The critical points of the function $F$ exhibit a very particular structure, as highlighted in the following proposition proven in \Cref{app:sec:resultsF}.
%The following proposition highlights that the fact that the critical points of $F$ enjoy a very particular structure.
\begin{restatable}{prop}{propcriticalpoints}
\label{prop:function_F}
All the critical points $w_c$  of $F$ which are not global minima,  \textit{i.e.,}  $\nabla F(w_c) =  \mathbf{0}$ and $F(w_c) > \min_w F(w)$, are necessarily saddle points (\emph{i.e.,} not local extrema). They map to parameters $\beta_c = u_c \odot v_c$ which satisfy $|\beta_c| \odot \nabla L(\beta_c) = \mathbf{0}$ and:
\begin{align}\label{eq:min_constrained_loss}
    \beta_c \in \argmin_{\beta[i]= 0 \ \mathrm{ for } \  i \notin \supp( \beta_c )} \ L(\beta)
\end{align}
where $\supp (\beta_c) = \{i \in [d], \beta_{c}[i] \neq 0 \}$ corresponds to the support of $\beta_c$.
%\begin{align}\label{eq:min_constrained_loss}
%    \beta_c \in \!\!\!\!\!\!\argmin_{\beta[i]= 0 \ \mathrm{ for } \  i \notin \supp( \beta_c )} \!\!\!\!\!\!  L(\beta), 
%    \ \text{where } \supp (\beta_c) = \{i \in [d], \beta_{c}[i] \neq 0 \} \text{ corresponds to the support of } \beta_c. 
%\end{align}
\end{restatable}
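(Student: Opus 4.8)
The plan is to compute $\nabla F$ explicitly and read off the structure. Writing $r(\beta) = \frac{1}{n}\sum_i (\langle \beta, x_i\rangle - y_i) x_i = \nabla L(\beta) \in \R^d$ for the residual-gradient, the chain rule applied to \eqref{eq:non-convex-F-loss} gives $\nabla_u F(w) = v \odot \nabla L(\beta_w)$ and $\nabla_v F(w) = u \odot \nabla L(\beta_w)$, where $\beta_w = u\odot v$. Hence $\nabla F(w_c) = \mathbf{0}$ is equivalent to the two coordinatewise conditions $v_c \odot \nabla L(\beta_c) = \mathbf{0}$ and $u_c \odot \nabla L(\beta_c) = \mathbf{0}$. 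Multiplying the first by $u_c$ (or the second by $v_c$) componentwise immediately yields $\beta_c \odot \nabla L(\beta_c) = \mathbf{0}$, and since $\beta_c = u_c\odot v_c$ has the same sign pattern / support as both factors, we also get $|\beta_c| \odot \nabla L(\beta_c) = \mathbf{0}$. In particular, for every $i \in \supp(\beta_c)$ we have $\nabla L(\beta_c)[i] = 0$, which is exactly the first-order optimality condition for the restricted convex problem $\min_{\beta[i]=0,\ i\notin\supp(\beta_c)} L(\beta)$; since $L$ is convex, this stationarity is sufficient, giving \eqref{eq:min_constrained_loss}.

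Next I would establish the dichotomy "global minimum or saddle", i.e. that a non-globally-minimal critical point cannot be a local extremum. It cannot be a local max of $F$ since $F$ is a nonnegative quadratic-type composition that is unbounded above along, e.g., scaling $u \to tu$ on a coordinate outside $\supp(\beta_c)$ whenever $\nabla L(\beta_c)$ is nonzero there — more simply, $F$ has no local maxima at all because along generic rays $F \to \infty$. So the real content is ruling out local minima. Suppose $w_c$ is a critical point with $F(w_c) > \min F$; then $\beta_c$ is not a global minimiser of $L$, so there exists a descent direction $\delta \in \R^d$ for $L$ at $\beta_c$, i.e. $\langle \nabla L(\beta_c), \delta \rangle < 0$. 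The obstacle is that $\delta$ must be realised as an infinitesimal perturbation of $\beta$ through the parametrisation $u\odot v$; on coordinates where $\beta_c[i]=0$, one of $u_c[i], v_c[i]$ is zero and first-order moves of $\beta[i]$ may be blocked. The resolution: by the optimality \eqref{eq:min_constrained_loss}, any such descent direction $\delta$ must be supported (at least partly) on $\mathrm{supp}(\beta_c)^c$; but on a coordinate $i \notin \supp(\beta_c)$ with $\nabla L(\beta_c)[i] = c \neq 0$, consider perturbing $(u_c[i], v_c[i]) = (0,0)$ (or the degenerate case where only one is zero) to $(\epsilon, -\mathrm{sign}(c)\,\epsilon)$: then $\beta[i]$ moves to $-\mathrm{sign}(c)\epsilon^2$ and the loss changes by $-|c|\epsilon^2 + O(\epsilon^4) < 0$ for small $\epsilon$. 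This exhibits points arbitrarily close to $w_c$ with strictly smaller $F$, while $w_c$ being critical with $F(w_c) > \min F$ also has nearby points with equal $F$ along the flat directions (e.g. the rescaling $u\to tu$, $v\to v/t$), so $w_c$ is neither a local min nor local max. The one case needing care is when $\nabla L(\beta_c)[i]=0$ for all $i\notin\supp(\beta_c)$: then I claim $\beta_c$ is in fact a global minimiser of $L$ (since $\nabla L(\beta_c)=0$ everywhere), contradicting $F(w_c)>\min F$; so this case does not arise.

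The main obstacle, as flagged above, is handling the coordinates where the parametrisation is degenerate — i.e. making the second-order / $\epsilon^2$ perturbation argument rigorous and checking it genuinely produces a strict decrease without being cancelled by cross terms, and simultaneously exhibiting the "equal value" directions so as to conclude "saddle" rather than merely "not a local min". I would organise the perturbation analysis by splitting $[d]$ into $\supp(\beta_c)$ (where $L$ is already stationary, so moves there are second order in the loss and can be ignored) and its complement (where the quadratic-in-$\epsilon$ gain drives the decrease), and bound all cross terms by $O(\epsilon^3)$ or $O(\epsilon^4)$. A cleaner alternative worth trying is to directly compute the Hessian $\nabla^2 F(w_c)$ and exhibit an explicit negative eigenvalue using the block structure coming from the $(u,v)$ split together with $\nabla L(\beta_c)[i]\neq 0$ for some $i$; since $\nabla^2 F(w_c)$ restricted to the $(u[i],v[i])$ block is roughly $\begin{pmatrix} 0 & \nabla L(\beta_c)[i] \\ \nabla L(\beta_c)[i] & 0\end{pmatrix}$ plus lower-order terms when $\beta_c[i]=0$, it is indefinite, giving a negative direction directly and avoiding the bookkeeping of higher-order expansions.
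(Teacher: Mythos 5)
Your derivation of $|\beta_c|\odot\nabla L(\beta_c)=\mathbf{0}$ and of \eqref{eq:min_constrained_loss} from the block gradient $\nabla_u F=v\odot\nabla L(\beta)$, $\nabla_v F=u\odot\nabla L(\beta)$ is exactly the paper's argument. For the ``saddle, not extremum'' half you take a genuinely different route: the paper transfers the question to the convex loss $L$ via the explicit bijection $g:(u,v)\mapsto(u\odot v,\sqrt{(u^2-v^2)/2})$, showing that a local extremum of $F$ at a non-global critical point would produce a local extremum of $L$ at a non-minimiser, which is impossible for a convex function; you instead work directly in weight space and exhibit explicit second-order perturbations. Your construction is sound: criticality forces $u_c[i]=v_c[i]=0$ on any coordinate with $\nabla L(\beta_c)[i]=c\neq 0$ (so the ``degenerate case where only one is zero'' never arises), and perturbing to $(\epsilon,\mp\mathrm{sign}(c)\epsilon)$ moves only $\beta[i]$ to $\mp\mathrm{sign}(c)\epsilon^2$, changing $L$ by $\mp|c|\epsilon^2+O(\epsilon^4)$ with no cross terms to control. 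One caveat: your stated reasons for ruling out a local \emph{maximum} (``$F\to\infty$ along generic rays'', ``nearby points with equal $F$'') do not by themselves do so; what does is the sign-flipped perturbation $(\epsilon,+\mathrm{sign}(c)\epsilon)$, which strictly increases $F$ — equivalently, the indefiniteness of the $(u[i],v[i])$ Hessian block $\bigl(\begin{smallmatrix}0&c\\ c&0\end{smallmatrix}\bigr)$ that you note at the end. With that made explicit your argument is complete. Comparatively, your approach is more local and constructive (it identifies the escape directions, which is in the spirit of the heteroclinic orbits used later), while the paper's bijection argument is shorter and dispatches maxima and non-global minima uniformly without any Hessian or higher-order bookkeeping.
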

The optimisation problem in \Cref{eq:min_constrained_loss} states that the saddle points of the train loss  $F$ correspond to \textbf{sparse vectors that minimise the loss function $L$ over its non-zero coordinates}. This property already shows that the saddle points possess interesting properties from a learning perspective. In the following we loosely use the term of `saddle' to refer to points $\beta_c \in \R^d$ solution of \Cref{eq:min_constrained_loss} \textbf{that are not saddles of the convex loss function} $L$. We adopt this terminology because they correspond to points $w_c \in \R^{2d}$ that are indeed saddles of the non-convex loss  $F$.
%In words, \Cref{eq:min_constrained_loss} means that the saddle points of the train loss $F$ correspond to \textbf{sparse vectors which minimise the loss $L$ over its non-zero coordinates.}   This already reveals that the saddles correspond to vectors which have interesting properties from a learning point of view. Also note that we abuse of the terminology of ``saddle'' when speaking of a point $\beta_c \in \R^d$, indeed such a point \textbf{is not} an actual saddle of the loss $L$ (which is convex!). However, we adopt it because it corresponds to a point $w_c \in \R^{2d}$ which is indeed a saddle of the non-convex loss $F$.

\myparagraph{Gradient Flow and necessity of ``accelerating'' time.} 
We minimise the loss $F$ using gradient flow:
\begin{equation}\label{eq:SGD_recursion}
 \dd w_t  =  - \nabla F(w_t) \dd t\,,
\end{equation}
initialised at $u_0 = \sqrt{2}  \alpha \mathbf{1} \in \R_{>0}^d$ with $\alpha > 0$,  and $v_0 = \mathbf{0} \in \R^d$. This initialisation results in $\beta_0 =\mathbf{0} \in \R^d $ independently of the chosen weight initialisation scale $\alpha$.
We denote $\beta^\alpha_t \coloneqq u_t^\alpha\odot v_t^\alpha$ the prediction iterates generated from the gradient flow  to highlight its dependency on the initialisation scale $\alpha$\footnote{We point out that the trajectory of $\beta_t^\alpha$ exactly matches that of another common parametrisation $\beta_w \coloneqq \frac{1}{2}(w_+^2 - w_-^2)$, with initialisation $w_{+, 0} = w_{-, 0} = \alpha \mathbf{1}$.}. The origin $ \mathbf{0} \in \R^{2d}$ is a critical point of the function $F$ and taking the initialisation $\alpha \to 0$ therefore arbitrarily slows down the dynamics.
In fact, it can be easily shown for any fixed time $t$,  that $(u_t^\alpha, v_t^\alpha) \to \mathbf{0}$ as $\alpha \to 0$, indicating that the iterates are stuck at the origin.
Therefore if we restrict ourselves to a finite time analysis, there is no hope of exhibiting the observed saddle-to-saddle behaviour. 
To do so, we must find an appropriate bijection $\tilde{t}_\alpha$ in  $\R_{\geq 0}$ which ``accelerates'' time  (\textit{i.e.} $\tilde{t}_\alpha(t) \underset{\alpha \to 0}{\longrightarrow} + \infty$ for all $t$) and consider the accelerated iterates $\beta^\alpha_{\tilde{t}_\alpha(t)}$ which can escape the saddles. Finding this bijection becomes very natural once the mirror structure is unveiled.
%
%\vspace*{0.2em}
\subsection{Leveraging the mirror flow structure}
While the iterates $(w^\alpha_t)_t$ follow a gradient flow on the non-convex loss $F$, it is shown in \cite{azulay2021implicit} that the iterates $\tbeta^\alpha_t$ follow a mirror flow on the convex loss $L$ with potential $\tphi_\alpha$ and initialisation $\tbeta^\alpha_{t=0} = \mathbf{0}$:
\begin{align}\label{eq:ode_betaa}\dd \nabla \tphi_\al( \tbeta_t^\al) = -  \nabla L(\tbeta_t^\al) \dd t,
\end{align}
where $\tphi_\balpha$ is the hyperbolic entropy function \citep{pmlr-v117-ghai20a} defined as: 
\begin{equation}\label{eq:hypentropy}
    \tphi_\balpha(\beta) = \frac{1}{2} \sum_{i=1}^d \Big ( \beta_i \mathrm{arcsinh}( \frac{\beta_i}{\alpha^2} ) \! -\! \sqrt{\beta_i^2 + \alpha^4} +  \alpha^2 \Big ).
\end{equation}
Unveiling the mirror flow structure enables to leverage  convex optimisation tools to prove 
%the decrease of the loss as well as show
 convergence of the iterates to a global minimiser $\beta^\star_\alpha$ as well as a simple proof of the implicit regularisation problem it solves. As shown by \citet{woodworth2020kernel}, in the overparametrised setting where $d > n$ and where there exists an infinite number of global minima, the limit   $\beta_{\balpha}^\star$ is the solution of the problem:
\begin{equation}\label{eq:implicit_opt}
    \beta_\balpha^\star = \argmin_{ y_i = \langle x_i, \beta \rangle, \forall i } \  \tphi_\balpha(\beta).
\end{equation}
Furthermore, a simple function analysis shows that $\phi_{\alpha}$ behaves as a rescaled $\ell_1$-norm as $\alpha$ goes to $0$, meaning that the recovered solution $\beta^\star_\alpha$ converges to the minimum $\ell_1$-norm solution $\beta^\star_{\ell_1} = \argmin_{y_i = \langle x_i, \beta \rangle} \Vert \beta \Vert_1$ as $\alpha$ goes to $0$ (see \citep{wind2023implicit} for a precise rate).
To bring to light the saddle-to-saddle dynamics which occurs as we take the initialisation to $0$, we make substantial use of the nice mirror structure from \Cref{eq:ode_betaa}. 
%\myparagraph{Taking $\alpha \to 0$.} 

% \myparagraph{Taking the initialisation scale to $0$, mirror point of view.} 
% %
% As mentioned before, the iterates $\tbeta^\alpha_t$ stay stuck at $\mathbf{0}$ when $\alpha \to 0$. The mirror flow point of view sheds a new light on this observation.
% %
% Differentiating the left side of equation \ref{eq:ode_betaa} and computing the Hessian of $\phi_\alpha$,  the iterates $\tbeta^\alpha_t$ are shown to follow $\dot{\tbeta}^\alpha_t = - \sqrt{  (\tbeta^\alpha_t)^2 + \alpha^4} \odot \nabla L(\tbeta^\alpha_t)$.
% %
% Taking $\alpha \to 0$, this flow is informally equivalent to $\dot{\tbeta}_t = - |\tbeta_t| \odot \nabla L(\tbeta_t)$ for which $\mathbf{0}$ is a fixed point as well as all vectors $\beta_c = \beta_{w_c}$ where $w_c$ is a saddle point of $F$ (following \Cref{prop:function_F}).

\myparagraph{Appropriate time rescaling.} To understand the limiting dynamics of $\beta_t^\alpha$, it is natural to consider the limit $\alpha \to 0$ in \Cref{eq:ode_betaa}. However, the potential $\phi_\alpha$ is such that $\phi_\alpha(\beta) \sim \ln(1 / \alpha) \Vert \beta \Vert_1$ for small $\alpha$ and therefore degenerates as  $\alpha \to 0$. Similarly, for $\beta \neq \mathbf{0}$, $\Vert \nabla \phia(\beta) \Vert \to \infty$ as ${\alpha \to 0}$. The formulation from \Cref{eq:ode_betaa} is thus not appropriate to take the limit $\alpha\to0$. We can nonetheless obtain a meaningful limit by considering the opportune time acceleration $\tilde{t}_\alpha(t) =  \ln(1 / \alpha) \cdot t$ and looking at the accelerated iterates
\begin{align}\label{eq:rescaled_iterates}
    \ttbeta^\alpha_t \coloneqq \tbeta^\alpha_{\tilde{t}_\alpha(t)} = \beta^\alpha_{\ln(1 / \alpha) t}.
\end{align}
Indeed, a simple chain rule leads to the ``accelerated mirror flow'':
$\dd \nabla \tphi_\alpha(\ttbeta^\alpha_t)  = -  \ln \big (\frac{1}{\alpha} \big) \nabla L(\ttbeta^\alpha_t) \dd t$. The accelerated iterates $(\ttbeta^\alpha_t)_t$ follow a mirror descent with a rescaled potential: 
\begin{align}\label{eq:process_integral_form}
      \dd \nabla \ttphi_\al( \ttbeta_t^\al) = - \nabla L(\ttbeta_t^\alpha) \dd t, \qquad  \mathrm{where} \qquad  \tilde{\phi}_\alpha \coloneqq \frac{1}{\ln(1 /\al)}  \cdot \tphi_\al,
\end{align}
with $\tilde{\beta}_{t=0} = \mathbf{0}$ and where $\phi_\alpha$ is defined \Cref{eq:hypentropy}.
Our choice of time acceleration ensures that the rescaled potential $\Tilde{\phi}_\alpha$ is  non-degenerate as the initialisation goes to $0$ since   $\Tilde{\phi}_\alpha(\beta) \underset{\alpha \to 0}{\sim} \Vert \beta \Vert_1$.
%Our time acceleration is precisely chosen so that  $\Tilde{\phi}_\alpha(\beta) \sim \Vert \beta \Vert_1$ as $\alpha \to 0$, the rescaled potential $\Tilde{\phi}_\alpha$ is therefore non-degenerate as the initialisation goes to $0$.

%\section{Intuitive construction of the limiting dynamics and of the visited saddles\label{sec:construction}}
\section{Intuitive construction of the limiting flow and saddle-to-saddle algorithm\label{sec:construction}}
% \section{Intuitive construction of the limiting dynamics and saddle-to-saddle algorithm\label{sec:construction}}
%
In this section, we aim to give a comprehensible construction of the limiting flow. We therefore choose to provide intuition over pure rigor, and defer the full and rigorous proof to the \Cref{app:sec:proof}. 
% \nf{and the convergence result to the next section}
%The technical crux of our analysis is to demonstrate the convergence of the iterates $\Tilde{\beta}^\alpha$ towards a piecewise constant process which we denote $\tilde{\beta}^\circ$. In this section, we assume the convergence of the flow $\ttbeta^\alpha$ towards $\tilde{\beta}^\circ_t$.
The technical crux of our analysis is to demonstrate the existence of a piecewise constant limiting process towards which the iterates $\Tilde{\beta}^\alpha$ converge to. The convergence result is deferred to the following \Cref{sec:main}. \textbf{In this section we assume this convergence and refer to this piecewise constant limiting process as  $(\tilde{\beta}^\circ_t)_t$}.
Our goal is then to determine the jump times $(t_1, \dots, t_p)$ as well as the saddles $(\beta_0, \dots, \beta_p)$ which fully define this process. 
% The algorithm for computing these quantities is provided at the end of the section.
%We see that these quantities can be computed through an algorithm defined at the end of the section.

%
To do so, it is natural to examine the limiting equation obtained when taking the limit $\alpha \to 0$ in \Cref{eq:process_integral_form}. We first turn to its integral form which writes:
\begin{align}\label{eq:process_integral_form2}
      - \int_0^t \nabla L(\ttbeta_s^\alpha) \dd s =\nabla \ttphi_\al( \ttbeta_t^\al).
\end{align}
%
%The uniform boundedness of the functions $\tilde{\beta}^\alpha$ (see ref in the appendix) along with 

Provided the convergence of the flow $\tilde{\beta}^\alpha$ towards $\tilde{\beta}^\circ$, the left hand side of the previous equation converges to $- \int_0^t \nabla L(\ttbeta^\circ_s) \dd s$. For the right hand side, recall that $\ttphi_\alpha(\beta) \overset{\alpha \to 0}{\sim} \Vert \beta \Vert_1$,  it is therefore natural to expect the right hand side of \Cref{eq:process_integral_form2} to converge towards an element of $\partial \Vert \ttbeta^\circ_t \Vert_1$, where we recall the definition of the subderivative of the $\ell_1$-norm as:
\begin{align*}
 \sg{\tilde{\beta}} = 
       \ \{1\} \ \  \text{if} \ \ \   \tilde{\beta} > 0, 
      \quad  \ \{-1\} \ \ \text{if} \ \   \tilde{\beta} < 0,
      \quad  \ [-1, 1] \ \  \text{if} \ \  \tilde{\beta} = 0 .
\end{align*}
% We start by providing a few comments concerning \Cref{eq:limit_process}.
The arising key equation which must satisfy the limiting process $\ttbeta^\circ$ is then, for all $t \geq 0$:
\begin{align}\label{eq:key_equation}
    - \int_0^t \nabla L(\ttbeta^\circ_s) \dd s \in \partial \Vert \ttbeta^\circ_t \Vert_1.
\end{align}
We show that \textbf{this equation uniquely determines the piecewise constant process}  $\ttbeta^\circ$ by imposing  the number of jumps $p$, the jump times as well as the saddles which are visited between the jumps. 
%\paragraph{Construction of the process $\tilde{\beta}^\circ$.}
Indeed the relation described in \cref{eq:key_equation} provides $4$   restrictive properties that enable to construct $\ttbeta^\circ$. %These properties can be easily observed from the equation itself. 
% These properties can be readily observed from \cref{eq:key_equation}.
%The relation in \cref{eq:key_equation} directly leads to key properties which turn crucial in the  the construction of the limiting process and which can be directly seen from \cref{eq:key_equation}. 
To state them, let $s_t = - \int_0^t \nabla L({\ttbeta}^\circ_s) \dd s$ and notice that it is continuous and piecewise linear since $\ttbeta^\circ$ is piecewise constant.
%we define $s_t = - \int_0^t \nabla L({\ttbeta}^\circ_s) \dd s$ and note that it is a continuous function of time. 
%The following $4$ key  properties naturally follow from \Cref{eq:key_equation}. 
For each coordinate $i \in [d]$, it holds that:

\makeatletter
\renewcommand{\p@enumi}{K}
\makeatother

\begin{center}
\begin{inparaenum}[\bfseries (K1)]
 \hspace*{-1em}   \item \label{prop1} $s_t[i] \in [-1, 1]$ \qquad  \qquad \quad  \ \ \  \ \ \
   \item \label{prop2} 
$s_t[i] = 1 \Rightarrow \ttbeta^\circ_t[i] \geq 0$ and  $s_t[i] = -1 \Rightarrow \ttbeta^\circ_t[i] \leq 0$ 
\\   
\hspace*{-1em} \item \label{prop3} $s_t[i] \in (-1, 1) \Rightarrow \ttbeta^\circ_t[i] = 0$ \ \ \ 
 \item \label{prop4}
$\ttbeta^\circ_t[i] > 0 \Rightarrow  s_t[i] = 1$ and $\ttbeta^\circ_t[i] < 0 \Rightarrow  s_t[i] = -1$
\end{inparaenum}
\end{center}
To understand how these conditions lead to the algorithm which determines the jump times and the visited saddles, we present a $2$-dimensional example for which we can walk through each step. The general case then naturally follows from this simple example.

\subsection{Construction of the saddle-to-saddle algorithm with an illustrative $2d$ example.} 

Let us consider $n=d=2$ and data matrix $X \in \R^{2 \times 2}$ such that $X^\top X = ((1, 0.2), (0.2, -0.2))$. We consider $\beta^\star = (-0.2, 2) \in \R^2$ and outputs $y = X \beta^\star$. This setting is such that the loss $L$ has $\beta^\star$ as its unique minimum and $L(\beta^*) = 0$. Furthermore the non-convex loss $F$ has $3$ saddles which map to: $\beta_{c, 0} \coloneqq (0,  0)  = \argmin_{\beta_i = 0, \forall i} \ L(\beta)$, $\beta_{c, 1} \coloneqq (0.2, 0) = \argmin_{\beta[2] = 0} \ L(\beta)$ and $\beta_{c, 2} \coloneqq (0, 1.6) = \argmin_{\beta[1] = 0} \ L(\beta)$. The loss function $L$ is sketched in \Cref{fig:figure2} (\textit{Left}). Notice that by the definition of $\beta_{c, 1}$ and $\beta_{c, 2}$, the gradients of the loss at these points are orthogonal to the axis they belong to. When running gradient flow with a small initialisation over our diagonal linear network, we obtain the plots illustrated \Cref{fig:figure2} (\textit{Middle and Right}). We observe three jumps: the iterates jump from the saddle at the origin to  $\beta_{c, 1}$ at time $t_1$, then to $\beta_{c, 2}$ at time $t_2$ and finally to the global minimum $\beta^\star$at time $t_3$. 

\begin{figure}[h!]
    \centering
    \includegraphics[width=\textwidth]{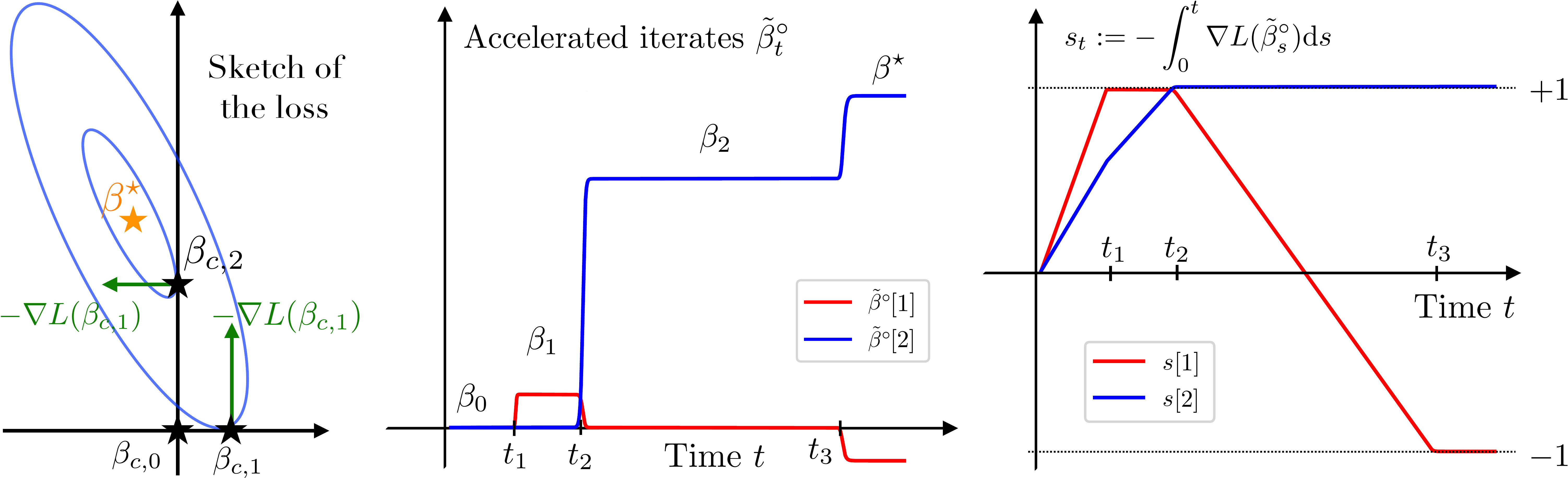}
    \caption{ 
    \textit{Left}: Sketch of the $2d$ loss.
    \textit{Middle and right}: Outputs of gradient flow with small initialisation scale: the iterates are piecewise constant and $s_t$ is piecewise linear across time. We refer to the main text for further details. }
    \label{fig:figure2}
\end{figure}

Let us show how \Cref{eq:key_equation} enables us to theoretically recover this trajectory. A simple observation which we will use several times below is that for any $t' > t$ such that $\ttbeta^\circ$ is constant equal to $\beta$ over the time interval $(t, t')$, the definition of $s$ enables to write that $s_{t'} = s_t - (t' - t) \cdot \nabla L(\beta)$.

\paragraph{Zeroth saddle:} The iterates are at the saddle at the origin: $\ttbeta^\circ_t = \beta_0 \coloneqq \beta_{c, 0}$ and therefore $s_t = - t \cdot \nabla L(\beta_0)$. Our key equation \Cref{eq:key_equation} is verified since  $s_t = - t \cdot \nabla L(\beta_0) \in \partial \Vert \beta_0 \Vert_1 = [-1, 1]^d$. However the iterates cannot stay at the origin after time $t_1 \coloneqq 1 / \Vert \nabla L(\beta_0) \Vert_\infty$ which corresponds to the time at which the first coordinate of $s_t$ hits $+ 1$: $s_{t_1}[1] = 1$. If the iterates stayed at the origin after $t_1$,~(\ref{prop1}) for $i=1$ would be violated. The iterates must hence jump.

\paragraph{First saddle:} The iterates can only jump to a point different from the origin which maintains \Cref{eq:key_equation} valid. We denote this point as $\beta_1$.  Notice that:
\begin{itemize}[leftmargin=0.7cm]
    \item  $s_{t_1}[2] = - t_1 \cdot \nabla L(\beta_0)[2] \in (-1, 1)$ and since $s_t$ is continuous, we must have $\beta_1[2] = 0$~(\ref{prop3})
    \item $s_{t_1}[1] = 1$ and hence for $t \geq t_1$, $s_t[1] = 1 - (t - t_1) \nabla L(\beta_1)[1]$. We cannot have $\nabla L(\beta_1)[1] < 0$~(\ref{prop1}), and neither $\nabla L(\beta_1)[1] > 0$ since otherwise $s_t[1] \in (-1, 1)$ and $\beta_1 = \mathbf{0}$~(\ref{prop3})
\end{itemize}
% , \textit{(ii)}~because $\beta_1[1] \neq 0$ (the iterates must jump) and $s_{t_1}[1] = 1$, we must have $\beta_1[1] > 0$ (\ref{prop2}) and therefore $1 = s_t[1]$ (\ref{prop4}) \textit{(iii)}~by the definition of $s_t$, $ s_t[1] = s_{t_1}[1] + (t - t_1) \nabla L(\beta_1)[1]$ which leads to $1 = 1 + (t - t_1) \nabla L(\beta_1)[1]$, therefore $\nabla L(\beta_1)[1] = 0$.
The two conditions $\beta_1[2] = 0$ and $\nabla L(\beta_1)[1] = 0$ \textbf{uniquely defines $\beta_1$ as equal to $\beta_{c, 1}$}. 
% We then have $s_t = - (t_1 \nabla L(\beta_{0}) + (t - t_1) \nabla L(\beta_{1}))$.
We now want to know if and when the iterates jump again. We saw that $s_t[1]$ remains at the value $+1$. However since $\beta_{1}$ is not a global minimum, $\nabla L(\beta_{1})[2] \neq 0$ and $s_t[2]$ hits $+1$ at time $t_2$ defined such that $-(t_1 \nabla L(\beta_{0}) + (t_2 - t_1) \nabla L(\beta_{1}))[2] = 1$. The iterates must jump otherwise~(\ref{prop1}) would break.

\paragraph{The iterates cannot jump to $\beta^\star$ yet!} As the second coordinate of the iterates can activate, one could expect the iterates to be able to jump to the global minimum. However note that $s_t$ is a continuous function and that $s_{t_2}$ is equal to the vector $(1, 1)$. If the iterates jumped to the global minimum,  then the first coordinate of the iterates would change sign from $+0.2$ to $-0.2$. Due to~(\ref{prop4}) this would lead $s_t$ jumping from $+1$ to $-1$, violating its continuity.

\paragraph{Second saddle:} We denote as $\beta_2$ the point to which the iterates jump. $s_{t_2}$ is now equal to the vector $(1, 1)$ and therefore \textit{(i)}~$\beta_2 \geq 0$ (coordinate-wise) from~(\ref{prop2} and \ref{prop3}) and the continuity of $s$. Since $s_t = s_{t_2} - (t - t_2) \nabla L(\beta_2)$, we must also have: \textit{(ii)}~$\nabla L(\beta_2) \geq 0$ from~(\ref{prop1}) \textit{(iii)}~for $i \in \{1, 2\}$, if $\beta_2[i] \neq 0$ then $\nabla L(\beta_2)[i] = 0$ from~(\ref{prop4}). %One can then notice that 
The three conditions \textit{(i)}, \textit{(ii)} and \textit{(iii)} precisely correspond to the optimality conditions of the following problem:
\[\argmin_{{\beta[1] \geq 0, \beta[2] \geq 0}} L(\beta).\]
The unique minimiser of this problem  is  $\beta_{c, 2}$, hence $\beta_2 = \beta_{c, 2}$, which means that the first coordinate deactivates. Similar to before,~(\ref{prop1}) is valid until the time $t_3$ at which the first coordinate of $s_{t} = s_{t_2} - (t - t_2) \nabla L(\beta_2)$ reaches $-1$ due to the fact that $\nabla L(\beta_2)[1] > 0$.

\paragraph{Global minimum:} We follow the exact same reasoning as for the second saddle. We now have $s_{t_3}$ equal to the vector $(-1, 1)$ and the iterates must jump to a point $\beta_3$ such that \textit{(i)}~$\beta_3[1] \leq 0$, $\beta_3[2] \geq 0$~(\ref{prop2} and \ref{prop3}), \textit{(ii)}~$\nabla L(\beta_3)[1] \leq 0$, $\nabla L(\beta_3)[2] \geq 0$~(\ref{prop1}), \textit{(iii)}~for $i \in \{1, 2\}$, if $\beta_3[i] \neq 0$ then $\nabla L(\beta_3)[i] = 0$~(\ref{prop4}). Again, these are the optimality conditions of the following problem:
\[\argmin_{{\beta[1] \leq 0, \beta[2] \geq 0}} L(\beta).\]
$\beta^\star$ is the unique minimiser of this problem and $\beta_3 = \beta^\star$. For $t \geq t_3$ we have $s_t = s_{t_3}$ and \Cref{eq:key_equation} is satisfied for all following times: the iterates do not have to move anymore.

\subsection{Presentation of the full saddle-to-saddle algorithm}

We can now provide the full algorithm (Algorithm~\ref{alg:algo})  which computes the jump times $(t_1, \dots, t_p)$ and saddles $(\beta_0 = \mathbf{0}, \beta_1, \dots, \beta_p)$ as the values and vectors such that the associated piecewise constant process satisfies \Cref{eq:key_equation} for all $t$. This algorithm therefore defines our limiting process $\ttbeta^\circ$.

\SetKwComment{Comment}{/* }{ */}
\RestyleAlgo{ruled}
\begin{algorithm}[h!]
\caption{Successive saddles and jump times of $\lim_{\alpha \to 0} \tilde{\beta}^\alpha$\label{alg:algo}}\
% \vspace{0.2em}
% \KwData{$X \in \R^{n \times d}$, $y \in \R^n$}
 \textbf{Initialise:} $(t, \beta, s) \gets (0, \mathbf{0}, \mathbf{0})$\;  %\vspace{0.2em}
\While{$\nabla L(\beta) \neq \mathbf{0}$}{
\vspace*{0.4em}
 $\mathcal{A} \gets \{j \in [d], \nabla L(\beta)(j) \neq 0\}$ \\ 
 \vspace*{0.4em}
 $\Delta \gets \inf \big \{ \delta > 0  \text{ s.t. } \exists i \in  \mathcal{A} , \   s(i) - \delta \nabla L(\beta)(i) = \pm 1  \big \}$ \\ 
 \vspace*{0.4em}
$(t, \ s) \gets (t + \Delta, \ s - \Delta \cdot \nabla L(\beta)) $ \\ 
\vspace*{0.2em}
$\beta \gets \argmin \ L(\beta) \mathrm{\ \ where \ \ } \beta \in \Big \{\beta \in \R^d \ \ \mathrm{s.t.}   \substack{ \beta_i \geq 0 \ \mathrm{if} \  s(i) = +1 \\ \beta_i \leq 0 \ \mathrm{if} \  s(i) = -1 \\ \quad \ \beta_i = 0 \  \mathrm{if} \  s(i) \in (-1, 1) } \ \Big \} $  %\vspace{-0.3em}
}
\textbf{Output:} Successive values of $\beta$ and $t$
\end{algorithm}

\paragraph{Algorithm~\ref{alg:algo} in words.}
The algorithm is a concise representation of the steps we followed in the previous section to construct $\ttbeta^\circ$. We explain each step in words below. Starting from $k=0$, assume we enter the loop number $k$ at the saddle $\beta_k$ computed in the previous loop: 
\begin{itemize} %[leftmargin=3.5mm]
    \item The set $\mathcal{A}_k$ contains the set of coordinates "which are unstable": by having a non-zero derivative, the loss could be decreased by moving along each one of these coordinates and one of these coordinates will have to activate.  
    \item The time gap $\Delta_k$ corresponds to the time spent at the saddle $\beta_k$. It is computed as being the elapsed time just before (\ref{prop1}) breaks if the coordinates do not jump.
    \item We update $t_{k+1} = t_k + \Delta_k$ and $s_{k+1} = s_k - \Delta_k \nabla L(\beta_k)$: $t_{k+1}$ corresponds to the time at which the iterates leave the saddle $\beta_k$ and $s_{k+1}$ constrains the signs of the next saddle $\beta_{k+1}$
    \item The solution $\beta_{k+1}$ of the constrained minimisation problem is the saddle to which the flow jumps to at time $t_{k+1}$. The optimality conditions of this problem are such that \Cref{eq:key_equation} is maintained for $t \geq t_{k+1}$.
\end{itemize} 

\paragraph{Various comments on Algorithm \ref{alg:algo}.} First we point out that any solution $\beta_c$ of the constrained minimisation problem which appears in Algorithm \ref{alg:algo} also satisfies $\beta_c = \argmin_{\beta[i] = 0 \ \mathrm{ for } \  i \notin \supp( \beta_c )} \ L(\beta)$ as in \Cref{eq:min_constrained_loss}: the algorithm hence indeed outputs saddles as expected. %However 
Up until now we have never checked whether the algorithm's constrained minimisation problem has a unique minimum. %, yet 
This is crucial otherwise the assignment step would be ill-defined. Showing the uniqueness is non-trivial and is guaranteed thanks to the general position \Cref{ass:general_pos} on the data (see \Cref{lemma:alg_defined} in \Cref{{app:sec:proofprop2}}). In this same proposition, we also show that
the algorithm terminates in at most $\min \big (2^d, \sum_{k=0}^{n} {d \choose k} \big )$ steps, that the loss strictly decreases at each step and that the final output $\beta_p$ is the minimum $\ell_1$-norm solution. These last two properties are expected given the fact that the algorithm arises as being the limit process of $\ttbeta^\alpha$ which follows the mirror flow~\cref{eq:process_integral_form}. 
% However it is also interesting to analyse Algorithm \ref{alg:algo} as a standalone algorithm.

\myparagraph{Links with the LARS algorithm for the Lasso.}
Recall that the Lasso problem ~\citep{tibshirani1996regression,chen2001atomic} is formulated as:
% \vspace*{-0.5em}
% Notice that \Cref{eq:key_process} closely resembles to the Lasso optimality condition: 
\begin{align}\label{eq:LASSO}
  \beta^\star_\lambda= \argmin_{\beta \in \R^d} \ L(\beta) + \lambda \Vert \beta \Vert_1. 
\end{align}
\vspace*{-1em}

The optimality condition of \Cref{eq:LASSO} writes $- \nabla L(\beta^\star_\lambda) \in \lambda \sg{\beta_\lambda^\star}$. Now notice the similarity with \Cref{eq:key_equation}: the two would be equivalent with $\lambda = 1 / t$ if the integration on the left hand side of \Cref{eq:key_equation} did not average over the whole trajectory but only on the final iterate, in which case $- \int_0^t \nabla L(\ttbeta^\circ_t) \dd s = - t \cdot \nabla L(\ttbeta^\circ_t)$.  Though the difference is small, the trajectories of our limiting trajectory $\ttbeta^\circ$ and the lasso path $(\beta^\star_\lambda)_\lambda$ are quite different: one has jumps, whereas the other is continuous. Nonetheless, the construction of Algorithm \ref{alg:algo} shares many similarities with that of the Least Angle Regression (LARS) algorithm~\citep{efron2004least} (originally named the Homotopy algorithm~\citep{osborne2000new})
% (see, e.g.,~\citep{tibshirani2013lasso} and references therein) 
which is used to compute the Lasso path. A notable difference however is the fact that each step of our algorithm depends on the whole trajectory through the vector~$s$, whereas the LARS algorithm can be started from any point on the path.

\subsection{Outputs of the algorithm under a RIP and gap assumption on the data.} 
Unlike previous results on incremental learning, complex behaviours can occur when the feature matrix is ill designed: several coordinates can activate and deactivate at the same time (see \Cref{app:sec:experiments} for various cases). However, if the feature matrix satisfies the $2r$-restricted isometry property (RIP)~\citep{candestao} and there exists an $r$-sparse solution $\betas$, the visited saddles can be easily approximated using Algorithm~\ref{alg:algo}. 
We provide the precise characterisation below.
% (we refer to \Cref{app:subsec:RIP} for the precise characterisation). 

\begin{restatable}{setting*}{RIPsetting}
  \textit{\textbf{(RIP)}}  Assume that there exists an $r$-sparse vector $\betas$ such that  $y_i = \langle x_i, \betas \rangle$. Furthermore we assume that the feature matrix $X \in \R^{n, d}$ satisfies the $2r$-restricted isometry property with constant $\tvarepsilon < \sqrt{2}-1 < 1/2$: i.e. for all submatrix $X_s$ where we extract any $s \leq 2r$ columns of $X$, the matrix $X_s^\top X_s / n$ of size $s \times s$ has all its eigenvalues in the interval $[1 - \tvarepsilon, 1 + \tvarepsilon]$. \textit{\textbf{(Gap assumption)}} Furthermore we assume that the $r$-sparse vector $\betas$ has coordinates which have a ``sufficient gap'. W.l.o.g we write $\betas = (\betas_1, \dots, \betas_r, 0, \dots, 0)$ with $|\betas_1| \geq \dots \geq |\betas_r|>0$ and  we define  $\lambda \coloneqq \min_{i \in [r]} (|\betas_i| - |\betas_{i+1}|) \geq 0$  which corresponds to the smallest gap between the entries of $|\betas|$. We assume that $5 \tvarepsilon \Vert \betas \Vert_2 < \lambda / 2$  and we let $\varepsilon \coloneqq 5 \tvarepsilon$.
\end{restatable}

A classic result from compressed sensing (see \citet[Theorem 1.2]{candes2008restricted}) is that the $2r$-restricted isometry property with constant $\sqrt{2}-1$ ensures that the minimum $\ell_0$-minimisation problem has a unique $r$-sparse solution which is $\beta^\star$. 
% Furthermore it ensures that the minimum $\ell_1$-norm solution is unique and is equal to $\betas$. 
This means that Algorithm \ref{alg:algo} will have $\beta^\star$ as final output and the following proposition shows that we can precisely characterise each of its outputs when the data satisfies the previous assumptions.

\begin{restatable}{prop}{propRIP}\label{app:cor:RIP}
Under the restricted isometry property and the gap assumption stated right above,  Algorithm \ref{alg:algo} terminates in $r$-loops and outputs:
\begin{align*}
   &\beta_1 = ( \beta_1[1] , 0, \dots, 0) \ \ &&\text{with} \ \  \ \ \ \  \  \beta_1[1]  \in \big[\betas_1 - \varepsilon \Vert \beta^\star \Vert, \betas_2 + \varepsilon \Vert \beta^\star \Vert \big ] \\ 
      &\beta_2 =  
     (\beta_2[1], \beta_2[2], 0, \dots, 0) \ \ &&\text{with} \ \ \begin{cases}\ \beta_2[1] \in \big [\betas_1 - \varepsilon \Vert \beta^\star \Vert, \betas_1 + \varepsilon \Vert \beta^\star \Vert  \big ] \\ \ \beta_2[2] \in [\betas_2 - \varepsilon \Vert \beta^\star \Vert, \betas_2 + \varepsilon \Vert \beta^\star \Vert]  \end{cases} \\
    &\vdots  \\ 
    &\beta_{r-1} = (\beta_{r-1}[1], \dots, \beta_{r-1}[r-1], 0, \dots, 0) 
    \ \ &&\text{with} \ \  \beta_{r-1}[i]  \in \ \big[\betas_i - \varepsilon \Vert \beta^\star \Vert, \betas_i + \varepsilon \Vert \beta^\star \Vert \ \big] \\ 
     &\beta_r = \betas = (\betas_1, \dots, \betas_r, 0, \dots, 0),
\end{align*}
at times $t_1, \dots, t_r$ such that 
$t_i \in \Big[\frac{1}{|\betas_i| + \varepsilon \Vert \beta^\star \Vert}, \frac{1}{|\betas_i| - \varepsilon \Vert \beta^\star \Vert} \Big]$ and where $\Vert \cdot \Vert$ denotes the $\ell_2$ norm.
\end{restatable}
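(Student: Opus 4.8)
The plan is to run an induction on the loop index $k=0,1,\dots,r$ of Algorithm~\ref{alg:algo}, with the hypothesis that the vector $\beta_k$ entering loop $k+1$ is the minimiser $\hat\beta_{\{1,\dots,k\}}$ of $L$ over vectors supported on the top-$k$ coordinates $\{1,\dots,k\}$ (with $\beta_0=\mathbf 0$), that $s_k$ is saturated ($|s_k(i)|=1$) exactly on $\{1,\dots,k\}$ with signs matching $\betas$ there, and that $t_k=\sum_{j<k}\Delta_j$. The only tool is the standard RIP perturbation calculus applied to $\nabla L(\beta)=\tfrac1n X^\top X(\beta-\betas)$: for a vector $v$ supported on $T$ with $|T|+1\le 2r$, the RIP constant $\tvarepsilon$ yields $|(\tfrac1n X^\top X v)(i)-v(i)|\le\tvarepsilon\Vert v\Vert$ for $i$ (with $\{i\}\cup T$ still of size $\le 2r$) and $|(\tfrac1n X^\top X v)(i)|\le\tvarepsilon\Vert v\Vert$ for $i\notin T$. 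I would first record the consequence that for any $S$ with $|S|\le r$ the restricted least-squares minimiser obeys $\Vert\hat\beta_S-\betas|_S\Vert\le\tfrac{\tvarepsilon}{1-\tvarepsilon}\Vert\betas|_{S^c}\Vert\le 2\tvarepsilon\Vert\betas\Vert$, and in particular $\hat\beta_{\{1,\dots,r\}}=\betas$ exactly.

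\textbf{The induction step.} The key bookkeeping identity is $s_k=\tfrac1n X^\top X\gamma_k$ with $\gamma_k:=\sum_{j=0}^{k-1}\Delta_j(\betas-\beta_j)$; since a coordinate $i\ge k$ never entered the support of an earlier saddle, $\gamma_k(i)=\betas_i\,t_k$ \emph{exactly} for $i\ge k$ (hence $\gamma_k(i)=0$ for $i>r$), while for $i<k$ one has $s_k(i)=\pm1$ \emph{exactly}, such a coordinate having stopped moving from loop $i$ on (the gradient of a restricted least-squares solution vanishes on its support). Combining with the RIP bound and $\Vert\gamma_k\Vert\le(1+2\tvarepsilon)t_k\Vert\betas\Vert$ gives $s_k(i)=\betas_i t_k+O(\tvarepsilon\Vert\gamma_k\Vert)$ for $k\le i\le r$ and $|s_k(i)|\le\tvarepsilon\Vert\gamma_k\Vert$ for $i>r$. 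During loop $k+1$ the velocity of coordinate $i$ is $-\nabla L(\beta_k)(i)$, which vanishes for $i\le k$ and equals $\betas_i$ up to $\tvarepsilon\Vert\beta_k-\betas\Vert\le\tvarepsilon(1+2\tvarepsilon)\Vert\betas\Vert$ for $i>k$; hence $s_t(i)\approx\betas_i\,t$ for $i>k$, so the first saturation happens at coordinate $k+1$, at time $t_{k+1}\approx 1/|\betas_{k+1}|$ and with sign $\mathrm{sign}(\betas_{k+1})$. The gap hypothesis, through $\tvarepsilon\Vert\betas\Vert<\lambda/10\le|\betas_r|/10\le|\betas_i|/10$ (using $\lambda\le|\betas_r|$), guarantees that this strictly precedes the saturation of coordinate $k+2$ and of every off-support coordinate, and that the resulting constrained minimiser has support exactly $\{1,\dots,k+1\}$ with signs matching $\betas$, so its sign constraints are inactive and it equals $\hat\beta_{\{1,\dots,k+1\}}$. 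This closes the induction.

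\textbf{Termination and the quantitative estimates.} The base case $k=0$ is immediate. At $k=r$ we get $\beta_r=\hat\beta_{\{1,\dots,r\}}=\betas$ exactly, whence $\nabla L(\betas)=\mathbf 0$ and the while-loop exits: exactly $r$ loops. The coordinate bounds on $\beta_k$ follow from $|\beta_k(i)-\betas_i|\le 2\tvarepsilon\Vert\betas\Vert\le\varepsilon\Vert\betas\Vert$. For the jump times, at the end of loop $i$ we have $|s_i(i)|=1$ and $\gamma_i(i)=\betas_i t_i$ exactly, so $\big|\,1-|\betas_i|\,t_i\,\big|\le\tvarepsilon\Vert\gamma_i\Vert\le\tvarepsilon(1+2\tvarepsilon)t_i\Vert\betas\Vert\le\varepsilon\,t_i\Vert\betas\Vert$; rearranging gives $t_i\in\big[\tfrac1{|\betas_i|+\varepsilon\Vert\betas\Vert},\tfrac1{|\betas_i|-\varepsilon\Vert\betas\Vert}\big]$, the denominators being positive since $\varepsilon\Vert\betas\Vert<\lambda/2\le|\betas_i|/2$.

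\textbf{Main obstacle.} The delicate part is not a single estimate but the uniform control of the RIP errors across all $r$ loops: the vector $\gamma_k$ governing $s_k$ is built out of the \emph{inexact} saddles $\beta_j$, so one must verify that the accumulated errors stay below the thresholds (essentially $\lambda/2$) on which the ordering and sign arguments rest, simultaneously for every $k\le r$, and in particular that no off-support coordinate ever reaches $\pm1$ — both facts reducing precisely to $5\tvarepsilon\Vert\betas\Vert<\lambda/2$ together with $\lambda\le|\betas_r|$. A secondary point requiring care throughout is checking that the sign constraints in the algorithm's constrained minimisation are always inactive, so that its output genuinely coincides with the unconstrained restricted least-squares solution.
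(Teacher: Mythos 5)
Your proposal is correct and follows essentially the same route as the paper's proof: an induction over the loops in which the saddle is identified with the restricted least-squares solution via the RIP perturbation bound, the gap assumption forces the sign constraints to be inactive and coordinate $k+1$ to saturate first, and the jump times follow from $s_{t_i}(i)=\pm1$. Your bookkeeping of $s_k$ through the exact identity $s_k=\tfrac1n X^\top X\gamma_k$ is a mildly cleaner packaging of the paper's inductive interval hypothesis on $s_{t_k}[i]$, and your constants ($2\tvarepsilon$ in place of $5\tvarepsilon$) comfortably fit inside the stated $\varepsilon=5\tvarepsilon$ intervals, so nothing essential is missing.
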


Informally, this means that the algorithm terminates in exactly $r$ loops and outputs jump times and saddles roughly equal to $t_i = 1 / \vert \beta^\star_i \vert$ and $\beta_i = (\betas_1, \cdots, \betas_i, 0, \dots, 0)$.
Therefore, in simple settings, the support of the sparse vector is learnt a coordinate at a time, without any deactivations. We refer to \Cref{app:subsec:RIP} for the proof.

%\myparagraph{Outputs of the algorithm under a RIP assumption on the data.} 
%Unlike all previous results on incremental learning, complex behaviours can occur when the feature matrix is ill designed: several coordinates can activate and deactivate at the same time (see \Cref{app:sec:experiments} for various cases). However, if the feature matrix satisfies the $r$-restricted isometry property (RIP)~\citep{candestao} and there exists an $r$-sparse solution $\betas$,  the visited saddles can be easily approximated using Algorithm \ref{alg:algo}. Specifically, writing $\betas= (\betas_1, \dots, \betas_r, 0, \dots, 0)$ such that w.l.o.g $\vert \betas_1 \vert \geq \dots \geq \vert \betas_r \vert > 0$, then under properties depending on the RIP constant and $\betas$, the algorithm terminates in exactly $r$ loops, and outputs jump times and saddles roughly equal to $t_i = 1 / \vert \beta^*_i \vert$ and $\beta_i = (\betas_1, \cdots, \betas_i, 0, \dots, 0)$. Therefore, in simple settings, the support of the sparse vector is learned a coordinate at a time, without any deactivations, resulting in incremental learning. \nf{see details in App}

% is not satisfied. Several coordinates can activate and deactivate at the same time (in \Cref{fig1} (Right), $4$ coordinates deactivate at the same time, in \Cref{app:fig} (Right), two coordinates activate at the same time). Moreover, there is no monotonicity of the total number of non-zero coordinates and each coordinate can increase, decrease and change sign many times. 
\section{Convergence of the iterates towards the process defined by Algorithm~\ref{alg:algo}}\label{sec:main}
% \nf{centrer le titre sur la convergence/ reformuler}

We are now fully equipped to state our main result which formalises the convergence of the accelerated iterates towards the limiting process $\ttbeta^\circ$ which we built in the previous section.
% The limiting flow progresses by jumping from one saddle to another, where the saddles are given by Algorithm~\ref{alg:algo}. 

% Our main result formalises the informal theorem given in the introduction.  The limiting flow progresses by jumping from one saddle to another, where each saddle corresponds to the minimiser of the loss function under constraints on the set of active coordinates and their sign. The procedure for identifying and characterising these saddles as well as the jump times is given by our standaloneAlgorithm~\ref{alg:algo}.
%
\begin{restatable}{thm}{thmone}\label{thm:main_theorem}
Let the saddles $(\beta_0 = \mathbf{0}, \beta_1, \dots, \beta_{p-1}, \beta_p=\beta^\star_{\ell_1})$ and jump times $(t_0 = 0, t_1, \dots, t_{p})$ be the outputs of Algorithm \ref{alg:algo} and let  $(\Tilde{\beta}^\circ_t)_t$ be the piecewise constant process defined as follows:
\begin{align*}
 &\textbf{(Saddles)}   &&\tilde{\beta}^\circ_t = \beta_k \qquad \qquad \text{    for   }  t \in (t_{k}, t_{k+1}) \text{   and   } 0 \leq k \leq p,  \ \ t_{p+1} = + \infty.
\end{align*}
The accelerated flow $(\tilde{\beta}^\alpha_t)_{t}$ defined in \Cref{eq:rescaled_iterates} uniformly converges towards the limiting process $(\Tilde{\beta}^\circ_t)_t$ on any compact subset of $\R_{\geq 0} \backslash \{ t_1, \dots, t_p \}$. 
\end{restatable}
% Let us comment our result.
\myparagraph{Convergence result.} We recall that from a technical point of view, showing the existence of a limiting process $\lim_{\alpha \to 0} \ \ttbeta^\alpha$ is the toughest part.
\Cref{thm:main_theorem} provides this existence as well as the uniform convergence of the accelerated iterates towards $\ttbeta^\circ$ over all closed intervals of $\mathbb{R}$ which do not contain the jump times. We highlight that this is the strongest type of convergence we could expect and a uniform convergence over all intervals of the form $[0, T]$ is impossible given that the limiting process $\ttbeta^\circ$ is discontinuous. In \Cref{cor:graph_convergence}, we give an even stronger result by showing a graph convergence of the iterates which takes into account the path followed between the jumps. We also point out that we can easily show the same type of convergence for the accelerated weights $\tilde{w}^\alpha_t \coloneqq w^\alpha_{\tilde{t}^\alpha(t)}$. Indeed, using the bijective mapping which links the weights $w_t$ and the predictors $\beta_t$ (see \Cref{app:lemma:bijection} in \Cref{app:sec:mirrorflow}), we immediately get that the accelerated weights $(\tilde{u}^\alpha, \tilde{v}^\alpha)$ uniformly converge towards the limiting process $(\sqrt{|\tilde{\beta}^\circ|}, \sign(\tilde{\beta}^\circ) \sqrt{|\tilde{\beta}^\circ|})$ on any compact subset of $\R_{\geq 0} \backslash \{ t_1, \dots, t_p \}$.

% . Indeed there is a bijective mapping between the mirror flow $\tilde{\beta}^\alpha_t$ and the gradient flow as shown in 

\myparagraph{Estimates for the non-accelerated iterates $\beta^\alpha_t$.} We point out that our result provides no speed of convergence of $\tilde{\beta}^\alpha$ towards $\tilde{\beta}^\circ$. 
We believe that a non-asymptotic result is challenging and leave it as future work. Note that we experimentally notice that the convergence rate quickly degrades after each saddle. Nonetheless,  we can still write for the non-accelerated iterates that $\beta^\alpha_t = \tilde{\beta}^\alpha_{t / \ln(1 / \alpha)} \sim \tilde{\beta}^\circ_{t / \ln(1 / \alpha)} $ as $\alpha \to 0$. Hence, for $\alpha$ small enough  %we can say that
the iterates $\beta^\alpha_t$ are roughly equal to $0$ until time $t_1 \cdot \ln(1/\alpha)$ and %that 
the minimum $\ell_1$-norm interpolator is reached at time $t_p \cdot \ln(1 / \alpha)$. \textbf{Such a precise estimate of the global convergence time is rather remarkable} and goes beyond classical Lyapunov analysises which only leads to $L(\beta^\alpha_t) \lesssim \ln(1/\alpha) / t$ (see \Cref{app:prop:convergence_rates} in \Cref{app:sec:mirrorflow}).

\myparagraph{Natural extensions of our setting.} More general initialisations can easily be dealt with. For instance, initialisations of the form $u_{t=0} = \alpha \mathbf{u_0} \in \R^d$ lead to the exact same result as it is shown in~\cite{woodworth2020kernel} (Discussion after Theorem 1) that the associated mirror still converges to the $\ell_1$-norm. Initialisations of the form $[u_{t = 0}]_i = \alpha^{k_i}$, where $k_i > 0$, lead to the associated potential converging towards a weighted $\ell_1$-norm and one should modify Algorithm \ref{alg:algo} by accordingly weighting $\nabla L(\beta)$ in the algorithm. Also, deeper linear architectures of the form $\beta_w = w_+^D - w_-^D$ as in \cite{woodworth2020kernel} do not change our result as the associated mirror still converges towards the $\ell_1$-norm.  Though we only consider the square loss in the paper, we believe that all our results should hold for any loss of the type $L(\beta) = \sum_{i=1}^n \ell(y_i, \langle x_i, \beta \rangle)$ where for all $y \in \R$,  $\ell(y, \cdot)$ is strictly convex with a unique minimiser at $y$. In fact, the only property which cannot directly be adapted from our results is showing the uniform boundedness of the iterates (see discussion before \Cref{app:prop:bounded_iterates} in \Cref{app:sec:mirrorflow}).

% \nf{to be removed or shorten and merge with previous remark}
% \myparagraph{Justification of the saddle-to-saddle terminology.}  %It is clear that the points $(\beta_k)_{k \in [p]}$ 
% We emphasise that the $\beta_k$'s which we refer to as ``saddles'' \textbf{are not} saddles of the convex loss $L$. We use this terminology as the mapping $(u_k,\!v_k) \!=\! (\sqrt{\vert \beta_k \vert}, \mathrm{sign}(\beta_k) \sqrt{\vert \beta_k \vert} )$ results in actual saddle points of the non-convex loss function~$F$.

\subsection{High level sketch of proof of $\tilde{\beta}^\alpha \to \ttbeta^\circ$ which leverages an arc-length parametrisation}\label{sec:rescaled}
In this section, we give the high level ideas concerning the proof of the convergence $\tilde{\beta}^\alpha \to \ttbeta^\circ$ given in \Cref{thm:main_theorem}. A full and detailed proof can be found in \Cref{app:sec:proof}. The main difficulty stems from the non-continuity of  the limit process  $\ttbeta^\circ$. To circumvent this difficulty, a clever trick which we borrow to \cite{efendiev2006rate,mielke2009modeling} is to ``slow-down'' time when the jumps occur by considering \textbf{an arc-length parametrisation of the path}. We consider the $\R_{\geq 0}$ arclength bijection $\tau^\alpha$ and leverage it to define the `appropriately slowed down' iterates $\hat{\beta}^\alpha_\tau$ as:
\begin{align*}
\quad \hba_\tau = \tilde{\beta}^\alpha_{\hta(\tau)} \qquad \text{ where } \qquad     \hta_\tau=   (\taua)^{-1}(\tau) \ \  \text{ and }  \ \   \taua(t) = t +\int_0^t \|  \dot{\ttbeta}^\alpha_s \| \dd s. 
\end{align*}
% Therefore we cannot expect uniform convergence of $\ttbeta^\alpha$ towards $\ttbeta$ as $\alpha \to 0$. In addition, $\ttbeta^\circ$ does not provide any insights into the path followed between the jumps. 
This time reparametrisation has the fortunate but crucial property of leading to $\dhta(\tau)  +\| \dhba_\tau\| = 1 $ by a simple chain rule, which means that the speed of $ (\hba_\tau)_\tau$\textbf{ is uniformly upperbounded by $1$ independently of $\alpha$}. This behaviour is in stark contrast with the process $(\ttbeta^\alpha_t)_t$ which has a speed which explodes at the jumps. This change of time now allows  us to use Arzelà-Ascoli's theorem to extract a subsequence which uniformly converges to a limiting process which we denote $\hb$. Importantly, $\hb$ enables to keep track of the path followed between the jumps as we show that its trajectory has two regimes: 

% The following theorem shows that the arc-length rescaled process converges uniformly.
% \begin{restatable}{thm}{thm2}
% \label{th:rescaled_time}
% Let $T > 0$. For all subsequences defined in \Cref{prop:rescaled_limit}, there exist times $0 = \tau'_0 < \tau_1 < \tau_1' < \dots < \tau_{p} < \tau_p' < \tau_{p+1} = + \infty$ such that the the iterates $(\hb_\tau^{\alpha_k})_\tau$ converge uniformly on $[0, T]$ to the following limit trajectory : 
% \begin{align*}
%  &\textbf{(``Saddle'')}   &&\hat{\beta}_\tau = \beta_k &&&\text{for   }  \tau \in [\tau_{k}', \tau_{k+1}] \text{   where   } 0 \leq k \leq p  \\
%  &\textbf{(Orbit)}    &&\dot{\hb}_\tau = -\frac{|\hb_\tau|\odot \nabla L (\hb_\tau) }{\| |\hb_\tau|\odot \nabla L (\hb_\tau)\|} &&&\text{for   }  \tau \in [\tau_{k+1}, \tau_{k+1}']  \text{   where   } 0 \leq k \leq p-1
% \end{align*}
% where the saddles $(\beta_0 = 0 ,\beta_1, \dots, \beta_p = \beta^\star_{\ell_1})$ are constructed in Algorithm \ref{alg:algo}. Also, the loss $(L(\hat{\beta}_\tau))_\tau$ is constant on the saddles and strictly decreasing on the orbits. Finally, independently of the chosen subsequence,  for $k\in[p]$ we have $\hat{t}_{\tau_k} = \hat{t}_{\tau'_k} = t_k$ where the times $(t_k)_{k \in [p]}$ are defined through Algorithm \ref{alg:algo}.
% \end{restatable}

% \vspace{-1em}
\begin{figure}[htbp]
  % \centering
  \begin{minipage}{0.715\textwidth}
  \vspace{-1em}
  \begin{align*}
 \textbf{Saddles:} \ \    \hb_\tau = \beta_k \qquad 
 \textbf{Connections:} \ \   \dot{\hb}_\tau = -\frac{|\hb_\tau|\odot \nabla L (\hb_\tau) }{\| |\hb_\tau|\odot \nabla L (\hb_\tau)\|}.
\end{align*} 
The process $\hat{\beta}$ is illustrated on the right: the red curves correspond to the paths which the iterates follow during the jumps. These paths
 are called \textit{heteroclinic orbits} in the dynamical systems literature~\citep{krupa1997robust,ashwin1999heteroclinic}.  
   To prove \Cref{thm:main_theorem}, we can map back the convergence of $\hb^\alpha$ to show that of $\ttbeta^\alpha$ . Moreover from the convergence $\hb^\alpha \to \hb$ we get a more complete picture of the limiting dynamics of $\ttbeta^\alpha$ as it 
    naturally implies the convergence of the graph of the iterates  $(\ttbeta^\alpha_t)_t$ converges towards that of $(\hat{\beta}_\tau)_\tau$. The graph convergence result is formalised in this last proposition.
  \end{minipage}\hfill
  \begin{minipage}{0.24\textwidth}
  \vspace{1em}
  % \raggedleft
    \centering
    \includegraphics[width=\textwidth]{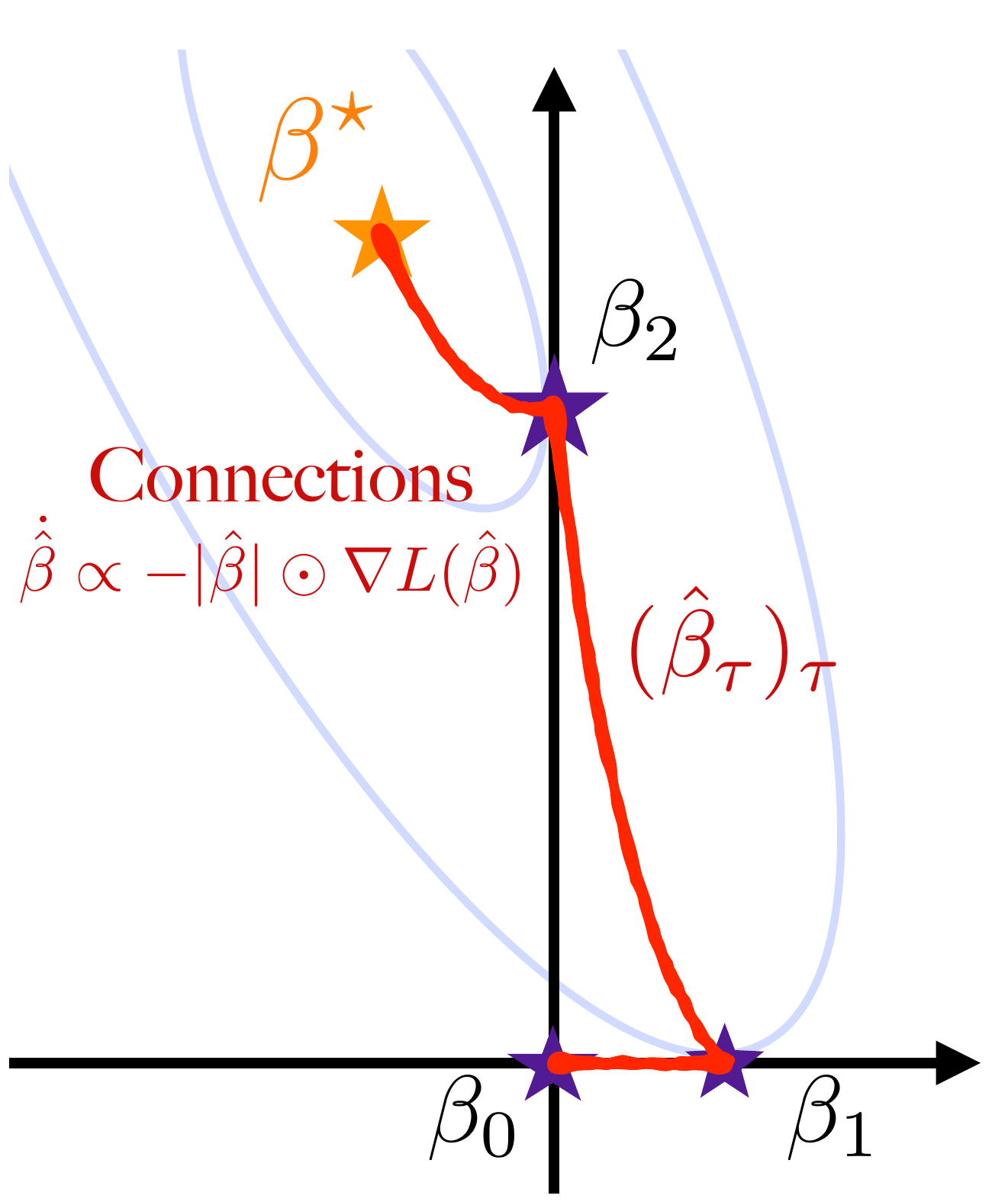}
\hfill
  \end{minipage}
\end{figure}

% \vspace{-0.3em}

\begin{restatable}{prop}{corgraph}\label{cor:graph_convergence}
For all $T > t_{p}$, the graph of the iterates $(\tilde{\beta}^\alpha_t)_{t \leq T}$  converges to that of $(\hat{\beta}_\tau)_\tau:$
$$\qquad \qquad \qquad \qquad \mathrm{dist}
% \footnote{$\mathrm{dist}(\cdot, \cdot)$ corresponds to the Hausdorff distance between 2 sets.}
( \{ \tilde{\beta}^\alpha_t\}_{t \leq T}   , \{ \hb_\tau\}_{\tau \geq 0}) \   \underset{\alpha \to 0}{\longrightarrow} \ 0 \qquad \quad \text{(Hausdorff \ distance)}  $$
\end{restatable}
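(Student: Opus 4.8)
\textbf{Proof plan for Proposition \ref{cor:graph_convergence}.}

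The plan is to deduce the Hausdorff-distance convergence of the graphs from the uniform convergence $\hat{\beta}^\alpha \to \hat{\beta}$ on compact sets (established on the way to proving \Cref{thm:main_theorem}), together with the fact that the graphs of $(\tilde{\beta}^\alpha_t)_t$ and $(\hat{\beta}^\alpha_\tau)_\tau$ coincide as subsets of $\R^d$. The key observation is that the arc-length reparametrisation $\tau^\alpha$ is a bijection of $\R_{\geq 0}$, so $\{\tilde{\beta}^\alpha_t\}_{t \leq T} = \{\hat{\beta}^\alpha_\tau\}_{\tau \leq \tau^\alpha(T)}$; taking images under a reparametrisation does not change the image set. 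Thus the only real content is that the $\tau$-horizon $\tau^\alpha(T)$ stays in a controlled range and that on this range $\hat{\beta}^\alpha$ is uniformly close to $\hat{\beta}$.

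First I would record that, because $\dot{\hat{t}}^\alpha(\tau) + \|\dot{\hat{\beta}}^\alpha_\tau\| = 1$, the map $\tau \mapsto \hat{t}^\alpha(\tau)$ is $1$-Lipschitz and nondecreasing, and the total arclength of the limiting path $\hat\beta$ up to time $T$ is finite: it equals $\ell_T := \sum_{k=0}^{p-1}(\text{length of the }k\text{-th heteroclinic connection})$ plus the time $T$ itself, which is finite since each connection is a bounded orbit (the $\hat\beta$ iterates are uniformly bounded by \Cref{app:prop:bounded_iterates}-type arguments already used). One then shows $\tau^\alpha(T) \to \hat\tau(T) := T + \ell_T$ as $\alpha \to 0$ (this follows from the convergence $\dot{\tilde\beta}^\alpha \to \dot{\hat\beta}$ in an appropriate integrated sense, already needed for \Cref{thm:main_theorem}), hence $\tau^\alpha(T)$ is bounded, say by some $\Lambda$ independent of $\alpha$ for $\alpha$ small. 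On the compact interval $[0,\Lambda]$, the uniform convergence $\hat{\beta}^\alpha \to \hat{\beta}$ gives $\sup_{\tau \leq \Lambda}\|\hat\beta^\alpha_\tau - \hat\beta_\tau\| \to 0$.

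Next I would unwind the Hausdorff distance. For one inclusion: any point of $\{\tilde{\beta}^\alpha_t\}_{t \leq T} = \{\hat{\beta}^\alpha_\tau\}_{\tau \leq \tau^\alpha(T)} \subseteq \{\hat{\beta}^\alpha_\tau\}_{\tau \leq \Lambda}$ is within $\sup_{\tau\leq\Lambda}\|\hat\beta^\alpha_\tau - \hat\beta_\tau\|$ of the point $\hat\beta_\tau \in \{\hat\beta_\tau\}_{\tau\geq0}$, which tends to $0$ uniformly. For the reverse inclusion: fix $\hat\beta_\sigma$ with $\sigma \geq 0$; if $\sigma \leq \hat\tau(T)$ then since $\tau^\alpha(T) \to \hat\tau(T)$, for $\alpha$ small we have $\sigma \leq \tau^\alpha(T) + o(1)$, and using that $\hat\beta$ is $1$-Lipschitz in $\tau$ we can approximate $\hat\beta_\sigma$ by $\hat\beta_{\sigma'}$ with $\sigma' = \min(\sigma, \tau^\alpha(T)) \leq \tau^\alpha(T)$, losing only $|\sigma - \sigma'| = o(1)$, and then $\hat\beta_{\sigma'}$ is within the uniform-convergence error of $\hat\beta^\alpha_{\sigma'} \in \{\tilde\beta^\alpha_t\}_{t\leq T}$; if $\sigma > \hat\tau(T)$ then $\hat\beta_\sigma = \beta^\star_{\ell_1} = \hat\beta_{\hat\tau(T)}$ since the path is constant past the last jump, so it reduces to the previous case. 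Combining the two inclusions, both directed Hausdorff distances go to $0$, which is the claim.

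The main obstacle I anticipate is the convergence $\tau^\alpha(T) \to \hat\tau(T)$, i.e. controlling the total arclength of $\tilde\beta^\alpha$ up to the $t$-horizon $T$ and showing it converges to that of $\hat\beta$. The difficulty is that $\tau^\alpha(T) = T + \int_0^T \|\dot{\tilde\beta}^\alpha_s\|\,\dd s$ and the integrand blows up near the jump times $t_1,\dots,t_p$ as $\alpha \to 0$; one needs that the blow-up is exactly compensated by the path length of the heteroclinic orbits and that there is no "wasted" length (no oscillation, no overshoot). This is, however, precisely the information encoded in the two-regime description of $\hat\beta$ (Saddles / Connections) that underlies \Cref{thm:main_theorem}, so it should be available from the machinery developed there rather than requiring a genuinely new estimate; concretely, it follows from lower semicontinuity of length under uniform convergence (giving $\liminf \tau^\alpha(T) \geq \hat\tau(T)$) together with a matching upper bound obtained by bounding the arclength contributed in small neighbourhoods of each saddle using the explicit heteroclinic ODE $\dot{\hat\beta}_\tau = -|\hat\beta_\tau|\odot\nabla L(\hat\beta_\tau)/\||\hat\beta_\tau|\odot\nabla L(\hat\beta_\tau)\|$.
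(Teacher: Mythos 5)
Your overall strategy is the same as the paper's: identify $\{\tilde{\beta}^\alpha_t\}_{t\le T}$ with $\{\hat{\beta}^\alpha_\tau\}_{\tau\le\tau^\alpha(T)}$ via the arc-length bijection, bound both directed Hausdorff distances by $\sup_\tau\Vert\hat{\beta}^\alpha_\tau-\hat{\beta}_\tau\Vert$ over a compact $\tau$-range, and invoke the uniform convergence $\hat{\beta}^\alpha\to\hat{\beta}$ from the rescaled-time theorem. Your first direction is exactly the paper's: the only input is the crude bound $\tau^\alpha(T)\le T+C$ from the uniform path-length estimate (\Cref{app:prop:bounded_path_length}), so $\Lambda=T+C$ works and no sharp control of $\tau^\alpha(T)$ is needed there.

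The divergence is in the reverse inclusion, and there your plan as written has a gap: you route the argument through $\tau^\alpha(T)\to\hat\tau(T)$, i.e.\ convergence of arclengths, and you correctly flag that the upper half ($\limsup_\alpha\tau^\alpha(T)\le\hat\tau(T)$, ``no wasted length'') is the hard part. That estimate is not established anywhere in the paper — the limiting relations $\dot{\hat t}_\tau+\Vert\dot{\hat\beta}_\tau\Vert\le1$ and $\Vert\dot{\hat\beta}_\tau\Vert\le\liminf\Vert\dot{\hat\beta}^{\alpha_k}_\tau\Vert$ only go in the wrong direction, since weak-$*$ limits can lose speed — so as stated this step would not go through. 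Fortunately it is also unnecessary. The whole image $\{\hat\beta_\sigma\}_{\sigma\ge0}$ is already swept out by $\sigma\le\tau'_p$ (the limit path is constant, equal to $\beta^\star_{\ell_1}$, afterwards), and for each such $\sigma$ the natural witness in the graph of $\tilde\beta^\alpha$ is $\hat\beta^\alpha_\sigma=\tilde\beta^\alpha_{\hat t^\alpha(\sigma)}$; this lies in $\{\tilde\beta^\alpha_t\}_{t\le T}$ as soon as $\hat t^\alpha(\sigma)\le\hat t^\alpha(\tau'_p)\le T$, which holds for $\alpha$ small because $\hat t^{\alpha}(\tau'_p)\to t_p<T$ by the uniform convergence of $\hat t^\alpha$. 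This is the paper's argument: it uses only the convergence of $\hat t^\alpha$ at the single point $\tau'_p$, never the convergence of $\tau^\alpha(T)$ itself. If you replace your arclength-convergence step by this observation (equivalently, note that $\liminf_\alpha\tau^\alpha(T)\ge\tau'_p$ follows from monotonicity of $\tau^\alpha$ and $\hat t^\alpha(\tau'_p)\to t_p<T$), your proof closes with no new estimates.
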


% \color{blue}

\section{Further discussion and conclusion}
\label{sec:discussion}

\vspace{-1em}

% ~\citep{woodworth2020kernel,vaskevicius2019implicit,haochen2021shape,pesme2021implicit,even2023s}. For these networks, the scale of the initialisation determines the structure of the recovered solution. Large initialisations yield low $\ell_2$-norm solutions (referred to as either the neural tangent kernel regime~\citep{jacot2018neural}, or the lazy regime~\citep{chizat2019lazy}). Small initialisations result in low $\ell_1$-norm solutions (known as the rich regime~\citep{woodworth2020kernel}).
% % \nf{not sure if rich regime was in lenaic paper, maybe woodworth instead} 
 
%where the multiplication must be understood element-wise 

\myparagraph{Link between incremental learning and saddle-to-saddle dynamics.} 
%In all the papers mentioned in the introduction, the observed 
The incremental learning phenomenon and the saddle-to-saddle process  are often  complementary facets of the same idea and refer to the same phenomenon.
%are two sides of the same coin. %intertwined
%
%The papers in the introduction explore the interplay between the incremental learning phenomenon and the saddle-to-saddle process, which are complementary facets of the same idea."
%
Indeed for gradient flows $\dd w_t = - \nabla F(w_t) \dd t$, fixed points of the dynamics correspond to critical points of the loss.
%
%Stages where little learning progress is made and where the iterates hardly move necessarily translate to stages where the iterates are in the vicinity of a critical point of the loss. 
%
Stages with little progress in learning and minimal movement of the iterates necessarily correspond to the iterates being in the vicinity of a critical point of the loss.
It turns out that in many settings (linear networks~\citep{kawaguchi2016deep}, matrix sensing~\citep{bhojanapalli2016global,park2017non}),  critical points are necessarily saddle points of the loss (if not global minima)  and that they have a very particular structure (high sparsity, low rank, etc.). We finally note that an alternative approach to
realising saddle-to-saddle dynamics is through the perturbation of the gradient flow by a vanishing noise as studied in~\citep{Bakhtin2011noisy}.

%We finally note that saddle-to-saddle  dynamics can alternatively be realised through stochastic perturbation of the gradient flow~\citep{Bakhtin2011noisy}.

\myparagraph{Characterisation of the visited saddles.} 
A common belief is that the saddle-to-saddle trajectory can be found by successively computing the direction of most negative curvature of the loss (i.e. the eigenvector corresponding to the most negative eigenvalue) and following this direction until reaching the next saddle~\citep{jacot2021saddle}. However this statement cannot be accurate as it is inconsistent with our algorithm in our setting.
%However this cannot be true as it is does not match our algorithm (it however holds for the first coordinate). 
%
In fact, it can be shown that this algorithm would match the  orthogonal matching pursuit (OMP) algorithm~\citep{pati1993orthogonal,davis1997adaptive} which does not necessarily lead to the minimum $\ell_1$-norm interpolator. In~\citep{berthier2022incremental}, which is the closest to our work and the first to prove convergence of the iterates towards a piece-wise constant process, the successive saddles are entirely characterised and connected to the Lasso regularisation path in the underparameterised setting. 
%However the analysis requires restrictive assumptions on the data and is less intuitive.
Recently, \cite{boix2023transformers} extended the diagonal linear network setting to diagonal parametrisations of the form $f_{u \odot v}$, but at the cost of stronger assumptions on the trajectory.

% Diagonal linear networks are simplified neural networks
% %which have received a lot of attention lately 
% which have seen a surge of interest recently. Indeed despite their simplicity, they reveal training characteristics observed in much more complex architectures such as saddle-to-saddle dynamics~\citep{berthier2022incremental} but also the effect of various parameters on the recovered solution: the effect of the initialisation scale~\citep{woodworth2020kernel,vaskevicius2019implicit,zhao2022high}, that of noise~\citep{haochen2021shape,pesme2021implicit} or that of the stepsize~\citep{even2023s}. 

\myparagraph{Adaptive Inverse Scale Space Method.} Following the submission of our paper, we were informed that Algorithm~\ref{alg:algo} had already been proposed and analysed in the compressed sensing literature. Indeed it exactly corresponds to the Adaptive Inverse Scale Space Method (aISS) proposed in \cite{burger2013adaptive}. The motivations behind its study are extremely different from ours and originate from the study of Bregman iteration~\citep{cai2010split,osher2005iterative,yin2008bregman} which is an efficient method for solving $\ell_1$ related minimisation problems. The so-called inverse scale space flow which corresponds to \Cref{eq:key_equation} in our paper can be seen as the continuous version of Bregman iteration. As in our paper, \cite{burger2013adaptive} show that this equation can be solved through an iterative algorithm. We refer to \citep[Section 2]{yang2013dual} for further details. However we did not find any results in this literature concerning the uniqueness of the constrained minimisation problem due to \Cref{ass:general_pos}, nor on the maximum number of iterations, the behaviour under RIP assumptions and the maximum number of active coordinates.

\myparagraph{Subdifferential equations and rate-independent systems.}
As in \Cref{eq:key_equation}, subdifferential inclusions of the form $\nabla L (\beta_t)   \in \frac{\dd}{\dd t}  \partial h(\beta_t)$ for non-differential functions $h$ have been studied by \citet{attouch2004singular} but for strongly convex functions $h$. In this case, the solutions are continuous and do not exhibit jumps.  On another hand, \cite{efendiev2006rate,mielke2009modeling,mielke2012variational} consider so-called \textit{rate-independent systems} of the form $\partial_q E(t, q_t) \in \partial h(\dot{q}_t)$ for $1$-homogeneous \textit{dissipation} potentials $h$. Examples of such systems are ubiquitous in mechanics and appear in problems related to friction, crack propagation, elastoplasticity and ferromagnetism to name a few \citep[][Ch. 6 for a survey]{mielke2005evolution}. As in our case, the main difficulty with such processes is the possible appearance of jumps when the energy $E$ is non-convex. 

% To deal with these jumps, a popular approach is to add a small viscosity regularisation, and to consider a convenient arc-length parametrisation which enables to keep track of the transition path between each jump and to analyse the solutions which arise when taking the viscosity term to zero~\citep{efendiev2006rate,mielke2009modeling,mielke2012variational}. 

% The concepts of incremental learning and of saddle-to-saddle process refer therefore to the  same phenomenon in many settings.
%

% In the dynamical systems literature, saddle-to-saddle dynamics  are referred as heteroclinic networks~\citep{krupa1997robust,ashwin1999heteroclinic} and are characterised by the connection of multiple fixed points through orbits of the flow.
% %
% We note that an alternative approach to realising these dynamics is through the perturbation of the gradient flow by a vanishing noise as studied by \cite{Bakhtin2011noisy}.  

% Recently, in~\cite{berthier2022incremental} saddle-to-saddle dynamics were exhibiting in a simplified diagonal network ($\beta = u^2$ parametrisation). However 

% \paragraph{Conclusion.}
\paragraph{Conclusion.} 
Our study examines the behaviour of gradient flow with vanishing initialisation over diagonal linear networks. We prove that it leads to the flow jumping from a saddle point of the loss to another. Our analysis characterises each visited saddle point as well as the jumping times through an algorithm which is reminiscent of the LARS method used in the Lasso framework.
There are several avenues for further exploration. The most compelling one is the extension of these techniques to broader contexts for which the implicit bias of gradient flow has not yet fully been understood.

\paragraph{Acknowledgments.} 
% \color{blue}
S.P. would like to thank Loucas Pillaud-Vivien for introducing him to this~beautiful topic and for the many insightful discussions. S.P. also thanks Quentin Rebjock for the many helpful discussions and Johan S. Wind for reaching out and providing the reference of \cite{burger2013adaptive}.
The authors also thank Jérôme Bolte for the discussions concerning subdifferential equations, Aris Daniilidis  for the reference of \cite{kurdyka1998gradient}, as well as Aditya Varre and Mathieu Even for proofreading the paper.

% \newpage
% \input{Sections/left_overs}

\newpage
\bibliographystyle{plainnat} %plainnat après mais à changer !!!!
\bibliography{bio}

\appendix

\newpage

\newpage
\normalsize

\myparagraph{Organisation of the Appendix.}
\begin{enumerate}
    \item In \Cref{app:sec:experiments}, we give the experimental setup and provide additional experiments.
    % \item In \Cref{app:sec:extensions}, we provide some natural extensions of our setting.
    \item In \Cref{app:sec:resultsF}, we prove \Cref{prop:function_F} and provide additional comments concerning the unicity of the minimisation problem which appears in the proposition.
    \item In \Cref{app:sec:mirrorflow}, we provide some general results on the flow.
    \item In \Cref{app:sec:rescaled}, we prove \Cref{app:cor:RIP} and give standalone properties of Algorithm \ref{alg:algo}.
    \item In \Cref{app:sec:proof}, we explain in more detail the arc-length parametrisation explained in the main text as well as prove \Cref{thm:main_theorem} and \Cref{cor:graph_convergence}.
    \item In \Cref{app:sec:technical}, 
    we provide technical lemmas which are useful to prove the main results. 
\end{enumerate}

\newpage

\section{Experimental setup and additional: experiments, extension, related works.}\label{app:sec:experiments}

\myparagraph{Experimental setup and additional experiments.} For each experiment we generate our dataset as $y_i = \langle x_i, \beta^\star \rangle$ where $x_i = \mathcal{N}(\mathbf{0}, H)$ for a a diagonal covariance matrix $H$ and $\beta^\star$ is a vector of $\R^d$. Gradient descent is run with a small step size and from initialisation $u_{t=0} = \sqrt{2} \alpha \mathbf{1} \in \R^d$ and $v_{t=0} = \mathbf{0}$ for some initialisation scale $\alpha > 0 $.
\begin{itemize}
    \item \Cref{fig1} and \Cref{app:fig1} (Left): $(n, d, \alpha) = (5, 7, 10^{-120})$, $H = I_d$, $\beta^\star = (10, 20, 0, 0, 0, 0, 0) \in \R^7$.
    \item \Cref{app:fig1} (Right): $(n, d, \alpha) = (6, 6, 10^{-10}) $, $H = \mathrm{diag}(1, 10, 10, 10, 10, 10) \in \R^{6 \times 6}$, $\beta^\star = (1, 0, 0, 0, 0, 0, 0) \in \R^6$.
    \item \Cref{app:fig} (Left): $(n, d, \alpha_1, \alpha_2) = (7, 2, 10^{-100}, 10^{-10})$, $H = I_d$, $\beta^\star = (10, 20) \in \R^7$.
    \item \Cref{app:fig} (Right): $(n, d, \alpha) = (3, 3, 10^{-100})$ , $X$ is the square root matrix of the matrix $\\ ((20, 6, -1.4), (6, 2, -0.4), (-1.4, -0.4, 0.12)) \in \R^{3\times3}$, $\beta^\star = (1, 9, 10)$.
\end{itemize}

% \paragraph{Additional experiments.} 

\begin{figure}[h!]
\centering
\begin{minipage}[c]{.5\linewidth}
\hspace*{-15pt}
\includegraphics[width=0.99\linewidth]{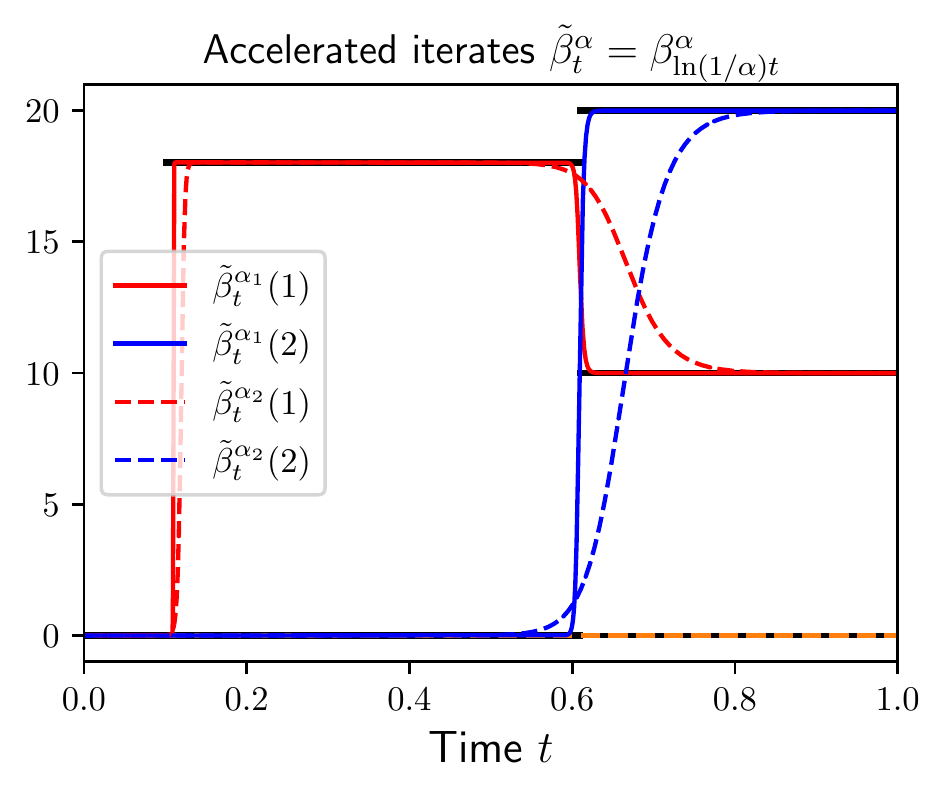}
\end{minipage}
\hspace*{-15pt}
\begin{minipage}[c]{.5\linewidth}
\includegraphics[width=0.99\linewidth]{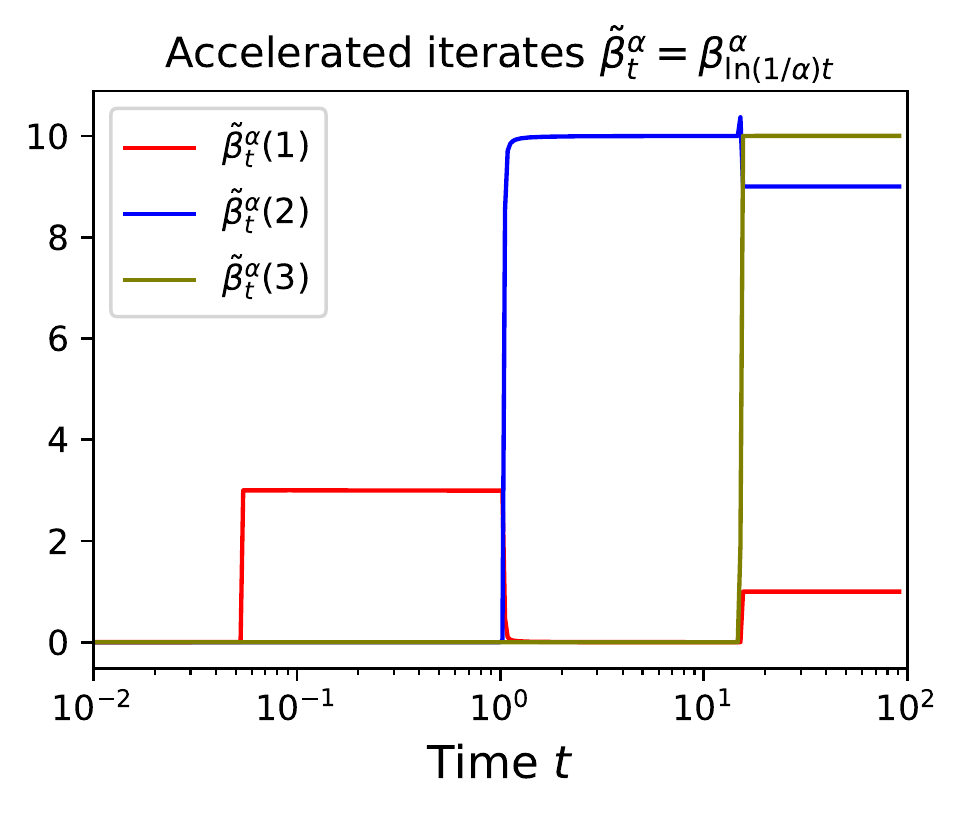}
% \caption{SGD $b=1$\label{subfig:sgd}}
\end{minipage}
\hspace*{-15pt}
\caption{\textit{Left:} Visualisation of the uniform convergence of $\tilde{\beta}^\alpha$ towards $\Tilde{\beta}^\circ$ as $\alpha \to 0$. $\alpha_1 = 10^{-100} \ll \alpha_2 = 10^{-10} $ \textit{Right:} In some cases, $2$ coordinates can activate at the same time. Note that the time axis is in log-scale for better visualisation. \label{app:fig}}
\end{figure}

\begin{figure}[h!]
\centering
\begin{minipage}[c]{.5\linewidth}
\hspace*{-15pt}
\includegraphics[width=0.99\linewidth]{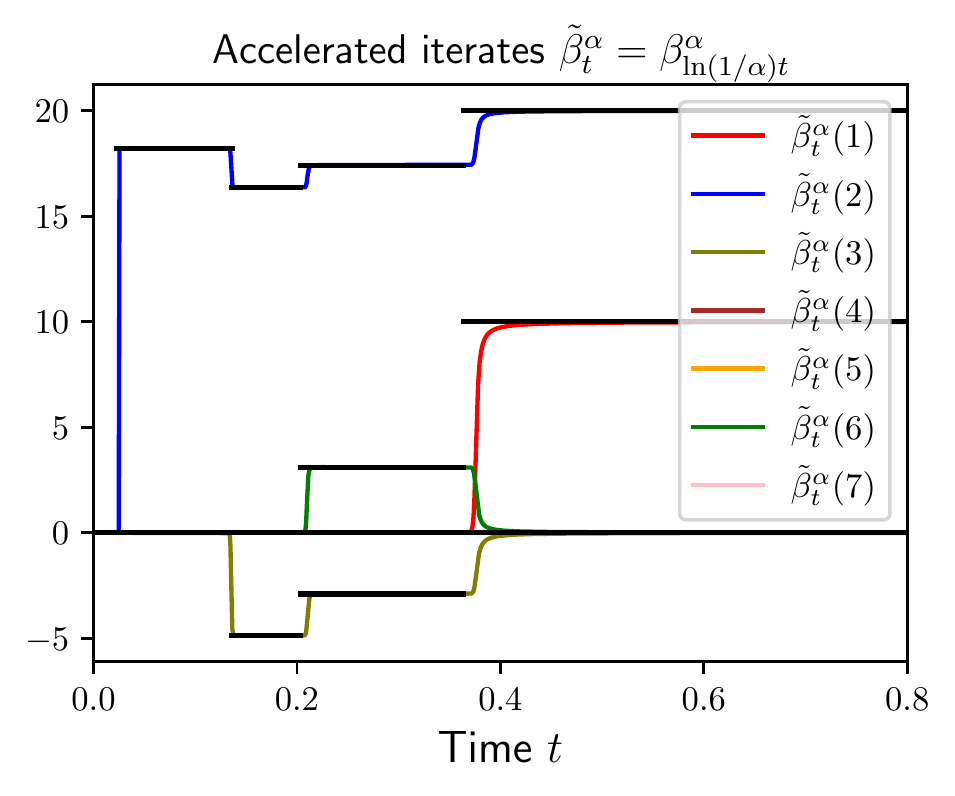}
\end{minipage}
\hspace*{-15pt}
\begin{minipage}[c]{.5\linewidth}
\includegraphics[width=0.99\linewidth]{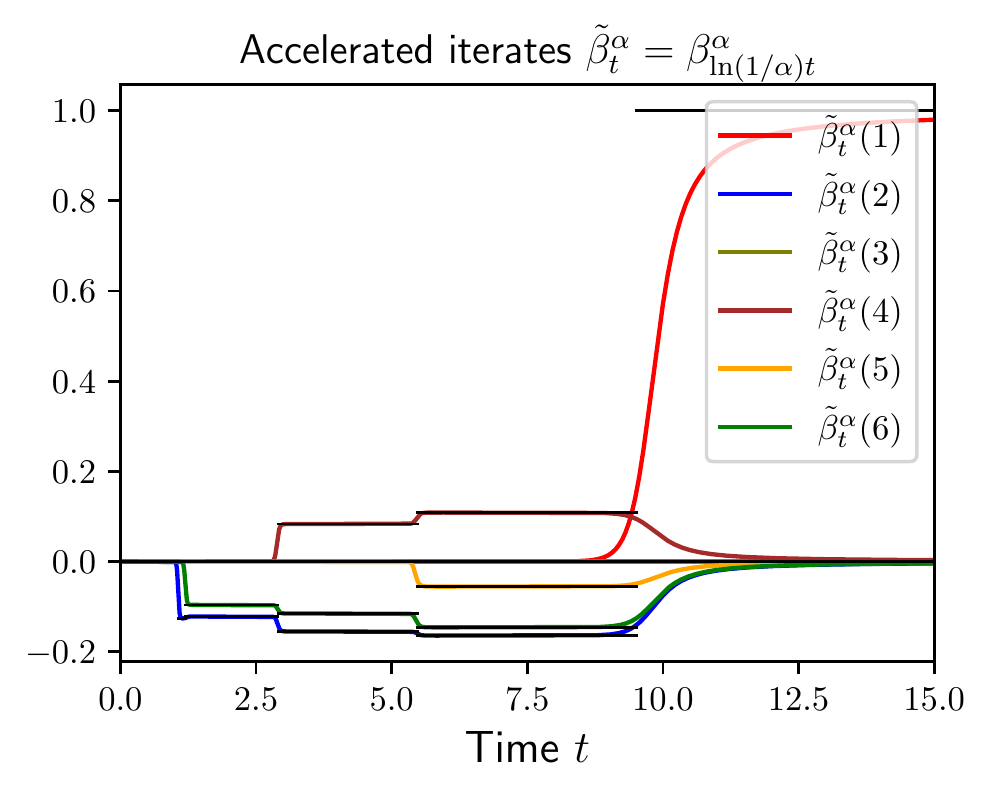}
% \caption{SGD $b=1$\label{subfig:sgd}}
\end{minipage}
% \hspace*{-15pt}
% \vspace*{-0.5cm}
\caption{Complex dynamics can occur. \textit{Left and right:} Coordinates are not monotonic and the number of active coordinates neither as several coordinates can deactivate at the same time. The piecewise constant process plotted in black is the limiting process $\tilde{\beta}^\circ$ predicted by our theory. \label{app:fig1}}
% \vspace*{-0.5cm}
\end{figure}

\newpage

\section{Proof of \Cref{prop:function_F}}\label{app:sec:resultsF}

\propcriticalpoints*

\begin{proof}
\myparagraph{Non-existence of maxima / non-global minima.}
This is a simpler version of results which appear in \cite{kawaguchi2016deep}, for the sake of completeness we provide here a simple proof adapted to our setting. The intuition follows the fact that if there existed a local maximum / non-global minimum for $F$ then this would translate to the existence of a local maximum / non-global minimum for the convex loss $L$, which is absurd. 

Assume that there exists a local maximum $w^\star = (u^\star, v^\star)$, i.e. assume that there exists $\varepsilon > 0$ such that for all $w = (u, v)$ such that $\Vert w - w^\star \Vert_2^2 \leq \varepsilon$, $F(w) \leq F(w^\star)$. We show that this would imply that $\beta^\star = u^\star \odot v^\star$ is a local maximum of $L$, which is absurd. 

The mapping $g : (u, v) \mapsto (u \odot v, \sqrt{(u^2 - v^2) / 2})$ from $\R_{\geq 0}^d \times \R^d \to \R^d \times \R_{\geq 0}^d$ is a bijection with inverse 
\begin{align}\label{app:eq:bijection}
g^{-1}: (\beta, \alpha) \mapsto (\sqrt{\alpha^2 + \sqrt{\beta^2 + \alpha^4}},\mathrm{sign}(\beta)\odot\sqrt{-\alpha^2 + \sqrt{\beta^2 + \alpha^4} } ).
\end{align}
Also notice that $F(g^{-1}(\beta, \alpha)) = L(\beta)$ for all $\beta$ and $\alpha$. Now let $\tilde{\varepsilon} > 0$ and let $\beta \in \R^d$ such that $\Vert \beta - \beta^\star \Vert_2^2 \leq \tilde{\varepsilon}$, then for $(u, v) = g^{-1}(\beta, \alpha_*)$ where $\alpha_* = \sqrt{((u^\star)^2 - (v^\star)^2) / 2}$ we have that:
\begin{align*}
    \Vert (u, v) - (u^\star, v^\star) \Vert_2^2 &= 2 \Big \Vert \Big (\sqrt{\alpha_*^4 + \beta^2} - \sqrt{\alpha_*^4 + {\beta^\star}^2} \  \Big)^2 \Big \Vert_1 \\ 
    &\leq 2 \Vert  \beta^2 - {\beta^\star}^2 \Vert_1 \\
    &= 2 \Vert (\beta - \beta^\star)^2 + 2(\beta - \beta^\star) \beta^\star \Vert_1 \\ 
    &\leq 2 \Vert (\beta - \beta^\star)^2 \Vert_1 + 2 \Vert \beta^\star \Vert_\infty \Vert \beta - \beta^\star \Vert_1 \\
    &\leq 2 ( 1 + \sqrt{d} \Vert \beta^\star \Vert_\infty ) \Tilde{\varepsilon} \\
    &\leq \varepsilon
\end{align*}
where the last inequality is for $\tilde{\varepsilon}$ small enough. This means that $L(\beta) = F(w) \leq F(w^\star) = L(\beta^\star)$ and $\beta^\star$ is a local maximum of $L$, which is absurd.

The exact same proof holds to show that there are no local minima of $F$ which are not global minima.

\myparagraph{Critical points.}
The gradient of the loss function $F$ writes:
\begin{align*}
    \nabla_w F(w) = \begin{pmatrix}
\nabla_u F(w) \\
\nabla_v F(w) 
\end{pmatrix} 
    = \begin{pmatrix}
\nabla L(\beta) \odot  v\\
\nabla L(\beta) \odot u
\end{pmatrix} \in \R^{2 d}.
\end{align*}
Therefore $\nabla F(w_c) = \mathbf{0} \in \R^{2d}$ implies that $\nabla L(\beta_c) \odot \beta_c = \mathbf{0} \in \R^d$. Now consider such a $\beta_c$ and let $\supp(\beta_c) = \{ i \in [d] \ \text{such that} \ \beta_c(i) \neq 0 \}$ denote the support of $\beta_c$. Since $[\nabla L(\beta_c)]_i = 0$ for $i \notin \supp(\beta_c)$, we can therefore write that
\begin{align*}
    \beta_c \in \argmin_{\beta_i = 0 \text{ for } i \not \in \supp(\beta_c)} L(\beta).
\end{align*}
% If $\text{Card}(\supp(\beta_c)) \geq n$, then we can fit the dataset and $L(\beta_c) = 0$ and we have a global minimum. If $\text{Card}(\supp(\beta_c)) < n$ then we cannot fit the data set, $L(\beta_c) > 0$ and $\beta_c$ is unique and equal to:
% $$\beta_c = (X_{c}^\top X_{c})^{-1} X_{c}^\top y, $$
% where we denote $X_c = (x_i)_{i \in \supp(\beta_c)} \in \R^{n \times \text{Card}(\supp(\beta_c)) }$.
Furthermore we point out that since $\supp (\beta_c) \subset [d]$, there are at most $2^d$ distinct sets $\supp (\beta_c)$, and therefore at most $2^d$ values $F(w_c) = L(\beta_c)$, where $w_c$ is a critical point of $F$.
\end{proof}

% \vspace{1em}
% \color{blue}
\myparagraph{Additional comment concerning the uniqueness of $\argmin_{\beta_i = 0, i \not \in \supp(\beta_c)} L(\beta)$.}
% \vspace*{0.3em}

We point out that the constrained minimisation problem \eqref{eq:min_constrained_loss} does not necessarily have a unique solution, even when $\beta_c$ is not a global solution.
Though not required for any of our results, for the sake of completeness, we show here that under an additional mild assumption on the data, we can ensure that the minimisation problem \eqref{eq:min_constrained_loss} which appears in \Cref{prop:function_F} has a unique minimum when $L(\beta_c) > 0$.  Under this additional assumption,  there is therefore a finite number of saddles $\beta_c$. Recall that we let 
$X \in \R^{n \times d}$ be the feature matrix and $(\tilde{x}_1, \dots, \tilde{x}_d)$ be its columns. Now assume \emph{temporarily}  that the following assumption holds. 
\begin{assump}[Assumption used just in this short section]\label{app:addit_lemma}
Any subset of $(\tilde{x}_1, \dots, \tilde{x}_d)$ of size smaller than $\min(n, d)$ is linearly independent.
\end{assump}
One can easily check that this assumption holds with probability $1$ as soon as the data is drawn from a continuous probability distribution, similarly to \cite[Lemma 4]{tibshirani2013lasso}). In the following, for a subset $\xi = \{i_1, \dots, i_k\} \subset [d]$, we write $X_\xi = (\tilde{x}_{i_1}, \dots, \tilde{x}_{i_k}) \in \R^{n \times k}$ (we extract the columns from $X$). For a vector $\beta \in \R^d$ we write $\beta[\xi] = (\beta_{i_1}, \dots, \beta_{i_k})$  and $\beta[\xi^C] = (\beta_i)_{i \notin \xi}$. We distinguish two different settings:
\begin{itemize}
\vspace{1em}
    \item Underparametrised setting ($n \geq d$) : in this case, for any $\xi = \{i_1, \dots, i_k\} \subset [d]$, then $\beta^\star \coloneqq \underset{\beta_i = 0, i \not \in \xi}{\text{argmin}} L(\beta)$ is unique. Indeed we simply set the gradient to $0$ and notice that due to \Cref{app:addit_lemma}, there exists a unique solution, indeed it is $\beta^\star$ such that $\beta^\star[\xi] = (X_\xi^\top X_\xi)^{-1} X_\xi^\top y $ and $\beta^\star [\xi^C]= 0$.
\vspace{1em}
    \item Overparametrised setting ($d > n$) : \textbf{Global solutions:} $\argmin_{\beta \in \R^d} L(\beta)$ is an affine space spanned by the orthogonal of $(x_1, \dots, x_n)$ in $\R^d$. Since $\mathrm{span}(\tilde{x}_1, \dots, \tilde{x}_d) = \R^n$ from \Cref{app:addit_lemma}, any $\beta^\star \in \argmin_{\beta \in \R^d} L(\beta)$ satisfies $X \beta^\star = y$ and $L(\beta^\star) = 0$. \textbf{"Saddle points":}  now let $\beta_c \in \R^d$ be such that we can write $\beta_c \in \argmin_{\beta_i = 0, i \notin \supp(\beta_c)} L(\beta)$ and assume that $L(\beta_c) > 0$ (i.e., not a global solution), then: (1) $\beta_c$ has at most $n$ non-zero entries, indeed if it were not the case, then $y$ would necessarily belong to $\mathrm{span} (\tilde{x}_i)_{i \in \supp(\beta_c) }$ due to the assumption on the data, and this would lead to $L(\beta_c) = 0$,  (2) therefore, similar to the underparametrised case, $\argmin_{\beta_i =0, i \notin \supp(\beta_c)} L(\beta)$ is unique, equal to $\beta_c$, and we have that $\beta_c[\xi] = (X_\xi^\top X_\xi)^{-1} X_\xi^\top y$ and  $\beta_c[\xi^C] = 0$ where $\xi = \supp(\beta_c)$. 
    % Notice that remark (1) means that $\#\{ L(\beta_c), \nabla L(\beta_c) \odot \beta_c = 0\} \leq 2^n$.
\end{itemize}

Thus, in both the underparametrised and overparametrised settings, the minimisation problem \eqref{eq:min_constrained_loss} appearing in \Cref{prop:function_F} has a unique minimum when $L(\beta_c)>0$  and \Cref{app:addit_lemma} holds.

\newpage

\section{General results on the iterates}\label{app:sec:mirrorflow}

In the following lemma we recall a few results concerning the gradient flow \Cref{eq:SGD_recursion}:
\begin{equation}\label{app:eq:GF}
 \dd w_t  =  - \nabla F(w_t) \dd t\,,
\end{equation}
where $F$ is defined in \Cref{eq:non-convex-F-loss} as:
\begin{align*}
F(w) \coloneqq L(u\odot v)= \frac{1}{2 n} \sum_{i=1}^n (\langle u \odot v, x_i \rangle - y_i)^2\,.
\end{align*}
\begin{lem}\label{app:lemma:bijection}
For an initialisation $u_0 = \sqrt{2} \alpha$, $v_0 = \mathbf{0}$, the flow $w_t^\alpha = (u_t^\alpha, v_t^\alpha)$ from \Cref{app:eq:GF} is such that the quantity $(u_t^\alpha)^2 - (v_t^\alpha)^2$ is constant and equal to $2 \alpha^2 \mathbf{1}$. Furthermore $u_t^\alpha > |v_t^\alpha| \geq 0$ and therefore from the bijection \Cref{app:eq:bijection} we have that:
\begin{align*} 
u^\alpha_t = \sqrt{\alpha^2 + \sqrt{(\beta_t^\alpha)^2 + \alpha^4}}, \quad v^\alpha_t = \mathrm{sign}(\beta^\alpha_t)\odot\sqrt{-\alpha^2 + \sqrt{(\beta_t^\alpha)^2 + \alpha^4} }. 
\end{align*}
\end{lem}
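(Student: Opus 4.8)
The plan is to prove Lemma~\ref{app:lemma:bijection} by first establishing the conservation law for $(u_t^\alpha)^2 - (v_t^\alpha)^2$, then using it together with the positivity of $u_0$ to deduce the coordinate-wise ordering $u_t^\alpha > |v_t^\alpha| \geq 0$, and finally invoking the bijection $g^{-1}$ from \eqref{app:eq:bijection} to read off the closed-form expressions.

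First I would write out the gradient flow coordinate-wise using the expression for $\nabla F$ computed in the proof of \Cref{prop:function_F}, namely $\dot u_t = -\nabla L(\beta_t) \odot v_t$ and $\dot v_t = -\nabla L(\beta_t) \odot u_t$. Then for each coordinate $i$, $\frac{\dd}{\dd t}\big( (u_t[i])^2 - (v_t[i])^2 \big) = 2 u_t[i]\dot u_t[i] - 2 v_t[i] \dot v_t[i] = -2 \nabla L(\beta_t)[i]\, u_t[i] v_t[i] + 2 \nabla L(\beta_t)[i]\, v_t[i] u_t[i] = 0$. Hence $(u_t^\alpha)^2 - (v_t^\alpha)^2$ is constant, and evaluating at $t=0$ with $u_0 = \sqrt 2 \alpha \mathbf 1$, $v_0 = \mathbf 0$ gives the constant value $2\alpha^2 \mathbf 1$.

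Next I would establish $u_t^\alpha > |v_t^\alpha| \geq 0$ coordinate-wise. Since $(u_t[i])^2 = (v_t[i])^2 + 2\alpha^2 > 0$ for all $t$, $u_t[i]$ never vanishes; being continuous and positive at $t=0$, it stays strictly positive, so $u_t[i] = \sqrt{(v_t[i])^2 + 2\alpha^2} > |v_t[i]| \geq 0$. This is the only slightly delicate point — one must rule out $u_t[i]$ changing sign, which follows from the intermediate value theorem together with the fact that $(u_t[i])^2 \geq 2\alpha^2 > 0$ forbids a zero crossing. (One should also note that the flow is well-defined for all $t \geq 0$, which follows from $F$ being coercive enough along trajectories / the iterates staying bounded, established elsewhere; in any case local existence plus the a priori bound suffices.)

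Finally, having shown $(u_t^\alpha, v_t^\alpha) \in \R_{>0}^d \times \R^d$ with $(u_t^\alpha)^2 - (v_t^\alpha)^2 = 2\alpha^2\mathbf 1$, I would apply the bijection $g^{-1}$ from \eqref{app:eq:bijection} with the choice $\alpha_* = \alpha \mathbf 1$ (so that $\alpha_*^2 = ((u_t^\alpha)^2 - (v_t^\alpha)^2)/2$), to the pair $(\beta_t^\alpha, \alpha_* )$ where $\beta_t^\alpha = u_t^\alpha \odot v_t^\alpha$. Since $g$ is a bijection on $\R_{\geq 0}^d \times \R^d$ and $(u_t^\alpha, v_t^\alpha)$ lies in its domain with image $(\beta_t^\alpha, \alpha_*)$, we get $(u_t^\alpha, v_t^\alpha) = g^{-1}(\beta_t^\alpha, \alpha_*)$, which upon substituting the explicit formula for $g^{-1}$ yields exactly $u^\alpha_t = \sqrt{\alpha^2 + \sqrt{(\beta_t^\alpha)^2 + \alpha^4}}$ and $v^\alpha_t = \mathrm{sign}(\beta^\alpha_t)\odot\sqrt{-\alpha^2 + \sqrt{(\beta_t^\alpha)^2 + \alpha^4}}$. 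The main obstacle is genuinely just the sign/positivity argument for $u_t$; everything else is a direct computation or an appeal to the already-established bijection.
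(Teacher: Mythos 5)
Your proposal is correct and follows essentially the same route as the paper: the same coordinate-wise computation showing $\frac{\dd}{\dd t}\big((u_t^\alpha)^2-(v_t^\alpha)^2\big)=\mathbf{0}$, a continuity/no-zero-crossing argument for $u_t^\alpha>|v_t^\alpha|$ (the paper factors $(u+v)(u-v)=2\alpha^2>0$ while you note $u^2\geq 2\alpha^2$ directly, but these are the same idea), and the final appeal to the bijection \eqref{app:eq:bijection}. No gaps.
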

\begin{proof}
From the expression of $\nabla F(w)$, notice that the derivative of $(u_t^\alpha)^2 - (v_t^\alpha)^2$ is equal to $\mathbf{0}$ and therefore equal to its initial value.

Since $(u_t^\alpha)^2 - (v_t^\alpha)^2 = (u_t^\alpha+v_t^\alpha)(u_t^\alpha-v_t^\alpha) > 0$, by continuity we get that $u_t^\alpha+v_t^\alpha > 0 $ and $u_t^\alpha-v_t^\alpha > 0 $ and therefore $u_t^\alpha > |v_t^\alpha|$.
% and that that $0$ is a fixed point of the dynamics of $(u_t^\alpha+v_t^\alpha)(i)$ and $(u_t^\alpha-v_t^\alpha)(i)$ for all $i\in [d]$ from the fact that $\dd (u_t^\alpha \pm v_t^\alpha) = \mp \nabla L(\beta^\alpha_t) \odot (u_t^\alpha \pm v_t^\alpha) \dd t   $.
\end{proof}

In this section we consider the accelerated iterates \Cref{eq:rescaled_iterates} which follow: 
\begin{align}\label{eq:app:process_integral_form}
      \dd \nabla \ttphi_\al( \ttbeta_t^\al) = - \nabla L(\ttbeta_t^\alpha) \dd t, \qquad  \mathrm{where} \qquad  \tilde{\phi}_\alpha \coloneqq \frac{1}{\ln(1 /\al)}  \cdot \ttphi_\al
\end{align}
with $\tilde{\beta}_{t=0} = \mathbf{0}$ and where $\phi_\alpha$ is defined \Cref{eq:hypentropy}.

\begin{prop}\label{app:prop:convergence_rates}
For all $\al>0$ and minimum $\beta^\star \in \argmin_\beta L(\beta)$, the loss values $L(\tilde{\beta}^\alpha_t)$ and the Bregman divergence $\breg(\beta^\star, \tilde{\beta}^\alpha_t)$ are decreasing. Moreover 
\begin{align}
    % &\dd \breg(\beta, \tilde{\beta}^\alpha_t) = - \nabla L( \tilde{\beta}^\alpha_t)^\top (\tilde{\beta}^\alpha_t-\beta)  \dd t , \\
    % &L( \tilde{\beta}^\alpha_t) -L( \beta) \leq \frac{{\tilde{\phi}_\alpha}(\beta)}{t }, \\
    &L( \tilde{\beta}^\alpha_t) - L(\beta^\star) \leq \frac{{\tilde{\phi}_\alpha}(\beta^\star)}{2t }, \\ 
    &L\Big(\frac{1}{t}\int_0^t \tilde{\beta}_s^\alpha \dd s \Big) - L(\beta^\star) \leq \frac{{\tilde{\phi}_\alpha}(\beta^\star)}{2t }.
\end{align} 
\end{prop}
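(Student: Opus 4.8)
\textbf{Proof plan for Proposition \ref{app:prop:convergence_rates}.}

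The plan is to exploit the mirror-flow structure \eqref{eq:app:process_integral_form} together with a standard Bregman-divergence Lyapunov argument adapted to mirror descent on a convex objective. Write $D_\alpha(\cdot,\cdot)$ for the Bregman divergence associated with the potential $\tilde\phi_\alpha$, i.e. $D_\alpha(\beta',\beta) = \tilde\phi_\alpha(\beta') - \tilde\phi_\alpha(\beta) - \langle \nabla\tilde\phi_\alpha(\beta),\beta'-\beta\rangle$. The first step is to differentiate $t\mapsto D_\alpha(\beta^\star,\tilde\beta^\alpha_t)$ along the flow. Using the mirror-flow identity $\frac{\dd}{\dd t}\nabla\tilde\phi_\alpha(\tilde\beta^\alpha_t) = -\nabla L(\tilde\beta^\alpha_t)$ and the fact that $\frac{\dd}{\dd t}\tilde\phi_\alpha(\tilde\beta^\alpha_t) = \langle \nabla\tilde\phi_\alpha(\tilde\beta^\alpha_t),\dot{\tilde\beta}^\alpha_t\rangle$, one obtains
\begin{align*}
\frac{\dd}{\dd t} D_\alpha(\beta^\star,\tilde\beta^\alpha_t) = \langle \nabla L(\tilde\beta^\alpha_t),\, \beta^\star - \tilde\beta^\alpha_t\rangle \leq L(\beta^\star) - L(\tilde\beta^\alpha_t) \leq 0,
\end{align*}
where the first inequality is convexity of $L$. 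This simultaneously shows that $D_\alpha(\beta^\star,\tilde\beta^\alpha_t)$ is nonincreasing. Monotonicity of $L(\tilde\beta^\alpha_t)$ is even simpler: $\frac{\dd}{\dd t}L(\tilde\beta^\alpha_t) = \langle\nabla L(\tilde\beta^\alpha_t),\dot{\tilde\beta}^\alpha_t\rangle = -\langle \nabla L, (\nabla^2\tilde\phi_\alpha)^{-1}\nabla L\rangle \leq 0$ since $\tilde\phi_\alpha$ is strictly convex so its Hessian is positive definite (this uses that $\tilde\phi_\alpha$ is essentially smooth on its domain, which holds for the hyperbolic entropy).

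The second step is to integrate the displayed inequality from $0$ to $t$. Since $\tilde\beta^\alpha_0 = \mathbf{0}$ and $\nabla\tilde\phi_\alpha(\mathbf 0) = \mathbf 0$ (immediate from \eqref{eq:hypentropy}, as $\mathrm{arcsinh}(0)=0$), we get $D_\alpha(\beta^\star,\mathbf 0) = \tilde\phi_\alpha(\beta^\star) - \tilde\phi_\alpha(\mathbf 0) = \tilde\phi_\alpha(\beta^\star)$ using $\tilde\phi_\alpha(\mathbf 0)=0$. Hence
\begin{align*}
\int_0^t \big(L(\tilde\beta^\alpha_s) - L(\beta^\star)\big)\dd s \leq D_\alpha(\beta^\star,\mathbf 0) - D_\alpha(\beta^\star,\tilde\beta^\alpha_t) \leq \tilde\phi_\alpha(\beta^\star).
\end{align*}
Now for the first rate, use that $s\mapsto L(\tilde\beta^\alpha_s)$ is nonincreasing (step one), so $t\big(L(\tilde\beta^\alpha_t)-L(\beta^\star)\big) \leq \int_0^t (L(\tilde\beta^\alpha_s)-L(\beta^\star))\dd s \leq \tilde\phi_\alpha(\beta^\star)$; wait — this gives the bound with constant $1$ rather than $1/2$. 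To recover the factor $1/2$ one should instead run a tighter Lyapunov computation: keep the stronger estimate $\frac{\dd}{\dd t}D_\alpha(\beta^\star,\tilde\beta^\alpha_t) \leq L(\beta^\star)-L(\tilde\beta^\alpha_t)$ but also note $\frac{\dd}{\dd t}D_\alpha \le -\tfrac{1}{2}(L(\tilde\beta^\alpha_t) - L(\beta^\star))$ fails in general; the clean route is the classical one for the averaged iterate and then transfer. Concretely, by Jensen/convexity $L\big(\tfrac1t\int_0^t\tilde\beta^\alpha_s\dd s\big) \leq \tfrac1t\int_0^t L(\tilde\beta^\alpha_s)\dd s \leq L(\beta^\star) + \tilde\phi_\alpha(\beta^\star)/t$, which is the second claimed bound — again with constant $1$. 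Since the proposition states $1/(2t)$, the intended argument is almost certainly the sharper mirror-descent Lyapunov function $t\big(L(\tilde\beta^\alpha_t)-L(\beta^\star)\big) + 2 D_\alpha(\beta^\star,\tilde\beta^\alpha_t)$ (or its continuous analogue $\frac{\dd}{\dd t}\big[t(L(\tilde\beta^\alpha_t)-L(\beta^\star))\big] = (L(\tilde\beta^\alpha_t)-L(\beta^\star)) + t\frac{\dd}{\dd t}L(\tilde\beta^\alpha_t)$), combined with the observation that along the flow $\frac{\dd}{\dd t}L \leq -\frac{2}{t}(L(\tilde\beta^\alpha_t)-L(\beta^\star))$ type control can be obtained; I would check which exact potential yields the stated constant.

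The main obstacle I anticipate is precisely pinning down the factor $1/2$: the bare convexity argument gives constant $1$, and getting $1/2$ requires the standard but slightly more delicate mirror-descent trick of differentiating $t\mapsto t\cdot(L(\tilde\beta^\alpha_t)-L(\beta^\star)) + \text{(Bregman term)}$ and using monotonicity of the loss to absorb the $t\dot L$ term favourably, or alternatively integrating by parts in $\int_0^t \langle\nabla L(\tilde\beta^\alpha_s),\beta^\star - \tilde\beta^\alpha_s\rangle\dd s$. All the other ingredients — essential smoothness and strict convexity of the hyperbolic entropy $\tilde\phi_\alpha$ so that the flow and its Bregman divergence are well-defined and differentiable, and $\nabla\tilde\phi_\alpha(\mathbf 0)=\mathbf 0$, $\tilde\phi_\alpha(\mathbf 0)=0$ — are routine given \eqref{eq:hypentropy} and the results already established about the mirror structure in \Cref{subsec:setup}.
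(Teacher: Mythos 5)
Your overall strategy (differentiate the Bregman divergence along the mirror flow, integrate, then use monotonicity of the loss and Jensen) is exactly the paper's, but there is a genuine gap that you yourself flag and never close: your argument only yields the bounds with constant $1$ instead of the claimed $1/2$. The missing ingredient is not a cleverer Lyapunov function. It is the elementary fact that $L$ is \emph{quadratic}: writing $L(\beta)=\frac{1}{2n}\Vert X\beta-y\Vert^2$ and using that any minimiser $\beta^\star$ satisfies $X^\top(X\beta^\star-y)=\mathbf{0}$, one checks directly that
\begin{align*}
\langle \nabla L(\beta),\,\beta-\beta^\star\rangle \;=\; \tfrac{1}{n}\big(\Vert X\beta-y\Vert^2-\Vert X\beta^\star-y\Vert^2\big) \;=\; 2\big(L(\beta)-L(\beta^\star)\big),
\end{align*}
i.e.\ the convexity gap is attained with an exact factor of $2$. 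Hence $\frac{\dd}{\dd t}D_\alpha(\beta^\star,\tilde\beta^\alpha_t)=-\langle\nabla L(\tilde\beta^\alpha_t),\tilde\beta^\alpha_t-\beta^\star\rangle=-2\big(L(\tilde\beta^\alpha_t)-L(\beta^\star)\big)$, which is both the monotonicity of the Bregman divergence and, after integrating from $0$ to $t$ and using $D_\alpha(\beta^\star,\mathbf{0})=\tilde\phi_\alpha(\beta^\star)$, gives $\frac{1}{t}\int_0^t L(\tilde\beta^\alpha_s)\,\dd s - L(\beta^\star)\le \tilde\phi_\alpha(\beta^\star)/(2t)$. The two stated bounds then follow exactly as you indicate, from monotonicity of $t\mapsto L(\tilde\beta^\alpha_t)$ and from Jensen, respectively.

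The alternative fixes you sketch would not rescue the constant for a general convex loss: the potential $t(L-L^\star)+D_\alpha$ again only gives constant $1$, and a differential inequality of the form $\dot L\le -\frac{2}{t}(L-L^\star)$ is not available. The factor $1/2$ genuinely uses quadraticity of $L$ (this is why the paper's proof parenthetically notes ``since $L$ is the quadratic loss''), and it is consistent with the authors' remark elsewhere that some steps do not extend verbatim to other losses. The remaining ingredients in your write-up ($\tilde\phi_\alpha(\mathbf{0})=0$, $\nabla\tilde\phi_\alpha(\mathbf{0})=\mathbf{0}$, strict convexity of $\tilde\phi_\alpha$ for the monotonicity of the loss) are correct and match the paper.
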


\begin{proof}
The loss is decreasing since:
$\ddt L(  \tilde{\beta}^\alpha_t)= {\nabla L (  \tilde{\beta}^\alpha_t )}^\top  \dot{\beta}^\al_t =  - \dot{\tilde{\beta}}^{\al^\top}_t \nabla^2 {\tilde{\phi}_\alpha} (\tilde{\beta}^\alpha_t) \dot{\tilde{\beta}}^\al_t \leq 0  $.

% $\ddt \breg (\beta ,  \tilde{\beta}^\alpha_t)= - \nabla L (  \tilde{\beta}^\alpha_t )^\top (\tilde{\beta}^\alpha_t-\beta ) \leq L (  \tilde{\beta}^\alpha_t )-L (  \beta )  $ by convexity of the loss. Integrating this inequality from $0$ to $t$, dividing by $t$ gives the result and using the fact that the loss is decreasing over time yields the result.

$\ddt \breg (\beta^\star ,  \tilde{\beta}^\alpha_t)= - \nabla L (  \tilde{\beta}^\alpha_t )^\top (\tilde{\beta}^\alpha_t-\beta^\star ) = - 2 ( L (  \tilde{\beta}^\alpha_t ) - L(\beta^\star) )$ (since $L$ is the quadratic loss), therefore the Bregman distance is decreasing. We can also integrate this last equality from $0$ to $t$, and divide by $- 2 t$:
\begin{align*}
    \frac{1}{t }\int_0^t L(\tilde{\beta}_s^\al) \dd s - L(\beta^\star) &= \frac{\breg(\beta^\star, \beta^\al_0 = \mathbf{0}) - \breg(\beta^\star, \beta^\al_t) }{2 t} \\ 
    &\leq \frac{{\tilde{\phi}_\alpha}(\beta^\star) }{2 t}. 
\end{align*}
Since the loss is decreasing we get that $L( \tilde{\beta}^\alpha_t) - L(\beta^\star) \leq \frac{{\tilde{\phi}_\alpha}(\beta^\star)}{2t }$ and from the convexity of $L$ we get that $L\Big(\frac{1}{t}\int_0^t \tilde{\beta}_s^\alpha \dd s \Big)  - L(\beta^\star) \leq \frac{{\tilde{\phi}_\alpha}(\beta^\star)}{2t }$.
\end{proof}

In the following proposition, we show that for $\alpha$ small enough, the iterates are bounded independently of $\alpha$. Note that this result unfortunately only holds for the quadratic loss, we expect it to hold for other convex  losses of the type $L(\beta) = \frac{1}{n} \sum_i \ell(y_i, \langle x_i, \beta \rangle)$ where $\ell(y, \cdot)$ is strictly convex has a unique root at $y$ but we don't know how to show it. Also note that bounding the accelerated iterates $\Tilde{\beta}^\alpha$ is equivalent to bounding the iterates $\beta^\alpha$ since $\tilde{\beta}^\alpha_t = \beta^\alpha_{\ln(1/\alpha) t}$.

\begin{prop}\label{app:prop:bounded_iterates}
For $\alpha < \alpha_0$, where $\alpha_0$ depends on $\beta^\star_{\ell_1}$, the iterates  $\tilde{\beta}_t^\alpha$ are bounded independently of $\alpha$:
\begin{align*}
    \Vert \ttbeta^\alpha_t \Vert_\infty \leq  3 \Vert \beta^\star_{\ell_1}  \Vert_1 + 1
\end{align*}
\end{prop}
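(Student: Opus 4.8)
The plan is to exploit the mirror-descent structure \eqref{eq:app:process_integral_form} together with the monotonicity of the Bregman divergence established in \Cref{app:prop:convergence_rates}. Recall that $\breg(\beta^\star_{\ell_1}, \ttbeta^\alpha_t)$ is decreasing in $t$, so for every $t\geq 0$,
\begin{align*}
\breg(\beta^\star_{\ell_1}, \ttbeta^\alpha_t)\;=\;\ttphi_\alpha(\beta^\star_{\ell_1})-\ttphi_\alpha(\ttbeta^\alpha_t)-\langle\nabla\ttphi_\alpha(\ttbeta^\alpha_t),\beta^\star_{\ell_1}-\ttbeta^\alpha_t\rangle\;\leq\;\breg(\beta^\star_{\ell_1},\mathbf{0})\;=\;\ttphi_\alpha(\beta^\star_{\ell_1}),
\end{align*}
where I used $\ttbeta^\alpha_0=\mathbf 0$, $\nabla\ttphi_\alpha(\mathbf 0)=\mathbf 0$ and $\ttphi_\alpha(\mathbf 0)=0$. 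Rearranging, this yields the pointwise bound $\ttphi_\alpha(\ttbeta^\alpha_t)+\langle\nabla\ttphi_\alpha(\ttbeta^\alpha_t),\ttbeta^\alpha_t-\beta^\star_{\ell_1}\rangle\leq\ttphi_\alpha(\beta^\star_{\ell_1})$. So the whole argument reduces to a purely analytic statement about the hyperbolic entropy $\ttphi_\alpha$: any $\beta$ satisfying this inequality must have $\Vert\beta\Vert_\infty\leq 3\Vert\beta^\star_{\ell_1}\Vert_1+1$ once $\alpha$ is small enough.

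\textbf{Analysing the scalar function.} Since $\ttphi_\alpha$ is separable, write $\ttphi_\alpha(\beta)=\sum_i h_\alpha(\beta_i)$ with $h_\alpha(x)=\tfrac1{2\ln(1/\alpha)}\big(x\,\mathrm{arcsinh}(x/\alpha^2)-\sqrt{x^2+\alpha^4}+\alpha^2\big)$, so that $h_\alpha'(x)=\tfrac1{2\ln(1/\alpha)}\mathrm{arcsinh}(x/\alpha^2)$. The inequality above becomes $\sum_i\big(h_\alpha(\beta_i)+h_\alpha'(\beta_i)(\beta_i-\beta^\star_{\ell_1,i})\big)\leq\sum_i h_\alpha(\beta^\star_{\ell_1,i})$. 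The key elementary facts I would establish are: (i) $h_\alpha$ is convex and nonnegative, with $h_\alpha(x)\leq |x|\cdot\tfrac{1}{2\ln(1/\alpha)}\mathrm{arcsinh}(|x|/\alpha^2)$, and more usefully $0\le h_\alpha(x)\le h_\alpha'(x)\,x$ combined with a matching lower bound showing $h_\alpha(x)$ grows like $|x|$ up to the $\ln(1/\alpha)$ normalisation; (ii) for $|x|\geq 1$ and $\alpha$ small, $|h_\alpha'(x)|\geq \tfrac{\mathrm{arcsinh}(1/\alpha^2)}{2\ln(1/\alpha)}\to 1$, in fact $|h_\alpha'(x)|\geq 1/2$ say; (iii) $h_\alpha(x)\geq \tfrac12|x|h_\alpha'(x)$ by convexity and $h_\alpha(0)=0$ (since $h_\alpha(0)-h_\alpha(x)\geq h_\alpha'(x)(0-x)$ wait—use instead $h_\alpha(x)=\int_0^x h_\alpha'(s)\,ds\geq \tfrac12 x h_\alpha'(x)$ as $h_\alpha'$ is increasing and odd, giving $h_\alpha(x)\ge \tfrac12 x h_\alpha'(x)$ for $x\ge 0$ and symmetrically). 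Plugging (iii) into the summed inequality: $\sum_i\big(\tfrac12\beta_i h_\alpha'(\beta_i)+h_\alpha'(\beta_i)\beta_i\big)\le\sum_i h_\alpha'(\beta_i)\beta^\star_{\ell_1,i}+\sum_i h_\alpha(\beta^\star_{\ell_1,i})$, i.e. $\tfrac32\sum_i\beta_i h_\alpha'(\beta_i)\le \sum_i|h_\alpha'(\beta_i)|\,|\beta^\star_{\ell_1,i}|+\ttphi_\alpha(\beta^\star_{\ell_1})$. Since $\beta_i h_\alpha'(\beta_i)\ge 0$ and $|h_\alpha'(\beta_i)|\le \tfrac{\mathrm{arcsinh}(\Vert\beta\Vert_\infty/\alpha^2)}{2\ln(1/\alpha)}$, and on the other hand $\ttphi_\alpha(\beta^\star_{\ell_1})\to\Vert\beta^\star_{\ell_1}\Vert_1$, I can bound the right side by roughly $(\Vert\beta^\star_{\ell_1}\Vert_1+o(1))\cdot\max(1,\text{sup of }|h_\alpha'|)$.

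\textbf{Closing the argument.} Let $M=\Vert\ttbeta^\alpha_t\Vert_\infty$ and suppose for contradiction $M>3\Vert\beta^\star_{\ell_1}\Vert_1+1$. Let $i^\star$ be the coordinate attaining $M$. On the left I keep only the $i^\star$ term: $\tfrac32 M\,|h_\alpha'(M)|\le \tfrac32\sum_i\beta_ih_\alpha'(\beta_i)$. On the right, $\sum_i|h_\alpha'(\beta_i)|\,|\beta^\star_{\ell_1,i}|\le |h_\alpha'(M)|\cdot\Vert\beta^\star_{\ell_1}\Vert_1$ (using that $|h_\alpha'|$ is maximised at the largest argument, $M$), and $\ttphi_\alpha(\beta^\star_{\ell_1})\le |h_\alpha'(M)|$ for $\alpha$ small, since the normalised potential converges to $\Vert\beta^\star_{\ell_1}\Vert_1$ while $|h_\alpha'(M)|\to 1$ as $M\ge 1$ — more precisely pick $\alpha_0$ (depending on $\beta^\star_{\ell_1}$) so that $\ttphi_\alpha(\beta^\star_{\ell_1})\le \tfrac32\Vert\beta^\star_{\ell_1}\Vert_1+\tfrac12$ and $|h_\alpha'(x)|\ge\tfrac12$ for all $|x|\ge 1$. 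Then $\tfrac32 M|h_\alpha'(M)|\le |h_\alpha'(M)|\Vert\beta^\star_{\ell_1}\Vert_1+\tfrac32\Vert\beta^\star_{\ell_1}\Vert_1+\tfrac12$; dividing by $|h_\alpha'(M)|\ge\tfrac12$ gives $\tfrac32 M\le \Vert\beta^\star_{\ell_1}\Vert_1+3\Vert\beta^\star_{\ell_1}\Vert_1+1=4\Vert\beta^\star_{\ell_1}\Vert_1+1$, hence $M\le\tfrac{8}{3}\Vert\beta^\star_{\ell_1}\Vert_1+\tfrac23<3\Vert\beta^\star_{\ell_1}\Vert_1+1$, a contradiction. (The constants here are not tight; I would tune the threshold $\alpha_0$ and the crude bounds so the final constant comes out to exactly $3\Vert\beta^\star_{\ell_1}\Vert_1+1$.)

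\textbf{Main obstacle.} The delicate point is step (ii)–(iii): making the comparison between $h_\alpha(x)$ and $x\,h_\alpha'(x)$ uniform in $\alpha$ and in $x$, and in particular controlling $\sup_i|h_\alpha'(\beta_i)|$ by $|h_\alpha'(\Vert\beta\Vert_\infty)|$ — this requires that $|h_\alpha'|=\tfrac1{2\ln(1/\alpha)}|\mathrm{arcsinh}(\cdot/\alpha^2)|$ is monotone in $|x|$, which it is, so the issue is mostly bookkeeping. The one genuinely quantitative input is that $\ttphi_\alpha(\beta^\star_{\ell_1})\to\Vert\beta^\star_{\ell_1}\Vert_1$, which is the stated asymptotic $\ttphi_\alpha(\beta)\sim\Vert\beta\Vert_1$; this forces the $\alpha$-dependence of the threshold $\alpha_0$ and is exactly why the statement is only "for $\alpha<\alpha_0$".
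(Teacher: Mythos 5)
There is a genuine gap, and it sits in your very first step. The monotonicity of $\breg(\beta^\star_{\ell_1},\ttbeta^\alpha_t)$ gives
\begin{align*}
\ttphi_\alpha(\beta^\star_{\ell_1})-\ttphi_\alpha(\ttbeta^\alpha_t)-\langle\nabla\ttphi_\alpha(\ttbeta^\alpha_t),\beta^\star_{\ell_1}-\ttbeta^\alpha_t\rangle\;\leq\;\ttphi_\alpha(\beta^\star_{\ell_1}),
\end{align*}
which rearranges to $\langle\nabla\ttphi_\alpha(\ttbeta^\alpha_t),\ttbeta^\alpha_t-\beta^\star_{\ell_1}\rangle\leq \ttphi_\alpha(\ttbeta^\alpha_t)$ — the potential of the \emph{iterate} on the right, with the opposite sign from what you wrote. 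Your claimed inequality $\ttphi_\alpha(\ttbeta^\alpha_t)+\langle\nabla\ttphi_\alpha(\ttbeta^\alpha_t),\ttbeta^\alpha_t-\beta^\star_{\ell_1}\rangle\leq\ttphi_\alpha(\beta^\star_{\ell_1})$ does not follow; combined with plain convexity it would immediately give $\ttphi_\alpha(\ttbeta^\alpha_t)\leq\ttphi_\alpha(\beta^\star_{\ell_1})$, which is far stronger than anything the Bregman monotonicity can deliver. This is not a fixable bookkeeping issue: the correct consequence is essentially vacuous here, because in one dimension $\breg(y,x)\approx h_\alpha(y)+\tfrac{|x|}{2\ln(1/\alpha)}-y\,h_\alpha'(x)$ for large $|x|$, so the constraint $\breg(\beta^\star_{\ell_1},\ttbeta^\alpha_t)\leq\ttphi_\alpha(\beta^\star_{\ell_1})$ only forces $\Vert\ttbeta^\alpha_t\Vert\lesssim \ln(1/\alpha)\,\Vert\beta^\star_{\ell_1}\Vert_1$, a bound that degenerates as $\alpha\to0$. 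Monotonicity of the Bregman divergence alone cannot produce an $\alpha$-independent bound.

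The missing idea is the one the paper uses to bound the inner product by $\ttphi_\alpha(\beta^\star)$ rather than by $\ttphi_\alpha(\ttbeta^\alpha_t)$: integrate the flow to get $\nabla\ttphi_\alpha(\ttbeta^\alpha_t)=-\tfrac{t}{n}X^\top X(\bar\beta^\alpha_t-\beta^\star)$ with $\bar\beta^\alpha_t$ the averaged iterate, apply Cauchy–Schwarz to $\tfrac{t}{n}(\bar\beta^\alpha_t-\beta^\star)^\top X^\top X(\ttbeta^\alpha_t-\beta^\star)$, and invoke the two rates of \Cref{app:prop:convergence_rates} ($\Vert X(\ttbeta^\alpha_t-\beta^\star)\Vert^2\leq n\ttphi_\alpha(\beta^\star)/t$ and the same for the average) to obtain $\langle\nabla\ttphi_\alpha(\ttbeta^\alpha_t),\ttbeta^\alpha_t-\beta^\star\rangle\leq\ttphi_\alpha(\beta^\star)$. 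Convexity then yields $\ttphi_\alpha(\ttbeta^\alpha_t)\leq 2\ttphi_\alpha(\beta^\star)$, and the sandwich $\Vert\beta\Vert_1-1\leq\ttphi_\alpha(\beta)\leq\tfrac32\Vert\beta\Vert_1$ for $\alpha<\alpha_0$ finishes the proof — no coordinatewise analysis of $h_\alpha$ is needed. Your scalar estimates in the second and third paragraphs are mostly sound (e.g.\ $h_\alpha(x)\geq\tfrac12xh_\alpha'(x)$ does hold by concavity of $h_\alpha'$ on $\R_{\geq0}$), but they are downstream of an inequality you have not actually established.
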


\begin{proof}
From \Cref{eq:app:process_integral_form}, integrating and using that $L$ is the quadratic loss, we get:
\begin{align*}
    \nabla \ttphi_\al( \ttbeta_t^\al) =  \frac{t}{n}  X^\top ( y - X \bar \beta_t^\alpha)= -\frac{t}{n}  X^\top X (  \bar \beta_t^\alpha-\beta^\star),
\end{align*} 
where we recall that $X \in \R^{n \times d}$ is the input data represented as a matrix and
where we denote the averaged iterate by $\bar \beta_t^\alpha = \frac{1}{t}\int_0^t \ttbeta_s^\alpha \dd s$. Thus we get 
\begin{align}\label{app:eq:bounding_iterates}
    \nabla \ttphi_\al( \ttbeta_t^\al)^\top (\ttbeta_t^\al-\beta^\star) = -  \frac{t}{n} (  \bar \beta_t^\alpha - \beta^\star)^\top  X^\top  X (\ttbeta_t^\al-\beta^\star). 
\end{align}
By convexity of $\ttphi_\al$ we have $\ttphi_\al( \beta_t^\al)-\ttphi_\al(\beta^\star)\leq  \nabla \ttphi_\al( \beta_t^\al)^\top (\beta_t^\al-\beta^\star)$.
By the Cauchy-Schwarz inequality, we also have $ (  \bar \beta_t^\alpha - \beta^\star)^\top  X^\top  X (\beta_t^\al-\beta^\star) \leq \| X (\beta_t^\al-\beta^\star) \| \| X(\bar \beta_t^\alpha - \beta^\star)\|$. Using \Cref{app:prop:convergence_rates}: $\| X( \beta_t^\alpha - \beta^\star)\|^2 \leq n {\ttphi_\alpha}(\beta^\star)/t$ and $\| X(\bar \beta_t^\alpha - \beta^\star)\|^2 \leq n {\ttphi_\alpha}(\beta^\star)/t$ we can further bound the right hand side of \Cref{app:eq:bounding_iterates} as
\begin{align*}
    -  \frac{t}{n} (  \bar \beta_t^\alpha - \beta^\star)^\top  X^\top  X (\beta_t^\al-\beta^\star) \leq {\tilde{\phi}_\alpha}(\beta^\star).
\end{align*}
Thus it yields 
\begin{align*}
  \ttphi_\al( \beta_t^\al)-\ttphi_\al(\beta^\star) \leq \ttphi_\al(\beta^\star).
\end{align*}
From \cite{woodworth2020kernel} (proof of Lemma 1 in the appendix) we get that for 
$$\alpha <  \min \left\{1, \sqrt{\|\beta\|_1},\left(2\|\beta\|_1\right)^{-1}\right\} $$
then:
\begin{align*}
 \ttphi_\alpha(\beta) \leq \frac{3}{2} \Vert \beta \Vert_1,
\end{align*}
and for all $\alpha < \exp(-d/2)$:
\begin{align*}
\ttphi_\alpha(\beta) &\geq \Vert \beta \Vert_1 - \frac{d}{\ln(1 / \alpha^2 )}  \\
&\geq \Vert \beta \Vert_1 - 1 ,
\end{align*}
which finally leads for
$$\alpha < \alpha_0 \coloneqq \min \left\{1, \sqrt{\|\beta^\star_{\ell_1}\|_1},\left(2\|\beta^\star_{\ell_1}\|_1\right)^{-1}, \exp(-d/2) \right\} $$
to the result.
\end{proof}

The following proposition shows that we can bound the path length of the flow $\Tilde{\beta}^\alpha$ independently of $\alpha$. Keep in mind that the path length of $\tilde{\beta}^\alpha$ is equivalent to that of $\beta^\alpha$ as the first is just an acceleration of the second: $\tilde{\beta}^\alpha_t = \beta^\alpha_{\ln(1/\alpha) t}$.

\begin{prop}\label{app:prop:bounded_path_length}
For $\alpha < \alpha_0$ where $\alpha_0$ is the same as in \Cref{app:prop:bounded_iterates}, the path length of the iterates $(\beta_t^\alpha)_{t \geq 0}$ is bounded independently of $\alpha > 0$:
$$ \int_0^{+ \infty} \Vert  \dot{\beta}^\alpha_t \Vert \dd t < C,$$
where $C$ does not depend on $\alpha$. Hence the path length of the accelerated flow $\ttbeta^\alpha$ is also bounded independently of $\alpha$.
\end{prop}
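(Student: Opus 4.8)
The goal is to bound $\int_0^\infty \|\dot\beta^\alpha_t\|\,\dd t$ uniformly in $\alpha$, equivalently (by the time change $\tilde\beta^\alpha_t = \beta^\alpha_{\ln(1/\alpha)t}$) to bound $\int_0^\infty \|\dot{\tilde\beta}^\alpha_t\|\,\dd t$. The plan is to work in the mirror coordinates. Recall from \Cref{eq:app:process_integral_form} that $\ddt \nabla\tilde\phi_\alpha(\tilde\beta^\alpha_t) = -\nabla L(\tilde\beta^\alpha_t)$, so writing $z^\alpha_t := \nabla\tilde\phi_\alpha(\tilde\beta^\alpha_t)$ we have $\dot z^\alpha_t = -\nabla L(\tilde\beta^\alpha_t)$ and $\tilde\beta^\alpha_t = \nabla\tilde\phi_\alpha^*(z^\alpha_t)$ for the convex conjugate $\tilde\phi_\alpha^*$. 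Then $\dot{\tilde\beta}^\alpha_t = \nabla^2\tilde\phi_\alpha^*(z^\alpha_t)\,\dot z^\alpha_t = -\nabla^2\tilde\phi_\alpha(\tilde\beta^\alpha_t)^{-1}\nabla L(\tilde\beta^\alpha_t)$. Since $\nabla^2\tilde\phi_\alpha$ is diagonal with entries $\tfrac{1}{\ln(1/\alpha)}\cdot\tfrac{1}{\sqrt{\beta_i^2+\alpha^4}}$, its inverse has entries $\ln(1/\alpha)\sqrt{\beta_i^2+\alpha^4}$, so on the support region $\|\dot{\tilde\beta}^\alpha_t\|$ is controlled by $\ln(1/\alpha)\sqrt{\|\tilde\beta^\alpha_t\|_\infty^2+\alpha^4}\cdot\|\nabla L(\tilde\beta^\alpha_t)\|$; using \Cref{app:prop:bounded_iterates} this is where boundedness of the iterates enters.

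The cleanest route is to use the Bregman divergence as a Lyapunov function and exploit that $L$ is a quadratic. First I would write, using $\ddt \breg_{\tilde\phi_\alpha}(\beta^\star_{\ell_1},\tilde\beta^\alpha_t) = -\nabla L(\tilde\beta^\alpha_t)^\top(\tilde\beta^\alpha_t-\beta^\star_{\ell_1}) = -2(L(\tilde\beta^\alpha_t)-L(\beta^\star_{\ell_1}))$ from \Cref{app:prop:convergence_rates}, that
\begin{align*}
\int_0^\infty \big(L(\tilde\beta^\alpha_t)-L(\beta^\star_{\ell_1})\big)\,\dd t = \tfrac12 \breg_{\tilde\phi_\alpha}(\beta^\star_{\ell_1},\mathbf 0) = \tfrac12\tilde\phi_\alpha(\beta^\star_{\ell_1}) \leq \tfrac34\|\beta^\star_{\ell_1}\|_1
\end{align*}
for $\alpha<\alpha_0$, by the same estimate from \cite{woodworth2020kernel} used in \Cref{app:prop:bounded_iterates}. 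Since $L$ is quadratic, $L(\beta)-L(\beta^\star_{\ell_1}) = \tfrac1{2n}\|X(\beta-\beta^\star_{\ell_1})\|^2$, so this controls $\int_0^\infty \|X(\tilde\beta^\alpha_t - \beta^\star_{\ell_1})\|^2\,\dd t$, and hence also $\int_0^\infty\|\nabla L(\tilde\beta^\alpha_t)\|^2\,\dd t = \int_0^\infty \tfrac1{n^2}\|X^\top X(\tilde\beta^\alpha_t-\beta^\star_{\ell_1})\|^2\,\dd t$, uniformly in $\alpha$. Next I would bound the length: $\|\dot{\tilde\beta}^\alpha_t\| \le \ln(1/\alpha)\sqrt{\|\tilde\beta^\alpha_t\|_\infty^2+\alpha^4}\,\|\nabla L(\tilde\beta^\alpha_t)\| \le \ln(1/\alpha)\,C_1\,\|\nabla L(\tilde\beta^\alpha_t)\|$ with $C_1 := 3\|\beta^\star_{\ell_1}\|_1+2$ (valid for $\alpha<\alpha_0$, using \Cref{app:prop:bounded_iterates} and $\alpha<1$). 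This gives a factor $\ln(1/\alpha)$ which must be absorbed — that is the main obstacle, addressed below.

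To kill the $\ln(1/\alpha)$ I would use the time reparametrisation already in the paper: undo the acceleration and work with $\beta^\alpha_t = \tilde\beta^\alpha_{t/\ln(1/\alpha)}$, so that $\int_0^\infty\|\dot\beta^\alpha_t\|\,\dd t = \int_0^\infty \|\dot{\tilde\beta}^\alpha_s\|\,\dd s$ (the change of variables cancels exactly), and then bound $\int_0^\infty\|\dot{\tilde\beta}^\alpha_s\|\,\dd s$ directly. From $\dot{\tilde\beta}^\alpha_s = -\nabla^2\tilde\phi_\alpha(\tilde\beta^\alpha_s)^{-1}\nabla L(\tilde\beta^\alpha_s)$ and $\dot z^\alpha_s = -\nabla L(\tilde\beta^\alpha_s)$, I would instead estimate $\|\dot{\tilde\beta}^\alpha_s\|$ via a weighted Cauchy--Schwarz against the energy dissipation. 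Concretely, $\ddt\tilde\phi_\alpha(\tilde\beta^\alpha_t) = \nabla\tilde\phi_\alpha(\tilde\beta^\alpha_t)^\top\dot{\tilde\beta}^\alpha_t$ and $\ddt\breg_{\tilde\phi_\alpha}(\beta^\star,\tilde\beta^\alpha_t)=-\dot{\tilde\beta}^{\alpha\top}_t\nabla^2\tilde\phi_\alpha(\tilde\beta^\alpha_t)^{-1}$... wait — rather, the clean identity is $\ddt\breg_{\tilde\phi_\alpha}(\beta^\star,\tilde\beta^\alpha_t) = -2(L(\tilde\beta^\alpha_t)-L^\star)$ combined with the fact that $\|\dot{\tilde\beta}^\alpha_t\|^2_{\nabla^2\tilde\phi_\alpha} = \dot{\tilde\beta}^{\alpha\top}_t\nabla^2\tilde\phi_\alpha\dot{\tilde\beta}^\alpha_t = -\nabla L(\tilde\beta^\alpha_t)^\top\dot{\tilde\beta}^\alpha_t = -\ddt L(\tilde\beta^\alpha_t)$, so that $\int_0^\infty \|\dot{\tilde\beta}^\alpha_t\|^2_{\nabla^2\tilde\phi_\alpha}\,\dd t = L(\mathbf 0)-L^\star$, bounded independently of $\alpha$. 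Then $\|\dot{\tilde\beta}^\alpha_t\| \le \lambda_{\max}(\nabla^2\tilde\phi_\alpha^{-1})^{1/2}\|\dot{\tilde\beta}^\alpha_t\|_{\nabla^2\tilde\phi_\alpha} \le (\ln(1/\alpha)C_1)^{1/2}\|\dot{\tilde\beta}^\alpha_t\|_{\nabla^2\tilde\phi_\alpha}$, and by Cauchy--Schwarz $\int_0^T\|\dot{\tilde\beta}^\alpha_t\|\,\dd t \le (\ln(1/\alpha)C_1)^{1/2}\,(L(\mathbf 0)-L^\star)^{1/2}\,T^{1/2}$ — still $\alpha$-dependent and $T$-dependent. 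The honest resolution, and the step I expect to be the real crux, is to combine \emph{both} bounds: on a fixed compact $[0,T]$ use the $\nabla^2\tilde\phi_\alpha$-weighted $L^2$ bound together with the $\ell^2$ bound on $\int\|\nabla L\|^2$ and a lower bound $\lambda_{\min}(\nabla^2\tilde\phi_\alpha)\ge \tfrac{1}{\ln(1/\alpha)}\cdot\tfrac{1}{\sqrt{C_1^2+1}}$ to get $\int_0^T\|\dot{\tilde\beta}^\alpha_t\|\,\dd t \le (\ln(1/\alpha))^{1/2}\cdot(\ln(1/\alpha))^{-1/2}\cdot(\cdots) = $ bounded, where the two powers of $\ln(1/\alpha)$ cancel; for the tail $t>T$ one uses that $L(\tilde\beta^\alpha_t)-L^\star \le \tilde\phi_\alpha(\beta^\star_{\ell_1})/(2t) \le C/t$ decays, so the iterates are confined near $\mathcal S$ where $\nabla L$ is small, and a Polyak--Łojasiewicz-type argument for the quadratic $L$ (its restriction to the relevant affine directions is strongly convex by \Cref{ass:general_pos}, or one passes through the Bregman divergence) gives geometric/integrable decay of $\|\dot{\tilde\beta}^\alpha_t\|$ independent of $\alpha$. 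Assembling the compact part and the tail yields the uniform bound $C$; the delicate point throughout is tracking that the diagonal Hessian weights $\sqrt{\beta_i^2+\alpha^4}$ stay between $\alpha^2$ and $C_1$ and that the $\ln(1/\alpha)$ factors appearing in the acceleration and in $\nabla^2\tilde\phi_\alpha$ are exactly the same and cancel.
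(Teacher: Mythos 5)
There is a genuine gap at the step you yourself identify as ``the real crux''. Your two energy estimates --- $\int_0^\infty \|\dot{\tilde{\beta}}^\alpha_t\|^2_{\nabla^2\tilde{\phi}_\alpha}\,\dd t \le L(\mathbf{0})-L^\star$ and the $L^2$-in-time bound on $\|\nabla L(\tilde{\beta}^\alpha_t)\|$ --- do not combine to cancel the $\ln(1/\alpha)$ factor. The only inequality available is $\|\dot{\tilde{\beta}}^\alpha_t\|\le \lambda_{\max}\big(\nabla^2\tilde{\phi}_\alpha^{-1}\big)^{1/2}\,\|\dot{\tilde{\beta}}^\alpha_t\|_{\nabla^2\tilde{\phi}_\alpha}$, and Cauchy--Schwarz in time then yields $\int_0^T\|\dot{\tilde{\beta}}^\alpha_t\|\,\dd t \le (C_1 T\ln(1/\alpha))^{1/2}(L(\mathbf{0})-L^\star)^{1/2}$; inserting the lower bound $\lambda_{\min}(\nabla^2\tilde{\phi}_\alpha)\gtrsim 1/\ln(1/\alpha)$ elsewhere does not remove this factor, and you never exhibit the claimed cancellation. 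It cannot be exhibited by these means: near a saddle the inactive coordinates sit at scale $\alpha^{2(1-|s_t[i]|)}$, so the metric $\nabla^2\tilde{\phi}_\alpha$ degenerates anisotropically (eigenvalues ranging from $\sim 1/(\ln(1/\alpha)C_1)$ down to directions weighted by $\alpha^{2}$), and no uniform comparison between the Euclidean length and the $\nabla^2\tilde{\phi}_\alpha$-weighted energy survives as $\alpha\to 0$. The tail argument has the same problem: $L(\tilde{\beta}^\alpha_t)-L^\star\le C/t$ gives $\|\dot{\tilde{\beta}}^\alpha_t\|\lesssim \ln(1/\alpha)\sqrt{C/t}$, which is not integrable, and a Polyak--{\L}ojasiewicz/strong-convexity rate in the mirror geometry degrades with $\alpha$ through the same degenerate Hessian weights; moreover for $t<t_p$ the iterates sit at saddles where $L-L^\star$ is bounded away from $0$, so no such tail regime is available there.

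The paper avoids all of this by leaving the $\beta$-coordinates entirely: it passes to the weights $w=(u,v)$, where the dynamics is a genuine Euclidean gradient flow of the \emph{fixed} polynomial $F$ (independent of $\alpha$; only the initial condition depends on $\alpha$). Boundedness of $\beta^\alpha_t$ (\Cref{app:prop:bounded_iterates}) plus \Cref{app:lemma:bijection} gives boundedness of $w_t$, and Kurdyka's theorem on the finite length of bounded gradient-flow trajectories of semialgebraic functions \citep[Theorem 2]{kurdyka1998gradient} yields $\int_0^\infty\|\dot{w}_t\|\,\dd t<C$ with $C$ depending only on $F$ and the bound on the trajectory, hence uniform in $\alpha$; the bound on $\int\|\dot{\beta}^\alpha_t\|\,\dd t$ follows from $\dot{\beta}=\dot{u}\odot v+u\odot\dot{v}$. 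Some desingularisation input of this kind (a Kurdyka--{\L}ojasiewicz inequality uniform over the trajectory) appears unavoidable; if you want to repair your argument, you would need to import it, at which point you are essentially reproducing the paper's proof.
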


\begin{proof}
Having shown that the iterates $\beta_t^\alpha$ are bounded independently of $\al$, it also implies that the iterates $w_t=(u_t, v_t) $ are bounded following Lemma \ref{app:lemma:bijection}. Since the loss $w \mapsto F(w)$ is a multivariate polynomial function, it is a semialgebraic function and we can consequently apply the result of~\citet[][Theorem 2]{kurdyka1998gradient} which grants that  
\begin{align*}
    \int_0^{+ \infty} \Vert  \dot{w_t} \Vert \dd t < C,
\end{align*}
where the constant $C$ only depends on the loss and on the bound on the iterates.
We further use that $\dot \beta = \dot u \odot v + u \odot \dot v $ and $\|  \dot u \odot v + u \odot \dot v\Vert \leq C_1 (\| \dot u\| + \| \dot v\|)$ using that $u$ and $v$ are bounded and $\| \dot u\| + \| \dot v\| \leq C_2 \| \dot w \|$ using the equivalence of norms. Therefore  $\int_0^{+ \infty} \Vert  \dot{\beta}^\alpha_t \Vert \dd t < C$ for some $C$ which is independent of the initialisation scale $\al$.
\end{proof}

% Old version:
% \begin{lem}
% For all $t \geq 0 $, it must hold that:
% \begin{align}
%     s_t = \int_0^t L(\beta_s) \dd s \in \sgn (\beta_t).
% \end{align}
% \end{lem}
% \begin{proof}
%     Since $\beta_t^\alpha$ is bounded independently of $\alpha$, for all $\varepsilon > 0$, there exists $\alpha$ small enough such that $\nabla \phi_\alpha(\beta_t^\alpha) \in [-1 - \varepsilon, 1+\varepsilon]$. Therefore $s_t \in [-1, 1]$ is always true.

%     Now if $\lim \beta^\alpha_t(i) = \beta_t(i) > 0$, then for $\alpha$ small enough and $\varepsilon > 0 $, $\beta_t^\alpha(i) \in [\varepsilon, B]$ and $\nabla \phi_\alpha(\beta_t^\alpha)(i) \to 1$.
%     The case $\lim \beta^\alpha_t(i) = \beta_t(i) < 0$ is treated exactly the same way. 
%     Putting everything together we get the result.
% \end{proof}

\newpage

\section{Standalone properties of Algorithm \ref{alg:algo}}\label{app:sec:rescaled}

% \vspace*{0.2em}
\subsection{``Well-definedness'' of Algorithm \ref{alg:algo} and upperbound on its number of loops}\label{app:sec:proofprop2}
% \hspace*{0.1em}
% \vspace*{0.5em}

% \color{blue} 

Notice that this proposition highlights the fact that Algorithm~\ref{alg:algo} is on its own an algorithm of interest for finding the minimum $\ell_1$-norm solution in an overparametrised regression setting. We point out that the provided upperbound on the number of iterations is very crude and could certainly be improved. 

\begin{restatable}{prop}{lemmaalg}\label{lemma:alg_defined}
Algorithm~\ref{alg:algo} is well defined: at each iteration (i) the attribution of $\Delta$ is well defined as $\Delta < + \infty$, (ii) the constrained minimisation problem has a unique solution and the attribution of the value of $\beta$ is therefore well-founded. Furthermore, along the loops: the iterates $\beta$ have at most $n$ non-zero coordinates, the loss is strictly decreasing and the algorithm terminates in at most $\min \big (2^d, \sum_{k=0}^{n} {d \choose k} \big )$ steps by outputting the minimum $\ell_1$-norm solution $\beta^\star_{\ell_1} \coloneqq \underset{\beta \in \argmin L}{\text{arg min}} \Vert \beta \Vert_1$.
\end{restatable}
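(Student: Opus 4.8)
The plan is to prove the four claims of Proposition~\ref{lemma:alg_defined} in roughly the order listed, since each relies on the previous ones. First I would handle claim (i): at a given loop, $\beta$ is a saddle with $\nabla L(\beta)\neq\mathbf{0}$, so the active set $\mathcal{A}=\{j:\nabla L(\beta)(j)\neq 0\}$ is nonempty. For each $i\in\mathcal{A}$ the affine function $\delta\mapsto s(i)-\delta\nabla L(\beta)(i)$ is strictly monotone and nonconstant, hence it reaches $+1$ or $-1$ at some finite $\delta>0$ (this uses the invariant, to be maintained inductively, that $s\in[-1,1]^d$ at the start of each loop, so no coordinate has already overshot). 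Taking the infimum over $i\in\mathcal{A}$ gives $\Delta<+\infty$. One should also check $\Delta>0$: this holds because for $i\in\mathcal{A}$ we will have $|s(i)|<1$ strictly at the start of the loop (a coordinate with $|s(i)|=1$ forces $\nabla L(\beta)(i)=0$ by the optimality conditions of the constrained minimisation that produced $\beta$, hence $i\notin\mathcal{A}$), so the hitting time is bounded below.

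Next, claim (ii) — uniqueness of the constrained minimiser — is the main obstacle and where Assumption~\ref{ass:general_pos} (general position) must be used in an essential way. The constraint set $C_s=\{\beta: \beta_i\geq 0 \text{ if } s(i)=1,\ \beta_i\leq 0 \text{ if } s(i)=-1,\ \beta_i=0 \text{ if } s(i)\in(-1,1)\}$ is a closed convex polyhedral cone (intersected with the coordinate subspace indexed by $E=\{i:|s(i)|=1\}$), and $L$ is convex, so a minimiser exists; the issue is that $L$ is not strictly convex on $C_s$ when $\dim$ of the active subspace exceeds $n$. The argument I would run: suppose $\beta,\beta'$ are both minimisers; then $X\beta=X\beta'=:\hat y$ (the fitted values are unique for a convex quadratic) and the KKT conditions give that $-\nabla L$ evaluated at either point equals the same vector $\mu$ with $\mu_i$ having the sign dictated by $s(i)$ on the support, and $\mu_i=0$ on coordinates in $E$ strictly interior to their constraint. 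The vector of signs/support then pins down a face of the polyhedron; general position forces the corresponding columns $\tilde x_j$ of $X$ to be affinely independent in the appropriate sense, which makes the representation of $\hat y$ as a combination of those columns unique — hence $\beta=\beta'$. This is exactly the mechanism by which Lasso uniqueness is proved under general position (cf. \citet{tibshirani2013lasso}), adapted to the sign-constrained rather than penalised formulation, and I would lean on that analogy. Along the way this also proves the side remark in the main text that any such $\beta_c$ satisfies $\beta_c=\argmin_{\beta_i=0,\ i\notin\supp(\beta_c)}L(\beta)$.

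For the remaining claims I would proceed as follows. The bound ``at most $n$ nonzero coordinates'': if a minimiser $\beta$ of the constrained problem had more than $n$ nonzero entries, then since $\nabla L(\beta)(i)=0$ on $\supp(\beta)$ we would have $X_{\supp(\beta)}^\top(X\beta-y)=0$ with $|\supp(\beta)|>n$; by general position the only way this is consistent (together with $\beta$ being a genuine constrained minimiser rather than a global interpolator) is that $y\in\operatorname{span}(\tilde x_i)_{i\in\supp(\beta)}$, forcing $L(\beta)=0$ — but then $\beta$ is a global minimum and the while-loop has terminated; hence at every nontrivial loop $|\supp(\beta)|\leq n$. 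Strict decrease of the loss: after the update, $s$ changes and the new $\beta_{k+1}$ lies in a constraint set $C_{s_{k+1}}$ that strictly contains the previous iterate's feasible configuration (the coordinate that just hit $\pm1$ is newly released), so $L(\beta_{k+1})\leq L(\beta_k)$; equality would force $\beta_{k+1}=\beta_k$ by the uniqueness just established, but then $\nabla L(\beta_k)$ would have to satisfy the new optimality conditions including vanishing on the newly released coordinate, contradicting $i\in\mathcal{A}_k$. Termination: each saddle is determined by its support together with the sign pattern, the loss strictly decreases so no saddle is revisited, and there are at most $\sum_{k=0}^{n}\binom{d}{k}$ supports of size $\leq n$ (and at most $2^d$ supports overall), giving the stated bound; since the loop exits only when $\nabla L(\beta)=\mathbf{0}$, the output is a point with $-\nabla L(\beta)\in\partial\|\beta\|_1$ (read off from the terminal $s$), which is precisely the optimality condition characterising $\beta^\star_{\ell_1}$, using Assumption~\ref{ass:general_pos} to guarantee that minimiser is unique. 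Finally I would note that the invariant $s\in[-1,1]^d$ used in step (i) is preserved: the update moves $s$ only until the first coordinate touches $\pm1$, so no overshoot occurs, closing the induction.
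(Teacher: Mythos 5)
Your overall route matches the paper's: finiteness and positivity of $\Delta$ from the KKT conditions of the previous constrained problem, uniqueness of the constrained minimiser via the general-position assumption in the style of Tibshirani's Lasso argument, strict decrease from feasibility of the old iterate plus forced activation of the coordinate that hit $\pm 1$, and termination by counting supports of size at most $n$. However, there is a genuine gap at the central step (ii). You assert that ``general position forces the corresponding columns $\tilde x_j$ to be affinely independent in the appropriate sense,'' but Assumption~\ref{ass:general_pos} is a statement about \emph{affine} spans of \emph{signed} columns, and it does not directly give linear independence of $X_I$ for $I=\{i:|s(i)|=1\}$. The missing mechanism is the dual certificate: because each increment of $s$ is $-\Delta\,\nabla L(\beta)=\tfrac{\Delta}{n}X^\top(y-X\beta)$, the accumulated $s$ always lies in the row space of $X$, so one can write $s=X^\top r$ with $\langle \tilde x_j,r\rangle=s_j\in\{\pm1\}$ for every $j\in I$. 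Pairing any putative linear dependence $\tilde x_i=\sum_{j\in I\setminus\{i\}}c_j\tilde x_j$ with $r$ yields $1=\sum_j s_is_jc_j$, i.e.\ $s_i\tilde x_i$ lies in the \emph{affine} span of $\{s_j\tilde x_j\}$ with coefficients summing to one — and only then does Assumption~\ref{ass:general_pos} apply. The multiplier you invoke, $\mu=-\nabla L$ at the minimiser, is the wrong object for this purpose: it vanishes on the support rather than equalling $\pm1$ on $I$. Without exhibiting $r$ and tracking the invariant $s\in\mathrm{row}(X)$ across loops, the uniqueness claim (and hence the well-definedness of the assignment step) is not established.

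Two smaller points. Your argument for ``at most $n$ nonzero coordinates'' again invokes general position to conclude $y\in\operatorname{span}(\tilde x_i)_{i\in\supp(\beta)}$, which Assumption~\ref{ass:general_pos} does not directly provide; the clean route (and the paper's) is that the full-rank property of $X_I$ established in step (ii) already forces $|I|\le n$, and $\supp(\beta)\subseteq I$. Finally, the terminal optimality condition should read $s_p=X^\top r\in\partial\|\beta_p\|_1$ together with $\nabla L(\beta_p)=\mathbf{0}$ (the KKT system for $\min_{\beta\in\argmin L}\|\beta\|_1$), not $-\nabla L(\beta_p)\in\partial\|\beta_p\|_1$, which is vacuous at a global minimiser unless $\beta_p=\mathbf{0}$.
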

 
% \vspace{0.3em}

% \lemmaalg*

\begin{proof}
In the following, for the matrix $X$ and for a subset $I = \{i_1, \dots, i_k\} \subset [d]$, we write $X_I = (\tilde{x}_{i_1}, \dots, \tilde{x}_{i_k}) \in \R^{n \times k}$ (we extract the columns from $X$). For a vector $\beta \in \R^d$ we write $\beta_I = (\beta_{i_1}, \dots, \beta_{i_k})$.
% and $\beta[- I] = (\beta_i)_{i \notin I}$.

% \vspace*{0.5em}
\textbf{(1) The constrained minimisation problem has a unique solution:} we follow the proof of \cite[Lemma 2]{tibshirani2013lasso}. Following the notations in Algorithm \ref{alg:algo}, we define $I = \{ i \in [d], \ |s_i| = 1 \}$ and we point out that after $k$ loops of the algorithm, the value of $s$ is equal to $s = - (\Delta_1 \nabla  L(\beta_0) + \dots + \Delta_k \nabla L(\beta_{k-1})) \in \mathrm{span}(x_1, \dots, x_n)$. We can therefore write $s = X^\top r$ for some $r \in \R^n$. 

Now assume that $\mathrm{ker}(X_I) \not = \{0\}$. Then, for some $i \in I$, we have $\tilde{x}_i = \sum_{j \in I \setminus \{i\}} c_j \tilde{x}_j$ where $c_j \in \R$. Without loss of generality, we can assume that $I \setminus \{i\}$ has at most $n$ elements. Indeed, we can otherwise always find $n$ elements $\tilde{I} \subset I \setminus \{i\}$ such that $\tilde{x}_i = \sum_{j \in \tilde{I}} c_j \tilde{x}_j$. Rewriting the previous equality, we get
\begin{align}\label{app:eq:equicorset}
s_i \tilde{x}_i = \sum_{j \in I \setminus \{i\}} (s_i s_j c_j) (s_j \tilde{x}_j).
\end{align}
Now by definitions of the set $I$ and of $r$, we have that $\langle \tilde{x}_j, r \rangle = s_j \in \{+1, -1\}$ for any $j \in I$. Taking the inner product of \cref{app:eq:equicorset} with $r$, we obtain that $1 = \sum_{j \in I \setminus \{i\}} (s_i s_j c_j)$. %This equality means that we have shown that if $\mathrm{ker}(X_I) \not = \{0\}$ then for some $i \in I$,
Consequently, we have shown that if $\mathrm{ker}(X_I) \not = \{0\}$, then we necessarily have for some $i \in I$,
 \begin{align*}
s_i \tilde{x}_i = \sum_{j \in I \setminus \{i\}} a_j (s_j \tilde{x}_j),
\end{align*}
with $\sum_{j \in I \setminus \{i\}} a_j = 1$, which means that $s_i \tilde{x}_i$ lies in the affine space generated by $(s_j \tilde{x}_j)_{j \in I \setminus \{i\}}$.  This fact is however impossible due to \Cref{ass:general_pos} (recall that without loss of generality we have that $I \setminus \{i\}$ has at most $n$ elements, and trivially less that $d$ elements). \textbf{Therefore $X_I$ is full rank}, and $\mathrm{Card}(I) \leq n$. Now notice that the constrained minimisation problem corresponds to $\argmin_{\substack{\beta_i \geq 0, i \in I_+ \\  \beta_i \leq 0, i \in I_-  }} \Vert y - X_I \beta_I \Vert_2^2$. Since $X_I$ is full rank, this restricted loss is strictly convex and the constrained minimisation problem \textbf{has a unique minimum.} 

% \vspace*{0.5em}
\textbf{(2) $\Delta < + \infty$:} Notice that the optimality conditions of 
\[\beta = \argmin_{\substack{\beta_i \geq 0, i \in I_+ \\  \beta_i \leq 0, i \in I_- \\ \beta_i = 0, i \notin I }} \Vert y - X_I \beta_I \Vert_2^2, \]
are (i) $\beta$ satisfies the constraints, (ii) if $i \in I_+$ (resp $i \in I_-$) then $ [-\nabla L(\beta)]_i \leq 0$ (resp $ [-\nabla L(\beta)]_i \geq 0$) and (iii) if $\beta_i \neq 0$ then $[\nabla L(\beta)]_i = 0$. One can notice that condition (ii) ensures that at each iteration, for $\delta \leq \Delta_k$, $s_{k-1} - \delta \nabla L(\beta_{k-1}) \in [-1, 1]$ coordinate wise. Also, if $L(\beta_{k-1}) \neq \mathbf{0}$, then a coordinate of the vector $|s_{k-1} - \delta \nabla L(\beta_{k-1})|$ must necessarily hit $1$, this value of $\delta$ corresponds to $\Delta_k$.

% $\Delta_k$ be the succesive attributions of $\D$

%\vspace*{1em}
\textbf{(3) The loss is strictly decreasing:}  
Let $I_{k-1, \pm}$ and $I_{k, \pm}$ be the equicorrelation sets defined in the algorithm at step $k-1$ and $k$, and $\beta_{k-1}$ and $\beta_k$ the solutions of the constrained minimisation problems. Also, let $i_k$ be the newly added coordinate which breaks the constraint at step $k$ (which we assume to be unique for simplicity). Without loss of generality, assume that $s_k(i_k) = +1$. Since the sets $I_{k-1,+}\setminus \big(I_{k, +} \setminus \{i_k\} \big)$ and $I_{k-1, -} \setminus I_{k, -}$ are (if not empty) only composed of indexes of coordinates of $\beta_{k-1}$ which are equal to $0$, one can notice that $\beta_{k-1}$ also satisfies the new constraints at step $k$. Therefore $L(\beta_k) \leq L(\beta_{k-1})$. Now since $[- \nabla L(\beta_{k-1})]_{i_k} > 0$, from the strict convexity of the restricted loss on $I_k$, this means that $\beta_k(i_k) > 0$ (which also means that newly activated coordinate $i_k$ \textbf{must activate}), and therefore  $\beta_{k-1} \neq \beta_k$ and $L(\beta_k) < L(\beta_{k-1})$.

%\vspace*{1em}

\textbf{(4) The algorithm terminates in at most $\min \Big (2^d, \sum_{k=0}^{n} {d \choose k} \Big )$steps:}  Recall that we showed in part (1) of the proof that at each iteration $k$ of the algorithm, $I_k$ as at most $\min(n, d)$ elements. Since $\mathrm{supp}(\beta_k) \subset I_k$, we have that $\beta_k$ has at most $\min(n, d)$ non-zero elements, also recall that we always have $\beta_k = \argmin_{\beta_i = 0, i\notin \supp(\beta_k)} L(\beta)$ (we here have unicity of this minimisation problem following part (1) of the proof). There are hence at most $$\sum_{k=0}^{\min(n, d)} {d \choose k} = \min \Big (2^d, \sum_{k=0}^{n} {d \choose k} \Big )$$ such minimisation problems. The loss being strictly decreasing, the algorithm cannot output the same solution $\beta$ at two different loops, and the algorithm must terminate in at most $\min \Big (2^d, \sum_{k=0}^{n} {d \choose k} \Big )$ iterations by outputting a vector $\beta^\star$ such that $\nabla L(\beta^\star) = 0$, \emph{i.e.} $\beta^\star \in \argmin L(\beta)$.

%\vspace*{1em}
\textbf{(5) The algorithm outputs the minimum $\ell_1$-norm solution.} Let $\beta^\star$ be the output of the algorithm after $p$ iterations.
%, notice that by construction of the successive values of $(s_k)_{k \leq p}$ in the algorithm. 
Notice that by the definition of the successive sets $I_{k, \pm}$ and of the constraints on the minimisation problem, we have that at each iteration $s_k \in \partial \Vert \beta_k \Vert_1$. Therefore   $s_p \in \partial \Vert \beta^\star \Vert_1$. Also, recall from part (1) of the proof that $s_p \in \mathrm{span}(x_1, \dots, x_n)$ which means that there exists $r \in \R^n$ such that $s_p = X^\top r$. Putting the two together we get that $X^\top r \in \partial \Vert \beta^\star \Vert_1$, this condition along with the fact that $L(\beta^\star) = \min L(\beta)$  are exactly the KKT conditions of $\underset{\beta \in \argmin L}{\text{arg min}} \Vert \beta \Vert_1$.
\end{proof}

To put our upperbound on the number of iterations into perspective, the worst-case number of iterations for the LARS algorithm is $(3^d + 1) / 2$~\citep{mairal2012complexity}. Hence Algorithm \ref{alg:algo} has fewer iterations in the worst-case setting.
Whether an exponential dependency in the dimension is inevitable for Algorithm~\ref{alg:algo} is unknown and we leave this as future work.

However, when the number of samples is much smaller than the dimension we lose the exponential dependency. Indeed, for $\varepsilon \coloneqq n / d \leq 1/2$, we have the upperbound $\sum_{k=0}^{n} {d \choose k} \leq 2^{H(\varepsilon) d}$ where $H(\varepsilon) = - \varepsilon \log_2(\varepsilon) - (1 - \varepsilon) \log_2(1 - \varepsilon)$ is the binary entropy. Since for $\varepsilon \leq 1/2$, $H(\varepsilon) \leq - 2\varepsilon \log_2(\varepsilon)$, we get the upperbound $\sum_{k=0}^{n} {d \choose k} \leq 2^{H(\varepsilon) d} \leq ( \frac{d}{n})^{2n}$, which is much better than $2^d$.

\subsection{Proof of \Cref{app:cor:RIP}}\label{app:subsec:RIP}

As mentioned several times, for general feature matrices $X$ complex behaviours can occur with coordinates deactivating and changing sign several times. Here we show that for simple datasets which have a feature matrix $X$ that satisfy the restricted isometry property (RIP) \citep{candestao}, we can simply determine the jump times and the saddles as a function of the sparse predictor which we seek to recover.

The non-realistic but enlightening extreme case of the RIP assumption is to consider that the feature matrix is such that $X^\top X / n = I_d$. In this case, by letting $\betas$ be the unique vector such that $y = \langle x, \betas \rangle$ and assuming that $\betas = (\betas_1, \dots, \betas_r, 0, \dots, 0)$ with $|\betas_1| > \dots > |\betas_r|>0$, then the loss writes $L(\beta) = \Vert \beta - \beta^\star \Vert_2^2 / 2$ and one can easily check that Algorithm \ref{alg:algo} would terminate in $r$ loops and
output exactly $t_i = \frac{1}{|\betas_i|}$ and $\beta_i = (\betas_1, \dots, \betas_i, 0, \dots, 0)$ for $i\leq r$ (the case where several coordinates of $\betas$ are stricly equal can also be treated: for example if $\betas_1 = \betas_2$ then the first output of the algorithm is directly $\beta_1 = (\betas_1, \betas_2, 0, \dots, 0)$).

We now recall the more realistic RIP setting which is an adaptation of the previous observation.

\RIPsetting

% \paragraph{RIP setting and gap assumption.} 
% We consider a sparse regression setting where there exists an $r$-sparse vector $\betas$ such that  $y_i = \langle x_i, \betas \rangle$. Furthermore we assume that the feature matrix $X \in \R^{n, d}$ satisfies the \textit{\textbf{$2r$-restricted isometry property}} with constant $\tvarepsilon < \sqrt{2}-1 < 1/2$: for all submatrix $X_s$ where we extract any $s \leq 2r$ columns of $X$, the matrix $X_s^\top X_s / n$ of size $s \times s$ has all its eigenvalues in the interval $[1 - \tvarepsilon, 1 + \tvarepsilon]$. Furthermore we assume that the $r$-sparse vector $\betas$ has coordinates which have a ``sufficient gap'. W.l.o.g we write $\betas = (\betas_1, \dots, \betas_r, 0, \dots, 0)$ with $|\betas_1| \geq \dots \geq |\betas_r|>0$ and  we define  $\lambda \coloneqq \min_{i \in [r]} (\betas_i - \betas_{i+1}) \geq 0$  which corresponds to the smallest gap between the entries of $\betas$. We assume that $5 \tvarepsilon \Vert \betas \Vert < \lambda / 2$ \textit{\textbf{(Gap assumption)}} and we let $\varepsilon \coloneqq 5 \tvarepsilon$.

A classic result from compressed sensing (see \citet[Theorem 1.2]{candes2008restricted}) is that the $2r$-restricted isometry property with constant $\sqrt{2}-1$ ensures that the minimum $\ell_0$-minimisation problem has a unique $r$-sparse solution which is $\beta^\star$. Furthermore it ensures that the minimum $\ell_1$-norm solution is unique and is equal to $\betas$. This means that Algorithm \ref{alg:algo} will have $\beta^\star$ as a final output.

We now recall the result which characterises the outputs of Algorithm \ref{alg:algo} when the data satisfies the previous assumptions.

\propRIP*

% \begin{prop}\label{app:cor:RIP}
% Under the restricted isometry property and the gap assumption stated right above,  Algorithm \ref{alg:algo} terminates in $r$-loops and outputs:
% \begin{align*}
%    &\beta_1 = ( \beta_1[1] , 0, \dots, 0) \ \ &&\text{with} \ \  \beta_1[1]  \in \big[\betas_1 - \varepsilon \Vert \beta^\star \Vert, \betas_2 + \varepsilon \Vert \beta^\star \Vert \big ] \\ 
%       &\beta_2 =  
%      (\beta_2[1], \beta_2[2], 0, \dots, 0) \ \ &&\text{with} \ \ \begin{cases}\ \beta_2[1] \in \big [\betas_1 - \varepsilon \Vert \beta^\star \Vert, \betas_1 + \varepsilon \Vert \beta^\star \Vert  \big ] \\ \ \beta_2[2] \in [\betas_2 - \varepsilon \Vert \beta^\star \Vert, \betas_2 + \varepsilon \Vert \beta^\star \Vert]  \end{cases} \\
%     &\vdots  \\ 
%     &\beta_{r-1} = (\beta_{r-1}[1], \dots, \beta_{r-1}[r-1], 0, \dots, 0) 
%     \ \ &&\text{with} \ \  \beta_{r-1}[i]  \in \ \big[\betas_i - \varepsilon \Vert \beta^\star \Vert, \betas_i + \varepsilon \Vert \beta^\star \Vert \ \big] \\ 
%      &\beta_r = \betas = (\betas_1, \dots, \betas_r, 0, \dots, 0) 
% \end{align*}
% at times $t_1, \dots, t_r$ such that 
% \[t_i \in \Big[\frac{1}{\betas_i + \varepsilon \Vert \beta^\star \Vert}, \frac{1}{\betas_i - \varepsilon \Vert \beta^\star \Vert} \Big].\]
% \end{prop}

\begin{proof}
In all the proof $\Vert \cdot \Vert$ denotes the $\ell_2$ norm $\Vert \cdot \Vert_2$. For simplicity we assume that $\beta^\star_i > 0$ for all $i \in [r]$, the proof can easily be adapted to the general case. 
We first define $\xi \coloneqq X^\top X/n - I_d$. By the restricted isometry property, for any $k \leq 2r$, we have that any $k \times k$ square matrix extracted from $\xi$ which we denote $\xi_{kk}$ has its eigenvalues in $[-\tvarepsilon, \tvarepsilon]$. It also means that the eigenvalues of $(I_k + \xi_{kk})^{-1} - I_k$ are in $[\frac{1}{1+\tvarepsilon} - 1,\frac{1}{1-\tvarepsilon} - 1] \subset [-2 \tvarepsilon, 2 \tvarepsilon]$.

% $s_t = t \cdot \beta^*$ and $t_1 = 1 / \vert \beta^\star_1 \vert$ and $\beta_1 = (\beta^\star, 0 \dots, 0)$. 

% Followed by $s_t = t_1 \beta^\star + (t - t_1) (\beta^\star- \beta_1)$ and notice that $(\beta^\star- \beta_1)[1] = 0$ and $(\beta^\star- \beta_1)[i] = \beta^\star[i]$ for $i\neq0$, therefore $s_t[1]$ remains at $\sgn(\beta^\star[1])$ and $s_t[i] = t \beta^\star[i]$ for $i \neq 1$, therefore $t_2 = 1 / \vert \beta^\star \vert[2]$ and $\beta_2 = (\beta^\star_1, \beta^\star_2, 0, \dots, 0)$ and so on.

We now proceed by induction with the following induction hypothesis: 

\begin{itemize}
    \item $\beta_{k-1}$ has its support on its $(k-1)$ first coordinates with $\vert \beta_{k-1}[i] - \betas_i \vert \leq 5 \tvarepsilon \Vert \betas \Vert$ for $i < k$
    \item $t_k \in \Big[\frac{1}{\betas_k + 5 \tvarepsilon \Vert \beta^\star \Vert}, \frac{1}{\betas_k - 5 \tvarepsilon \Vert \beta^\star \Vert} \Big]$ and $s_{t_k}[k] = 1$
    \item $s_{t_{k}}[i] \in [t_k (\beta^\star_i - 5 \tvarepsilon \Vert \beta^\star \Vert), t_k (\beta^\star_i + 5 \tvarepsilon \Vert \beta^\star \Vert)] \subset (-1, 1)$ for $i > k$
\end{itemize}
% \[ \Vert \beta_{k-1}[[k-1]] - \beta^\star[[k-1]]\Vert_2 \leq \varepsilon_1\],

% $s_t[i] \in t (\beta^\star_i \pm \lambda) \] .

% We let $H = I_d + \xi$, where $\xi \in \R^{d \times d}$ is such that $\xi_{s \times s}$ has a norm smaller that $\varepsilon$ for $s \leq r$ and $[H_{kk}]^{-1} = (I_k + \xi_{kk})^{-1}$ has its eigenvalues in $[\frac{1}{1+\varepsilon}, \frac{1}{1-\varepsilon} ]$ and therefore $ $.  We can also write $[(I_d+\xi)_{kk}]^{-1} = I_k + \xi'_{k}$ where $\xi'_{k}$ has a norm smaller that $\max(\frac{1}{1+\varepsilon} - 1, \frac{1}{1-\varepsilon} - 1) \leq 2 \varepsilon$ for $\varepsilon \leq 1 / 2$ since $\varepsilon < 1/2$.

% \textbf{Induction hypothesis:}

% Therefore the coordinate to reach $\pm 1$ first is the $k^{th}$ and $t_k \in [ \frac{1}{\beta^\star_k + \lambda}, \frac{1}{\beta^\star_k - \lambda}]$.

% From the definition of separation gap $\lambda$ we have, for $i > k$, that $s_{t_{k}}[i] \in (-1, 1)$ 

From the recurrence hypothesis, the output of the algorithm at step $k$ is hence $\beta_k  = \argmin L(\beta)$ under the constraint $\beta[i] \geq 0$ for $i \leq k$ and $\beta[i] = 0$ otherwise. We first search for the solution of the minimisation problem without the sign constraint and still (abusively) denote it $\beta_k$: we will show that it turns out to satisfy the sign constraint and that it is therefore indeed $\beta_k$. 

In the following, for a vector $v$, we denote by $v[\kk]$ its $k$ first coordinates. Setting the $k$ first coordinates of the gradient to $0$, we get that $[X^\top X (\beta_k - \betas)][\kk] = \mathbf{0}$, which leads to $(I_k + \xi_{kk}) \beta_k[\kk] = \beta^\star[\kk] + [\xi \beta^\star][\kk] $, which gives: 
\begin{align*}
    \beta_k[\kk] 
    &= (I_k + \xi_{kk})^{-1} (\beta^\star[\kk] + [\xi \beta^\star][\kk] ) \\
    % &= (I_d + \xi')_{kk} (\beta^\star[\kk] + (\xi \beta^\star)[\kk]) \\
    &=\beta^\star[\kk] + [\xi \beta^\star][\kk]  + v_1 
\end{align*} 
where from the bound on the eigenvalues of $(I_k + \xi_{kk})^{-1} - I_k$ and $\Vert \xi \betas \Vert \leq \tvarepsilon \Vert \betas \Vert$:
\begin{align*}
    \Vert v_1 \Vert &\leq 2 \tvarepsilon \Vert \beta^\star[\kk] + [\xi \beta^\star][\kk] ) \Vert \\
    &\leq 2 \tvarepsilon ( \Vert \betas \Vert + \Vert \xi \betas \Vert) \\
    &\leq 2 \tvarepsilon ( \Vert \betas \Vert + \tvarepsilon \Vert \betas \Vert) \\
    &\leq 4 \tvarepsilon \Vert \betas \Vert.
\end{align*}
Therefore
\[ \beta_k[\kk] = \betas[\kk] + v_2\]
where $v_2 = [\xi \beta^\star][\kk]  + v_1 $ hence $\Vert v_2 \Vert_\infty \leq\Vert v_2 \Vert \leq 5 \tvarepsilon \Vert \betas \Vert$. Notice that from the definition of $\lambda$ and the fact that $5 \tvarepsilon \Vert \betas \Vert < \lambda / 2$ we have that $\beta_k[\kk] \geq 0$ coordinate-wise, hence verifying the sign constraint. Also note that $\Vert \beta_k \Vert \leq  \Vert \betas \Vert + 5 \tvarepsilon \Vert \betas \Vert \leq 4 \Vert \betas \Vert $.

For $t \geq t_k$, $s_t = s_{t_k} - (t - t_k) \nabla L(\beta_k)$, and $[\nabla L(\beta_k)][\kk] = 0$ therefore $s_t[\kk] = s_{t_k}[\kk]$. Now for $i > k$, $[- \nabla L(\beta_k)]_i = n^{-1} [X^\top X (\beta^\star - \beta_k)]_i = \beta^\star_i + [\xi (\beta_k - \beta^\star)]_i$. Now since  $(\beta_k - \beta^\star)$ is $r$-sparse we have that:
\begin{align}\label{app:eq:bound_residu}
    \Vert \xi (\beta_k - \beta^\star) \Vert_\infty 
    &\leq   \Vert \xi (\beta_k - \beta^\star) \Vert \nonumber \\
    &\leq   \tvarepsilon \Vert \beta_k - \beta^\star \Vert \nonumber \\
    &\leq   \tvarepsilon (\Vert \beta_k \Vert + \Vert \beta^\star \Vert) \nonumber \\
    &\leq   5 \tvarepsilon  \Vert \beta^\star \Vert
    <   \lambda / 2,
\end{align}
Now from the fact that $s_t[i] = s_{t_k}[i] + (t - t_k) \beta^\star_i  + (t - t_k) [\xi (\beta_k - \beta^\star)]_i$ and using the recurrence hypothesis: $s_{t_k}[i] \in  [ t_k  (\beta^\star_i - 5 \tvarepsilon \Vert \beta^\star \Vert), t_k (\beta^\star_i + 5 \tvarepsilon \Vert \beta^\star \Vert)]$, we get (using the bound \Cref{app:eq:bound_residu}) that $s_t[i] \in [t (\beta^\star_i - 5 \tvarepsilon  \Vert \beta^\star \Vert), t (\beta^\star_i + 5 \tvarepsilon  \Vert \beta^\star \Vert)]$. From the ``separation assumption'' we have that $5 \tvarepsilon \Vert \beta^\star \Vert < \lambda/2$ and therefore the next coordinate to activate is necessarily the $(k+1)^{th}$ at time $t_{k+1}$ with $s_{t_{k+1}}[k+1] = 1$ and:
\[t_{k+1} \in \Big [ \frac{1}{\beta^\star_{k+1} + 5 \tvarepsilon  \Vert \beta^\star \Vert}, \frac{1}{\beta^\star_{k+1} - 5 \tvarepsilon  \Vert \beta^\star \Vert} \Big].\]
This proves the recursion.
The algorithm cannot stop before iteration $r$ as $\beta^\star$ is the unique minimiser of $L$ that has at most $r$ non-zero coordinates. But it stops at iteration $r$ as $\beta^\star$ is the unique minimiser of $L(\beta)$ under the constraints $\beta_i \geq 0$ for $i \leq r$ and $\beta_i = 0$ otherwise.
% $s_t[i] \in [ \frac{1}{\beta^\star_i+\lambda}, \frac{1}{\beta^\star_i - \lambda}] $.
% $\beta_{k-1} = (\beta^\star_1, \dots, \beta^\star_{k-1}, 0, \dots, 0)$, $t_{k-1} = 1 / \vert \beta^\star_{k-1}$, and $s_t[i] = \sgn(\beta^\star_i)$ for $i \in [k-1]$ and $s_t[i] = t \beta^\star[i]$ otherwise.
% From the RIP property, for all submatrix $X_r$, we can write $X_r^T X_r = I_r + \xi_r$ and $\Vert \xi_r \Vert_2 \leq \varepsilon$.
% $s_t = t \cdot X^\top X \betas = t \cdot (\betas + \xi_r \betas)$. Note that $\Vert \xi_r \beta^*\Vert_\infty \leq \Vert \xi_r \beta^*\Vert_2 \leq \varepsilon \Vert \beta^\star \Vert_2 < \lambda / 2$. Fromn the definition of $\lambda$, this means that the first coordinate of $s_t$ which hits $\pm 1$ is $s_t(1)$.
% Therefore $t_1 \in  \frac{1}{\vert \betas_1 \vert}  (\frac{1}{1+\varepsilon}, \frac{1}{1-\varepsilon}    )$ and $\beta_1 = \arg \min_{\beta(i) = 0, i \neq 1 } L(\beta)$. Setting the gradient to $0$, $X^T X (\beta_1 - \beta^*_1) = (\beta_1 - \beta^*_1) + \xi_r (\beta_1 - \beta^*_1)$ leads to $\beta_1 = \beta^*_1 +  [\xi_r (\beta_1 - \beta^*_1)]_1$, hence $\beta_1 = \beta^*_1 + e_1$ with  $e_1$
\end{proof}

\newpage 

\section{Proof of \Cref{thm:main_theorem} and \Cref{cor:graph_convergence} through the arc-length parametrisation}\label{app:sec:proof}
In this section, we explain in more details the arc-length reparametrisation which circumvents the apparition of discontinuous jumps and leads to the proof of \Cref{thm:main_theorem}.
% central issue, 
The main difficulty to show the convergence stems from the non-continuity of  the limit process  $\ttbeta^\circ$. Therefore we cannot expect uniform convergence of $\ttbeta^\alpha$ towards $\ttbeta$ as $\alpha \to 0$. In addition, $\ttbeta^\circ$ does not provide any insights into the path followed between the jumps. 
%The issue with process $\ttbeta^\circ$ is that it is not continuous, therefore we cannot expect uniform convergence of $\ttbeta^\alpha$ towards $\ttbeta$ as $\alpha \to 0$. Also, $\ttbeta^\circ$ gives us no information on the path which is followed between the jumps. 

% \subsection{Arc-length parametrisation}

\myparagraph{Arc-length parametrisation.}
The high-level idea is to ``slow-down'' time when the jumps occur. To do so we follow the approach from \cite{efendiev2006rate,mielke2009modeling} and we consider an arc-length parametrisation of the path, i.e., we consider $\tau^\alpha$ equal to:
\begin{align*}
    \taua(t) = t +\int_0^t \|  \dot{\ttbeta}^\alpha_s \| \dd s.
\end{align*}
In \Cref{app:prop:bounded_path_length}, we showed that the full path length $\int_0^{+\infty} \Vert \dot{\beta}_s^\alpha \Vert \dd s$ is finite and bounded independently of $\alpha$.  Therefore $\tau^\alpha$ is a bijection in $\R_{\geq 0}$. We can then define the following quantities:
\begin{align*}
    \hta_\tau=   (\taua)^{-1}(\tau) \quad \text{ and } \quad \hba_\tau = \tilde{\beta}^\alpha_{\hta(\tau)}.
\end{align*}
By construction, a simple chain rule leads to $\dhta(\tau)  +\| \dhba_\tau\| = 1 $, which means that the speed of $ (\hba_\tau)_\tau$ is always upperbounded by $1$, independently of $\alpha$. This behaviour is in stark contrast with the process $(\ttbeta^\alpha_t)_t$ which has a speed which explodes at the jumps. It presents a major advantage as we can now use Arzelà-Ascoli's theorem to extract a converging subsequent. A simple change of variable shows that the new process satisfies the following equations:
\begin{align}\label{eq:ode_betaa_hat}
   - &\int_0^\tau \dhta_s \nabla L ( \hba_s) \dd s= \nabla \ttphi_\alpha ( \hba_\tau ) 
   \quad \text{and} \quad
   % \\\label{eq:ode_ta_hat}
   \dhta_\tau  +\| \dhba_\tau\| = 1
\end{align}
started from $\hba_\tau =0$ and $\hat t _0 = 0$. The next proposition states the convergence of the rescaled process, up to a subsequence. 

% The proof is deferred to \Cref{app:sec:proof_rescaled}

\begin{restatable}{prop}{proprescaled}\label{prop:rescaled_limit}
Let $T\geq0$. For every $\al>0$, let $(\hta,\hba)$ be the solution of \Cref{eq:ode_betaa_hat}. Then, there exists a subsequence $(\htak,\hbak)_{k\in\mathbb{N}}$ and $(\hat t,\hb)$ such that as $\al_k\to 0 $ :
\begin{align}
&(\htak,\hbak) \to (\hat t,\hb) &&\text{ in } (C^0([0,T],\R \times \R^d), \Vert \cdot \Vert_\infty) \\
&(\dot{\hat{t}}^{\alpha_k}, \dot{\hat{\beta}}^{\alpha_k}) \rightharpoonup (\dot {\hat t},\dot{\hb}) &&\text{ in } L_1[0,T]
\end{align}
\textbf{Limiting dynamics.} The limits $(\hat t,\hb)$ satisfy:
\begin{align}\label{eq:ode_beta_hat}
 - &\int_0^\tau \dot {\hat t}_s  \nabla L ( \hb_s) \dd s\in \partial \Vert \hat{\beta}_\tau \Vert_1 \quad \text{and} \quad \dot {\hat t}_\tau +\| \dot{\hb}_\tau \| \leq 1  
\end{align}
\textbf{Heteroclinic orbit.} In addition, when $\hb_\tau$ is such that $|\hb_\tau| \odot \nabla L(\hb_\tau) \neq 0$, we have
\begin{align}\label{eq:heterocline}
\dot{\hb}_\tau = -\frac{|\hb_\tau|\odot \nabla L (\hb_\tau) }{\| |\hb_\tau|\odot \nabla L (\hb_\tau)\|} \quad \text{ and } \quad \dot{\hat{t}}_\tau = 0.  
\end{align}
% Furthermore, $(\hat{t}_\tau)_\tau$ is increasing, $\hat{t}_\tau \to \infty$, the path length of $\hat{\beta}$ is finite: $\int_0^{+\infty} \Vert \dot{\hat{\beta}}_\tau \Vert \dd \tau < + \infty $ and the loss stricly decreases along a heteroclinic orbit.
Furthermore, the loss strictly decreases along the heteroclinic orbits and the path length $\int_0^{T} \Vert \dot{\hat{\beta}}_\tau \Vert \dd \tau$ is upperbounded independently of $T$.
\end{restatable}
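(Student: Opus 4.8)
### Proof plan for Proposition \ref{prop:rescaled_limit}

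The plan is to establish the compactness of the family $(\hta,\hba)_\alpha$ via Arzelà–Ascoli, pass to the limit in the reparametrised integral equation \eqref{eq:ode_betaa_hat}, and then separately analyse the two regimes (saddles vs.\ heteroclinic orbits) of the limiting trajectory. First I would fix $T\geq 0$ and observe that the key estimate $\dhta_\tau+\|\dhba_\tau\|=1$ (chain rule on the arc-length bijection) forces the family $\{(\hta,\hba)\}_\alpha$ to be uniformly $1$-Lipschitz on $[0,T]$, hence equicontinuous; combined with pointwise boundedness ($\hta_0=0$, $\hba_0=0$, and \Cref{app:prop:bounded_iterates} bounding the iterates), Arzelà–Ascoli yields a subsequence $\alpha_k\to 0$ with $(\htak,\hbak)\to(\hat t,\hb)$ uniformly. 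For the derivatives, the sequences $(\dot{\hat t}^{\alpha_k},\dot{\hat\beta}^{\alpha_k})$ are bounded in $L^\infty[0,T]\subset L^1[0,T]$, so by weak-$*$/weak compactness (Dunford–Pettis, the domain being finite measure) a further subsequence converges weakly in $L^1$ to some limit, which must equal $(\dot{\hat t},\dot{\hb})$ in the distributional sense by uniqueness of distributional derivatives; the inequality $\dot{\hat t}_\tau+\|\dot{\hb}_\tau\|\leq 1$ then follows because the set $\{(a,b): a+\|b\|\leq 1,\ a\geq 0\}$ is closed and convex, hence weakly closed, so Mazur's lemma (or lower semicontinuity of the convex functional $\int (\dot{\hat t}+\|\dot{\hb}\|)$) preserves it in the limit.

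Next I would pass to the limit in the integral identity $-\int_0^\tau \dot{\hat t}^{\alpha_k}_s\nabla L(\hba_s^{\alpha_k})\,\dd s=\nabla\ttphi_{\alpha_k}(\hba_\tau^{\alpha_k})$. The left-hand side converges: $\hba^{\alpha_k}\to\hb$ uniformly so $\nabla L(\hba^{\alpha_k})\to\nabla L(\hb)$ uniformly (continuity of $\nabla L$ on the bounded region), and $\dot{\hat t}^{\alpha_k}\rightharpoonup\dot{\hat t}$ in $L^1$ against the now strongly convergent test function $\nabla L(\hba^{\alpha_k}_\cdot)$ — a product of weak-$L^1$ and strong-$L^\infty$ convergence passes to the limit. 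For the right-hand side, I would use that $\ttphi_\alpha(\beta)\to\|\beta\|_1$ together with the fact that $\nabla\ttphi_\alpha(\hba_\tau^{\alpha_k})$ is bounded (it equals the left-hand side which converges), so up to a further subsequence it converges to some $z_\tau$; a standard subdifferential-closure argument — $\nabla\ttphi_{\alpha_k}(\hba^{\alpha_k}_\tau)\in\partial\,[\text{something converging to }\|\cdot\|_1]$ and epigraphical/Attouch convergence of $\ttphi_{\alpha_k}$ to $\|\cdot\|_1$ — yields $z_\tau\in\partial\|\hb_\tau\|_1$. This gives \eqref{eq:ode_beta_hat}. Concretely one can argue coordinatewise: since each coordinate of $\nabla\ttphi_\alpha$ is $\frac{1}{\ln(1/\alpha)}\mathrm{arcsinh}(\beta_i/\alpha^2)$, whenever $\hb_\tau[i]\neq 0$ this forces the limit to be $\mathrm{sign}(\hb_\tau[i])$, and whenever $\hb_\tau[i]=0$ boundedness keeps the limit in $[-1,1]$ — exactly the subdifferential of $\|\cdot\|_1$.

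For the heteroclinic orbit regime, suppose $|\hb_\tau|\odot\nabla L(\hb_\tau)\neq 0$ at some $\tau$; by continuity this persists on a neighbourhood. On such an interval I would differentiate the limiting identity \eqref{eq:ode_beta_hat}: for a coordinate $i$ with $\hb_\tau[i]>0$ we have $s_\tau[i]=1$ pinned, so $\dot{\hat t}_\tau\,\nabla L(\hb_\tau)[i]=0$; if additionally $\nabla L(\hb_\tau)[i]\neq 0$ this forces $\dot{\hat t}_\tau=0$. Going back to the rescaled ODE in its differential form — rewriting \eqref{eq:app:process_integral_form} as $\nabla^2\ttphi_\alpha(\hba^\alpha_\tau)\dot{\hba}^\alpha_\tau=-\dot{\hat t}^\alpha_\tau\nabla L(\hba^\alpha_\tau)$ and using that $\nabla^2\ttphi_\alpha(\beta)=\frac{1}{\ln(1/\alpha)}\mathrm{diag}(1/\sqrt{\beta_i^2+\alpha^4})$ — one sees that on active coordinates ($\beta_i$ bounded away from $0$) the Hessian entry scales like $\frac{1}{\ln(1/\alpha)|\beta_i|}$ and on inactive ones the iterates stay near $0$; since $\dot{\hat t}^\alpha_\tau+\|\dot{\hba}^\alpha_\tau\|=1$ and $\dot{\hat t}\to 0$, the whole unit ``budget'' goes into $\|\dot{\hb}\|=1$, and the direction is forced by the Hessian-weighted gradient, which in the limit (after the $\ln(1/\alpha)$ cancels and the active-coordinate weights become $1/|\beta_i|$, i.e. the factor $|\beta_i|$ appears when inverting) gives precisely $\dot{\hb}_\tau=-\,|\hb_\tau|\odot\nabla L(\hb_\tau)/\||\hb_\tau|\odot\nabla L(\hb_\tau)\|$. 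The loss strictly decreases along such orbits since $\frac{\dd}{\dd\tau}L(\hb_\tau)=\langle\nabla L(\hb_\tau),\dot{\hb}_\tau\rangle=-\langle\nabla L(\hb_\tau),|\hb_\tau|\odot\nabla L(\hb_\tau)\rangle/\|\cdot\|=-\sum_i|\hb_\tau[i]|\,\nabla L(\hb_\tau)[i]^2/\|\cdot\|<0$ as long as the orbit is non-degenerate; and the total path length bound $\int_0^T\|\dot{\hb}_\tau\|\,\dd\tau$ independent of $T$ is inherited directly from \Cref{app:prop:bounded_path_length} via the change of variables, since arc length is reparametrisation-invariant.

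The main obstacle is the rigorous justification that $\nabla\ttphi_{\alpha_k}(\hba^{\alpha_k}_\tau)\to z_\tau\in\partial\|\hb_\tau\|_1$ uniformly (or at least pointwise a.e.) in $\tau$ together with the delicate extraction of the heteroclinic ODE from the degenerating Hessian: one must control the inactive coordinates (which are $o(1)$ but with gradient possibly bounded away from $0$, so $\mathrm{arcsinh}(\beta_i/\alpha^2)$ could still be non-negligible) and show they do not contribute to $\dot{\hb}$ in the limit. This requires a careful two-scale analysis separating $\{i: \hb_\tau[i]\neq 0\}$ from $\{i:\hb_\tau[i]=0\}$ and is where the bulk of the technical work in \Cref{app:sec:proof} lies.
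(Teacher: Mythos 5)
Your proposal is correct and follows essentially the same route as the paper: arc-length reparametrisation giving a uniform $1$-Lipschitz bound, Arzelà--Ascoli plus weak-$*$ compactness of the derivatives, passing to the limit in the integral identity via the $\mathrm{arcsinh}$ asymptotics of $\nabla\tilde\phi_\alpha$ (the paper's Lemma~\ref{app:lemma:behaviour_nablaphia}), and the explicit normalised formula for $\dot{\hat\beta}^\alpha_\tau$ to obtain the heteroclinic ODE. The only remark worth making is that the ``two-scale analysis'' you flag as the main obstacle is not actually needed: once one writes $\dot{\hat\beta}^\alpha_\tau=-\sqrt{(\hat\beta^\alpha_\tau)^2+\alpha^4}\odot\nabla L(\hat\beta^\alpha_\tau)\big/\bigl(1/\ln(1/\alpha)+\|\sqrt{(\hat\beta^\alpha_\tau)^2+\alpha^4}\odot\nabla L(\hat\beta^\alpha_\tau)\|\bigr)$, the inactive coordinates contribute $\sqrt{0+\alpha^4}\to 0$ to the numerator and the denominator stays bounded away from zero on the orbit, so uniform convergence plus dominated convergence closes the argument directly.
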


% \begin{align} \label{app:eq:ode_betaa_hat}
%    - &\int_0^\tau \dhta_s \nabla L ( \hba_s) \dd s= \nabla \phia ( \hba_\tau ) \\ \label{app:eq:ode_ta_hat}
%    &\dhta_\tau  +\| \dhba_\tau\| = 1.%\label{app:eq:bounded_speed}
% \end{align}
% % \end{equation}

\begin{proof}
Differentiating \cref{eq:ode_betaa_hat} and from the Hessian of $\tilde{\phi}_\alpha$ we get:
\begin{align*}
    \dhba_\tau &= -\dhta_\tau(\nabla^2 {\tilde{\phi}_\alpha} (\hba_\tau) )^{-1} \nabla L ( \hba_\tau) \\
    &= -(1-\| \dhba_\tau\|) (\nabla^2 {\tilde{\phi}_\alpha} (\hba_\tau) )^{-1} \nabla L ( \hba_\tau).
\end{align*}
Therefore taking the norm on the right hand side we obtain that 
$$\| \dhba_\tau \|  = \frac{ \| (\nabla^2 {\tilde{\phi}_\alpha} (\hba_\tau) )^{-1} \nabla L ( \hba_\tau)\|  }{1+ \| (\nabla^2 {\tilde{\phi}_\alpha} (\hba_\tau) )^{-1} \nabla L ( \hba_\tau)\|  },$$
and therefore 
\begin{align}\label{app:eq:ode_betaa_hat_norm}
    \dhba_\tau = -\frac{(\nabla^2 {\tilde{\phi}_\alpha} (\hba_\tau) )^{-1} \nabla L ( \hba_\tau)}{1+ \| (\nabla^2 {\tilde{\phi}_\alpha} (\hba_\tau) )^{-1} \nabla L ( \hba_\tau)\| }.
\end{align}

\myparagraph{Subsequence extraction.}
By construction \cref{eq:ode_betaa_hat} we have $\dhta_\tau  +\| \dhba_\tau\| = 1 $ , therefore the sequences $(\dhta)_\al$, $( \dhba)_\al$ as well as $(\hta)_\al$, $( \hba)_\al$  are uniformly bounded on $[0, T]$. The Arzelà-Ascoli theorem yields that, up to a subsequence, there exists  $(\hat t,\hb)$ such that $(\htak,\hbak) \to (\hat t,\hb)$ in $(C^0([0,T],\R \times \R^d), \Vert \cdot \Vert_\infty)$. 
Since $\| \dhba_\tau\|, \|\dhta_\tau \|\leq1$ we have, applying the Banach–Alaoglu theorem, that up to a new subsequence
\begin{align}
(\dot{\hat{t}}^{\alpha_k}, \dot{\hat{\beta}}^{\alpha_k}) \overset{*}{\rightharpoonup}(\dot {\hat t},\dot{\hb})
 \text{ in } L_\infty(0,T)
\end{align}
 and $ \| \dot{\hb}_\tau\| \leq \lim\inf_{\alpha_k}  \| \dot{\hat{\beta}}^{\alpha_k}_\tau\| \leq1  $ and thus $ \dot {\hat t}_\tau  + \| \dot{\hb}_\tau\| \leq  1$:
 $$\int_0^{T} \Vert \dot{\hat{\beta}}_\tau  \Vert \dd \tau \leq 
 \int_0^{T} \lim \inf_{\alpha_k} \Vert \dot{\hat{\beta}}_\tau^{\alpha_k} \Vert \dd \tau \leq \int_0^{+\infty} \lim \inf_{\alpha_k} \Vert \dot{\hat{\beta}}_\tau^{\alpha_k} \Vert \dd \tau \leq \lim\inf_{\alpha_k} \int_0^{+\infty}  \Vert \dot{\hat{\beta}}_\tau^{\alpha_k} \Vert \dd \tau < C,$$
 where the third inequality is by Fatou's lemma.
 Note that since $[0,T]$ is bounded then it also implies the weak convergence in any $ L_p(0,T)$, $1\leq p<\infty$.
Since $(\hba)$ converges uniformly on $[0,T]$, and $\nabla L$ is continuous, we have that $\nabla L(\hba)$ converges uniformly to  $\nabla L(\hb)$. 
Since $\dot{\hat{t}}^{\alpha_k} \rightharpoonup \dot {\hat t}\text{ in } L_1(0,T)$, passing to the limit in the equation 
$\nabla {\tilde{\phi}_\alpha} (\hba_\tau) = - \int_0^\tau \dot{\hat t}^\al_s \nabla L(\hba_s) \dd s$ leads to $$- \int_0^\tau \dot{\hat t}_s \nabla L(\hb_s) \dd s \in \partial \Vert \hat{\beta}_\tau \Vert_1,$$ due to \Cref{app:lemma:behaviour_nablaphia}.

Recall from \Cref{app:eq:ode_betaa_hat_norm} and the definition of ${\tilde{\phi}_\alpha}$ that:
\begin{align}
    \dhba_\tau = -\frac{ \sqrt{{\hba_\tau} + \al^4} \odot \nabla L ( \hba_\tau)}{ 1/\ln (1 / \alpha)+ \|  \sqrt{{\hba_\tau} + \al^4} \odot \nabla L ( \hba_\tau)\| }. 
\end{align}
Hence assuming that $\hb_\tau$ is such that $\Vert | \hb_\tau | \odot \nabla L(\hb_\tau)   \Vert \neq 0 $, we can ensure that $\Vert | \hb_{\tau'} | \odot \nabla L(\hb_{\tau'})   \Vert \neq 0$ for $\tau'\in[\tau,\tau+\varepsilon]$ and $\varepsilon$ small enough. We have then $\frac{ \sqrt{{\hba_{\tau'}} + \al^4} \odot \nabla L ( \hba_{\tau'})}{ 1/\ln (1 / \alpha)+ \|  \sqrt{{\hba_{\tau'}} + \al^4} \odot \nabla L ( \hba_{\tau'})\| }$ converges uniformly toward $-\frac{|\hb_{\tau'}|\odot \nabla L (\hb_{\tau'}) }{\| |\hb_{\tau'}|\odot \nabla L (\hb_{\tau'})\|}$ on $[\tau,\tau+\varepsilon]$. Using the dominated convergence theorem, we have $\int_{\tau}^{\tau + \varepsilon} \frac{ \sqrt{{\hba_{\tau'}} + \al^4} \odot \nabla L ( \hba_{\tau'})}{ 1/\log (1 / \alpha)+ \|  \sqrt{{\hba_{\tau'}} + \al^4} \odot \nabla L ( \hba_{\tau'})\| } \dd {\tau'}  \to  \int_\tau^{\tau + \varepsilon} \frac{|\hb_{\tau'}|\odot \nabla L (\hb_{\tau'}) }{\| |\hb_{\tau'}|\odot \nabla L (\hb_{\tau'})\|} \dd {\tau'} $. We therefore obtain 
$ \dot{\hb}_\tau = -\frac{|\hb_\tau|\odot \nabla L (\hb_\tau) }{\| |\hb_\tau|\odot \nabla L (\hb_\tau)\|} $ in $L_1[0,T]$. Consequently $\Vert  \dot{\hb}_\tau \Vert = 1 $ and $\dot{\hat t}_\tau = 0$.

%\vspace{1cm}
\textbf{Proof that the loss stricly decreases along the heteroclinic orbits.}

Assume $\hat{\beta}_\tau$ is such that $|\hb_\tau| \odot \nabla L(\hb_\tau) \neq 0$, then the flow follows
\begin{align*}
\dot{\hb}_\tau = -\frac{|\hb_\tau|\odot \nabla L (\hb_\tau) }{\| |\hb_\tau|\odot \nabla L (\hb_\tau)\|}
\end{align*}
Letting $\gamma(\tau) = \frac{1}{\| |\hb_\tau|\odot \nabla L (\hb_\tau)\|   } $ we get:
\begin{align*}
\dd L(\hat{\beta}_\tau)= - \gamma(\tau) \sum_i |\hb_\tau(i)|\odot [\nabla L (\hb_\tau)]_i^2  \dd \tau < 0,
\end{align*}
because $|\hb_\tau| \odot \nabla L(\hb_\tau)^2 \neq 0$.
\end{proof}

% The proof can be found in \Cref{app:sec:rescaled} and relies on the Arzelà-Ascoli theorem.
% and Banach-Alaoglu's theorems and properties of the weak-$*$ convergence. 
% We note that we cannot guarantee $\dot{\hat{\eta}}_\tau +\| \dhb_\tau\| = 1 $.
% 
Borrowing terminologies from \cite{efendiev2006rate}, we can distinguish two regimes:  when $\dot{\hb}_\tau =  0$, the system is \textit{sticked} to the saddle point.
When $\dot{\hat{t}}_\tau = 0$ and $\Vert \dot{\hb}_\tau \Vert = 1$ the system switches to a \emph{viscous slip} which follows the normalised flow \Cref{eq:heterocline}. We use the term of \textit{heteroclinic orbit} as in the dynamical systems literature since in the weight space $(u, v)$ it corresponds to a path with links two distinct critical points of the loss $F$.  Since $\dot{\hat{t}}_\tau = 0$, this regime happens instantly for the original $t$ time scale (\textit{i.e.} a jump occurs).

From \Cref{prop:rescaled_limit}, following the same reasoning as in \Cref{sec:construction}, we can show that the rescaled process converges uniformly to a continuous saddle-to-saddle process where the saddles are linked by normalized flows.
% The following theorem shows that the arc-length rescaled process converges uniformly.
\begin{restatable}{thm}{thm2}
\label{th:rescaled_time}
Let $T > 0$. For all subsequences defined in \Cref{prop:rescaled_limit}, there exist times $0 = \tau'_0 < \tau_1 < \tau_1' < \dots < \tau_{p} < \tau_p' < \tau_{p+1} = + \infty$ such that the the iterates $(\hb_\tau^{\alpha_k})_\tau$ converge uniformly on $[0, T]$ to the following limit trajectory : 
\begin{align*}
 &\textbf{(``Saddle'')}   &&\hat{\beta}_\tau = \beta_k &&&\text{for   }  \tau \in [\tau_{k}', \tau_{k+1}] \text{   where   } 0 \leq k \leq p  \\
 &\textbf{(Orbit)}    &&\dot{\hb}_\tau = -\frac{|\hb_\tau|\odot \nabla L (\hb_\tau) }{\| |\hb_\tau|\odot \nabla L (\hb_\tau)\|} &&&\text{for   }  \tau \in [\tau_{k+1}, \tau_{k+1}']  \text{   where   } 0 \leq k \leq p-1
\end{align*}
where the saddles $(\beta_0 = 0 ,\beta_1, \dots, \beta_p = \beta^\star_{\ell_1})$ are constructed in Algorithm \ref{alg:algo}. Also, the loss $(L(\hat{\beta}_\tau))_\tau$ is constant on the saddles and strictly decreasing on the orbits. Finally, independently of the chosen subsequence,  for $k\in[p]$ we have $\hat{t}_{\tau_k} = \hat{t}_{\tau'_k} = t_k$ where the times $(t_k)_{k \in [p]}$ are defined through Algorithm \ref{alg:algo}.
\end{restatable}

% \color{blue}

\begin{proof} 
%Due to the technical nature of some parts of the proof, we provide additional assistance to the reader by referencing relevant lemmas contained in the appendix, which serve to simplify the understanding of the argument.
%Some parts of the proof are slightly technical. To simplify the understanding we refer to some lemmas which are given in  \Cref{app:sec:technical}. 
Some parts of the proof are slightly technical. To simplify the understanding, we make use of auxiliary lemmas which are stated in~\Cref{app:sec:technical}. 
The overall spirit follows the intuitive ideas given in \Cref{sec:construction} and relies on showing that \Cref{eq:ode_beta_hat} can only be satisfied if the iterates visit the saddles from Algorithm \ref{alg:algo}.

We let $\hat{s}_\tau \coloneqq - \int_0^\tau \dot{\hat{t}}_s \nabla L (\hb_s) \dd s$, which is continuous and satisfies $\hat{s}_\tau \in \partial \Vert \hat{\beta}_\tau \Vert_1$ from \Cref{eq:ode_beta_hat}. Let $S = \{ \beta \in \R^d, |\beta| \odot \nabla L(\beta) = \mathbf{0} \}$ denote the set of critical points and let $(\beta_k, t_k, s_k)$ be the successive values of $(\beta, t, s)$ which appear in the loops of Algorithm \ref{alg:algo}.

\textbf{We do a proof by induction:} we start by assuming that the iterates are stuck at the saddle $\beta_{k-1}$ at time $\tau \geq \tau'_{k-1}$ where $\hat{t}_{\tau' _{k-1}} = t_{k-1}$ and $\hat{s}_{\tau'_{k-1}} = s_{k-1}$ (recurrence hypothesis), we then show that they can only move at a time $\tau_{k}$ and follow the normalised flow \Cref{eq:heterocline}. We finally show that they must end up ``stuck'' at the new critical point $\beta_{k}$, validating the recurrence hypothesis. 

\emph{Proof of the jump time $\tau_k$ such that $\hat{t}_{\tau_k} = t_k:$}  we set ourselves at time $\tau \geq \tau'_{k-1}$, stuck at the saddle $\beta_{k-1}$.
Let $\tau_k \coloneqq \sup \{ \tau , \hat{t}_\tau \leq t_k \}$, we have that $\tau_k <\infty$ from \Cref{app:lemma:diverging_time}. Note that by continuity of $\hat{t}_\tau$ it holds that $\hat{t}_{\tau_k} = t_k$. Now notice that $\hat{s}_\tau = \hat{s}_{\tau'_{k-1}} - (\hat{t}_\tau - \hat{t}_{\tau'_{k-1}}) \nabla L(\beta_{k-1}) = s_{k-1} - (\hat{t}_\tau - t_{k-1}) \nabla L(\beta_{k-1})$. We argue that for any $\varepsilon > 0$, we cannot have $\hat{\beta}_{\tau} = \beta_{k-1}$ on $(\tau_k, \tau_k + \varepsilon)$. Indeed by the definition of $\tau_k$ and from the algorithmic construction of time $t_k$, it would lead to $|\hat{s}_{\tau}(i)| > 1$ for some coordinate $i \in [d]$, which contradicts \Cref{eq:ode_beta_hat}. Therefore the iterates must move at the time~$\tau_k$.
% If $\nabla L(\beta_{k-1}) = 0$ then \Cref{eq:ode_beta_hat} holds for all $\tau \geq \tau'_{k-1}$ and $\beta_\tau \equiv \beta_{k-1} = \beta^\star_{\ell_1}$: we have reach the minimum $\ell_1$-norm interpolator. 
 % Otherwise, 
 % there must exist at least one  such that $s_\tau(i) = [s_{\tau_{k-1}}(i) - (t_\tau - t_{\tau_{k-1}}) \nabla L(\beta_{k-1})]_i$ reaches $\pm 1$ first. This corresponds to time $\tau_k \coloneqq \inf \{ \tau, \hat{t}_\tau = t_k \}$ which is well defined since $\hat{t}_\tau \to_{\tau \to \infty} \infty $ and where $t_k$ is build as in Algorithm \ref{alg:algo}. 

\emph{Heterocline leaving $\beta_{k-1}$ for $\tau \in [\tau_k, \tau'_k]:$} 
% Assume without loss of generality that $\hat{s}_{\tau_{k}} = 1$. 
contrary to before, our time rescaling enables to capture what happens during the ``jump''. We have shown that for any $\varepsilon$, there exists $\tau_\varepsilon \in (\tau_k, \tau_k + \varepsilon)$, such that $\hat{\beta}_{\tau_\varepsilon} \neq \beta_{k-1}$. From \Cref{app:lemma:finite_saddles}, since the saddles are distinct along the flow, we must have that $\hat{\beta}_{\tau_\varepsilon} \notin S$ for $\varepsilon$ small enough. The iterates therefore follow a heterocline flow leaving $\beta_{k-1}$ with a speed of $1$ given by \Cref{eq:heterocline}. We now define $\tau'_k \coloneqq \inf \{ \tau > \tau_k, \exists \varepsilon_0>0, \forall \varepsilon \in [0, \varepsilon_0], \ \hat{\beta}_{\tau + \varepsilon} \in S \}$ which corresponds to the time at which the iterates reach a new critical point and stay there for at least a small time $\varepsilon_0$. We have just shown that $\tau'_k > \tau_k$. Now from \Cref{prop:rescaled_limit}, the path length of $\hat{\beta}$ is finite, and from \Cref{app:lemma:finite_saddles} the flow visits a finite number of distinct saddles at a speed of $1$. These two arguments put together, we get that $\tau'_k < +\infty$ and also
% . Finally, by continuity we have that $\hat{\beta}_{\tau'_k} \in S$ and let $\varepsilon_0$ be such that $\hat{\beta}_{\tau'_k + \varepsilon} \in S$ $\forall \varepsilon \in [0, \varepsilon_0]$. Again from \Cref{app:lemma:finite_saddles}, this means that 
% Since the saddles which have a support included in $I_k$ are disjoint this means that 
$\hat{\beta}_{\tau'_k + \varepsilon} = \hat{\beta}_{\tau'_k} $, 
$\forall \varepsilon \in [0, \varepsilon_0]$.
On another note, since $\dot{\hat{t}}_\tau = 0$ for $\tau \in [\tau_k, \tau_k']$ we have $\hat{t}_{\tau_k'} = \hat{t}_{\tau_k} (= t_k)$ as well as  $\hat{s}_{\tau_k} = \hat{s}_{\tau_k'}( = s_k)$.

\emph{Proof of the landing point $\beta_{k}:$} we now want to find to which saddle $\hat{\beta}_{\tau'_k} \in S$ the iterates have moved to. To that end, we consider the following sets which also appear in Algorithm \ref{alg:algo}:
\begin{align}
\label{eq:equicor_sets}
    &I_{\pm, k} \coloneqq \{ i \in \{1, \dots, d\} , \text{ s.t. } \hat{s}_{\tau'_k}(i)=\pm1 \} \quad  \text{ and } \quad  I_{k} = I_{+, k} \cup I_{-, k}.
\end{align}
The set $I_{k}$ corresponds to the coordinates of $\hat{\beta}_{\tau'_k}$ which ``are allowed'' (but not obliged) to be activated (\emph{i.e.} non-zero). 
% defined in \cref{eq:equicor_sets}, which are still valid at time $\tau'_{k}$ since $\hat{s}_{\tau_k} = \hat{s}_{\tau_k'}( = s_k)$. 
For $\tau \in [\tau'_k, \tau'_k + \varepsilon_0]$ we have that $\hat{s}_\tau = \hat{s}_{\tau'_k} - (\hat{t}_\tau - t_k) \nabla L(\hat{\beta}_{\tau'_k})$. By continuity of $\hat{s}$ and the fact that $\hat{s}_\tau \in \partial \Vert \hat{\beta}_{\tau'_k} \Vert_1 $, the equality translates into: 
\begin{itemize}
    \item if $i \notin I_k$, $\hat{\beta}_{\tau'_k}(i) = 0$
    \item if $i \in I_{+, k}$, then $[\nabla L(\hat{\beta}_{\tau'_k})]_i \geq 0 $ and $\hat{\beta}_{\tau'_k}(i) \geq 0$
    \item if $i \in I_{-, k}$, then $[\nabla L(\hat{\beta}_{\tau'_k})]_i \leq 0 $ and $\hat{\beta}_{\tau'_k}(i) \leq 0$ 
    \item for $i \in I_k$, if $\hat{\beta}_{\tau'_k}(i) \neq 0$, then $[\nabla L(\hat{\beta}_{\tau'_k})]_i = 0$
\end{itemize}
One can then notice that these conditions exactly correspond to the optimality conditions of the following constrained minimisation problem:
\begin{align}\label{eq:proof:constrained_minimiser}
    \underset{ \subalign{ & \ \ \  \beta_i \geq 0 , i \in I_{k,+} \\ & \ \ \ \beta_i \leq 0, i \in I_{k,-} \\ & \ \ \ \beta_i = 0, i \notin I_{k} } }{\text{arg min}} \ L(\beta).
\end{align}
We showed in \Cref{lemma:alg_defined} that the solution to this problem is unique and equal to $\beta_k$ from Algorithm \ref{alg:algo}. Therefore $\hat{\beta}_\tau = \beta_k$ for $\tau \in [\tau'_k, \tau'_k + \varepsilon_0]$.
It finally remains to show that $\hat{\beta}_\tau = \beta_k$ while $\tau \leq \tau_{k+1}$, where $\tau_{k+1} \coloneqq  \sup \{ \tau, \hat{t}_{\tau} = t_{k+1} \}$. For this let $\tau \in [\tau'_k, \tau_{k+1}]$, notice that for $i \notin I_k$, we necessarily have that $\hat{\beta}_\tau(i) = \beta_k(i) = 0$, otherwise we break the continuity of $\hat{s}_\tau$. Similarly, for $i \in I_{k, +}$, we necessarily have that $\hat{\beta}_\tau(i) \geq 0$ and for $i \in I_{k, -}$, $\hat{\beta}_\tau(i) \leq 0$ for the same continuity reasons. Now assume that $\hat{\beta}_\tau(I_k) \neq \beta_k(I_k)$. Then from \Cref{app:lemma:finite_saddles} and continuity of the flow, $\exists \tau' \in (\tau'_k, \tau)$ such that $\hat{\beta}_{\tau'} \notin S$ and there must exist a heterocline flow \Cref{eq:heterocline} starting from $\beta_k$ which passes through $\beta_{\tau'}$. This is absurd since along this flow the loss strictly decreases, which is in contradiction with the definition of $\beta_k$ which minimises the problem \cref{eq:proof:constrained_minimiser}.
 \end{proof}

% \subsection{Proof of \Cref{prop:rescaled_limit}}\label{app:sec:proof_rescaled}
% \hspace*{0.1em}
% %\vspace*{0.5em}

% We recall the definition of the considered quantities as well as restate \Cref{prop:rescaled_limit}: 
% % \begin{equation}
% \begin{align} \label{app:eq:ode_betaa_hat}
%    - &\int_0^\tau \dhta_s \nabla L ( \hba_s) \dd s= \nabla \phia ( \hba_\tau ) \\ \label{app:eq:ode_ta_hat}
%    &\dhta_\tau  +\| \dhba_\tau\| = 1.%\label{app:eq:bounded_speed}
% \end{align}
% % \end{equation}

\subsection{Proof of \Cref{thm:main_theorem}}\label{app:proof:thm2}

\Cref{th:rescaled_time} enables to prove without difficulty \Cref{thm:main_theorem} which we recall below. Indeed we can show that any extracted limit  $\hat{\beta}$ maps back to the unique discontinuous process $\tilde{\beta}^\circ$.

\thmone*

\begin{proof}
We directly apply \Cref{th:rescaled_time}, let ${\alpha_k}$ be the subsequence from the theorem. Let $\varepsilon > 0$, for simplicity we prove the result on $[t_1 + \varepsilon, t_2 - \varepsilon]$, all the other compacts easily follow the same line of proof.
Note that since $\hat{t}^{\alpha_k}(\tau_1') \to t_1$ and $\hat{t}^{\alpha_k}(\tau_2) \to t_2$, for ${\alpha_k}$ small enough $\hat{t}^{\alpha_k}(\tau_1') \leq t_1 + \varepsilon$ and $\hat{t}^{\alpha_k}(\tau_2) \geq t_2 - \varepsilon$, by the monotonicity of $\tau^{\alpha_k}$, this means that for ${\alpha_k}$ small enough, $\tau_1' \leq \tau^{\alpha_k}(t_1 + \varepsilon)$ and $\tau_2 \geq \tau^{\alpha_k}(t_2 - \varepsilon)$. 
Therefore 
\begin{align*}
\underset{t \in [t_1 + \varepsilon, t_2 - \varepsilon]}{\ssup} \Vert \tilde{\beta}^{\alpha_k}_t - \beta_1 \Vert 
&= \underset{t \in [t_1 + \varepsilon, t_2 - \varepsilon]}{\ssup}  \Vert \hat{\beta}^{\alpha_k}(\tau_{\alpha_k}(t)) - \beta_1 \Vert \\
&= \underset{{\tau \in [\tau^{\alpha_k}(t_1 + \varepsilon), \tau^{\alpha_k}(t_2 - \varepsilon)]}}{\ssup}  \Vert \hat{\beta}^{\alpha_k}(\tau) - \beta_1 \Vert \\
&\leq \underset{{\tau \in [\tau_1', \tau_2]}}{\ssup}  \Vert \hat{\beta}^{\alpha_k}(\tau) - \beta_1 \Vert,
\end{align*} which goes uniformly to $0$ following \Cref{th:rescaled_time}. Since this result is independent of the subsequence ${\alpha_k}$, we get the result of \Cref{thm:main_theorem}.
\end{proof}

\subsection{Proof of \Cref{cor:graph_convergence}}
We restate and prove \Cref{cor:graph_convergence} below.

\corgraph*

\begin{proof}
For $\alpha$ small enough, we have that $\hat{t}^\alpha_{\tau_p'} \leq t_p + \varepsilon \leq T$
\begin{align*}
\sup_{\tau \geq 0} d(\hat{\beta}_\tau, \{\tilde{\beta}^\alpha_{t} \}_{t \leq T})
&= \sup_{\tau \leq \tau_p'} d(\hat{\beta}_\tau, \{\tilde{\beta}^\alpha_{t} \}_{t \leq T}) \\
&\leq \sup_{\tau \leq \tau_p'} \Vert \hat{\beta}_\tau - \tilde{\beta}^\alpha_{\hat{t}^\alpha_\tau} \Vert \\
&= \sup_{\tau \leq \tau_p'} \Vert \hat{\beta}_\tau - \hat{\beta}^\alpha_\tau \Vert \quad \underset{\alpha \to 0}{\longrightarrow} \quad 0,
\end{align*}
according to \Cref{th:rescaled_time}.

Similarly:
\begin{align*}
\sup_{t \leq T} d(\tilde{\beta}^\alpha_{t}, \{\hat{\beta}_{\tau'}\}_{\tau'} )
&= \sup_{\tau \leq \tau^\alpha_T} d(\hat{\beta}^\alpha_\tau, \{\hat{\beta}_{\tau'}\}_{\tau'} ) \\
&\leq \sup_{\tau \leq \tau^\alpha_T} \Vert \hat{\beta}^\alpha_\tau - \hat{\beta}_\tau \Vert \underset{\alpha \to 0}{\longrightarrow} \quad 0,
\end{align*}
according to \Cref{th:rescaled_time}, which concludes the proof.
\end{proof}

\newpage

\section{Technical lemmas}\label{app:sec:technical}

The following lemma describes the behaviour of $\nabla \tilde{\phi}_\alpha(\beta^\alpha)$ as $\alpha \to 0$ %and shows the apparition of
in function of the subdifferential~$\partial \Vert \cdot \Vert_1$.

\begin{lem}\label{app:lemma:behaviour_nablaphia}
Let $(\beta^{\alpha})_{\alpha > 0}$ such that $\beta^\alpha \underset{\alpha \to 0}{\longrightarrow} \beta \in \R^d$. 
\begin{itemize}
    \item if $\beta_i > 0$ then $[\nabla \ttphi_\alpha(\beta^\alpha)]_i$ converges to $1$
    \item if $\beta_i < 0$ then $[\nabla \ttphi_\alpha(\beta^\alpha)]_i$ converges to $-1$.
\end{itemize}
Moreover if we assume that $\nabla \ttphi_\alpha(\beta^\alpha)$ converges to $\eta \in \R^d$, we have that:
\begin{itemize}
    \item $\eta_i \in (-1, 1) \Rightarrow \beta_i = 0$
    \item $\beta_i =0 \Rightarrow \eta_i \in [-1, 1]$.
\end{itemize}
Overall, assuming that $(\beta^\alpha, \nabla \ttphi_\alpha(\beta^\alpha)) \underset{\alpha \to 0}{\longrightarrow} (\beta, \eta)$, we can write:
\[\eta \in \partial \Vert \beta \Vert_1. \]
\end{lem}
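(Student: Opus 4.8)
## Proof Plan for Lemma on the Behaviour of $\nabla\tilde\phi_\alpha$

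The plan is to work coordinate by coordinate, using the explicit formula for the gradient of the hyperbolic entropy. From \eqref{eq:hypentropy} and the definition $\tilde\phi_\alpha = \tphi_\alpha / \ln(1/\alpha)$, one computes directly
\[
[\nabla\tilde\phi_\alpha(\beta)]_i = \frac{1}{\ln(1/\alpha)}\,\mathrm{arcsinh}\!\left(\frac{\beta_i}{\alpha^2}\right) = \frac{1}{\ln(1/\alpha)}\,\ln\!\left(\frac{\beta_i}{\alpha^2} + \sqrt{\frac{\beta_i^2}{\alpha^4}+1}\,\right).
\]
This closed form is the only real ingredient; everything else is elementary asymptotics of $\ln$ and $\mathrm{arcsinh}$. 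First I would treat the case $\beta_i > 0$: then $\beta_i^\alpha \to \beta_i > 0$, so for $\alpha$ small $\beta_i^\alpha$ is bounded away from $0$ and from $\infty$, hence $\beta_i^\alpha/\alpha^2 \to +\infty$ like $c/\alpha^2$. Thus $\mathrm{arcsinh}(\beta_i^\alpha/\alpha^2) = \ln(2\beta_i^\alpha/\alpha^2) + o(1) = \ln(1/\alpha^2) + \ln(2\beta_i^\alpha) + o(1)$, and dividing by $\ln(1/\alpha) = \tfrac12\ln(1/\alpha^2)$ gives a limit of $1$. The case $\beta_i < 0$ is identical up to the sign of $\mathrm{arcsinh}$, giving $-1$.

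Next I would handle the statements involving the assumed limit $\eta$. For the implication $\eta_i \in (-1,1) \Rightarrow \beta_i = 0$: this is the contrapositive of what was just shown — if $\beta_i \neq 0$ then $[\nabla\tilde\phi_\alpha(\beta^\alpha)]_i$ converges to $\pm 1$ (sign of $\beta_i$), contradicting $\eta_i \in (-1,1)$. For $\beta_i = 0 \Rightarrow \eta_i \in [-1,1]$: since $\mathrm{arcsinh}$ is odd and $|\mathrm{arcsinh}(x)| \le \ln(2|x| + 1)$ (or more crudely $|\mathrm{arcsinh}(x)| \le |x|$ near $0$ and $\le \ln(2|x|)+1$ for large $|x|$), and $\beta_i^\alpha \to 0$ so $\beta_i^\alpha$ is bounded, we get $|\mathrm{arcsinh}(\beta_i^\alpha/\alpha^2)| \le \ln(2|\beta_i^\alpha|/\alpha^2 + 1) \le \ln(1/\alpha^2) + O(1)$ for $\alpha$ small. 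Dividing by $\ln(1/\alpha)$ shows $\limsup_\alpha |[\nabla\tilde\phi_\alpha(\beta^\alpha)]_i| \le 1$, hence $|\eta_i| \le 1$. Finally, the concluding line $\eta \in \partial\|\beta\|_1$ is just the assembly of the three bullet points against the explicit description of $\partial\|\cdot\|_1$ recalled in the excerpt: on coordinates with $\beta_i > 0$ we have $\eta_i = 1$, with $\beta_i < 0$ we have $\eta_i = -1$, and with $\beta_i = 0$ we have $\eta_i \in [-1,1]$ — which is exactly the condition $\eta_i \in \mathrm{sg}(\beta_i)$ for every $i$.

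I do not expect a genuine obstacle here; the lemma is a routine asymptotic computation. The only point requiring a little care is making the error terms in the expansion of $\mathrm{arcsinh}(\beta_i^\alpha/\alpha^2)$ uniform enough: one must use that $\beta_i^\alpha$ converges (hence is bounded and, in the $\beta_i \neq 0$ case, eventually bounded away from $0$) so that the $o(1)$ and $O(1)$ terms are genuinely controlled and vanish after division by $\ln(1/\alpha) \to \infty$. A clean way to package this is the two-sided bound $\ln(2x) \le \mathrm{arcsinh}(x) \le \ln(2x+1)$ valid for all $x \ge 0$, applied with $x = |\beta_i^\alpha|/\alpha^2$, which makes all three claims fall out immediately.
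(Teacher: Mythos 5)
Your approach is exactly the paper's: write out $\nabla\tilde\phi_\alpha$ via $\mathrm{arcsinh}$, expand $\mathrm{arcsinh}(x)=\ln(2x)+o(1)$ for large $x$, and read off the three cases against the description of $\partial\|\cdot\|_1$. The argument is sound, but you have dropped a factor of $\tfrac12$ in the gradient: differentiating \eqref{eq:hypentropy}, the terms $\beta_i\,\mathrm{arcsinh}'(\beta_i/\alpha^2)/\alpha^2$ and $-\beta_i/\sqrt{\beta_i^2+\alpha^4}$ cancel, leaving $[\nabla\phi_\alpha(\beta)]_i=\tfrac12\,\mathrm{arcsinh}(\beta_i/\alpha^2)$, hence $[\nabla\tilde\phi_\alpha(\beta)]_i=\tfrac{1}{2\ln(1/\alpha)}\,\mathrm{arcsinh}(\beta_i/\alpha^2)$. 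With your unhalved formula, dividing $\ln(1/\alpha^2)+O(1)$ by $\ln(1/\alpha)=\tfrac12\ln(1/\alpha^2)$ actually yields $2$, not $1$ as you assert — you make a second sign-of-constant slip that happens to compensate the first. Restore the $\tfrac12$ and the computation gives the limit $1$ (and the bound $|\eta_i|\le 1$ in the $\beta_i=0$ case) cleanly; everything else in your write-up, including the two-sided bound $\ln(2x)\le\mathrm{arcsinh}(x)\le\ln(2x+1)$ for $x\ge0$ used to make the error terms uniform, is fine and matches the paper's (terser) proof.
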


\begin{proof}
We have that 
\begin{align*}
[\nabla \ttphi_\alpha(\beta^\alpha)]_i &= \frac{1}{2 \ln(1 / \alpha)} \arcsinh \big(\frac{\beta_i^\alpha}{\alpha^2} \big) \\
&= \frac{1}{2 \ln(1 / \alpha)} \ln \Big(\frac{\beta_i^\alpha}{\alpha^2} + \sqrt{\frac{(\beta_i^\alpha)^2}{\alpha^4}    + 1 } \Big).
\end{align*}
Now assume that $\beta_i^\alpha \to \beta_i > 0$, then $[\nabla \ttphi_\alpha(\beta^\alpha)]_i \to 1$, if $\beta_i < 0$ we conclude using that $\arcsinh$ is an odd function.
All the claims are simple consequences of this.
\end{proof}

%\vspace*{2em}

The following lemma shows that the extracted limits $\hat{t}$ as defined in \Cref{prop:rescaled_limit} diverge to $\infty$. This divergence is crucial as it implies that the rescaled iterates $(\hat{\beta}_\tau)_\tau$ explore the whole trajectory..

\begin{lem}\label{app:lemma:diverging_time}
For any extracted limit $\hat{t}$ as defined in \Cref{prop:rescaled_limit}, we have that $\tau - C \leq \hat{t}_\tau$ where $C$ is the upperbound on the length of the curves defined in \cref{app:prop:bounded_path_length}.
\end{lem}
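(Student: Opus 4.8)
The plan is to use the explicit formula for the arc-length reparametrisation together with the \emph{uniform-in-$\alpha$} bound on the path length established in \Cref{app:prop:bounded_path_length}. First I would recall that, by definition, $\tau^\alpha(t) = t + \int_0^t \| \dot{\tilde{\beta}}^\alpha_s \| \dd s$, so that for every $t \geq 0$,
\[
0 \;\leq\; \tau^\alpha(t) - t \;=\; \int_0^t \| \dot{\tilde{\beta}}^\alpha_s \| \dd s \;\leq\; \int_0^{+\infty} \| \dot{\tilde{\beta}}^\alpha_s \| \dd s \;\leq\; C,
\]
where $C$ is precisely the constant from \Cref{app:prop:bounded_path_length}, which does not depend on $\alpha$. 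In particular $\tau^\alpha(t) \leq t + C$ for all $t \geq 0$.

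Next I would plug in $t = \hat{t}^\alpha_\tau$. Since $\hat{t}^\alpha_\tau = (\tau^\alpha)^{-1}(\tau)$ and $\tau^\alpha$ is a strictly increasing bijection of $\R_{\geq 0}$, this is legitimate and gives
\[
\tau \;=\; \tau^\alpha\big(\hat{t}^\alpha_\tau\big) \;\leq\; \hat{t}^\alpha_\tau + C,
\]
i.e. $\tau - C \leq \hat{t}^\alpha_\tau$ for every $\alpha > 0$ and every $\tau \geq 0$. Finally, along the subsequence $(\alpha_k)$ of \Cref{prop:rescaled_limit} we have $\hat{t}^{\alpha_k}_\tau \to \hat{t}_\tau$ (uniformly on $[0,T]$, hence in particular pointwise), and the non-strict inequality passes to the limit, yielding $\tau - C \leq \hat{t}_\tau$, which is the claim.

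There is no genuine obstacle in this argument; it is a direct consequence of the construction of $\tau^\alpha$ and of the finiteness of the total path length. The only point that requires care—and where all the real work has already been done—is that the bound on $\int_0^{+\infty}\|\dot{\tilde{\beta}}^\alpha_s\|\dd s$ is uniform in the initialisation scale $\alpha$; this is exactly the content of \Cref{app:prop:bounded_path_length}, which in turn relies on the uniform boundedness of the iterates (\Cref{app:prop:bounded_iterates}) and the Kurdyka–Łojasiewicz length estimate for the semialgebraic loss $F$.
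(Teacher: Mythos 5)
Your argument is correct and is essentially the paper's own proof: both rest on the uniform-in-$\alpha$ bound $\tau^\alpha(t)\leq t+C$ from \Cref{app:prop:bounded_path_length}, invert the bijection $\tau^\alpha$ to get $\tau-C\leq \hat{t}^\alpha_\tau$, and pass the non-strict inequality to the limit along the subsequence. There is nothing to add.
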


\begin{proof} 
Recall that 
\begin{align*}
    \taua(t) = t +\int_0^t \|  \dot{\ttbeta}^\alpha_s \| \dd s.
\end{align*}
From \Cref{app:prop:bounded_path_length}, the full path length $\int_0^{+\infty} \Vert \dot{\beta}_s^\alpha \Vert \dd s$ is finite and bounded by some constant $C$ independently of $\alpha$.  Therefore $\tau^\alpha$ is a bijection in $\R_{\geq 0}$ and we defined $\hta_\tau=   (\taua)^{-1}(\tau)$. Furthermore $\tau^\alpha(t) \leq t + C$ leads to $t \leq \hat{t}^\alpha(t+C)$ and therefore $\tau - C \leq \hat{t}^\alpha(\tau)$ for all $\tau \geq 0$. This inequality still holds for any converging subsequence, which proves the result.
\end{proof}

%\vspace*{1em}
Under a mild additional assumption on the data (see \Cref{app:addit_lemma}), we showed after the proof of \Cref{prop:function_F} in \Cref{app:sec:resultsF} that the number of saddles of $F$ is finite. Without this assumption, the number of saddles is \textit{a priori} not finite. However the following lemma shows that along the flow of $\hat{\beta}$ the number of saddles which can potentially be visited is indeed finite. 

\begin{lem}\label{app:lemma:finite_saddles}
The limiting flow $\hat{\beta}$ as defined in \Cref{prop:rescaled_limit} can only visit a finite number of critical points $\beta \in S \coloneqq \{ \beta \in \R^d, \beta \odot \nabla L(\beta) = \mathbf{0} \}$ and can visit each one of them at most once. 
\end{lem}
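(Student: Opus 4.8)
The plan is to establish two facts separately: \textbf{(a)} the set of critical points of $L$ that the limiting flow $\hat{\beta}$ can ever touch is finite, and \textbf{(b)} once $\hat{\beta}$ leaves such a critical point it never returns to it. Claim \textbf{(a)} is the substantial part, and I would prove it first, because its conclusion also makes \textbf{(b)} easy to argue cleanly.

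For \textbf{(a)}, introduce $\hat{s}_\tau \coloneqq -\int_0^\tau \dot{\hat{t}}_s \nabla L(\hat{\beta}_s)\dd s$, which by \Cref{eq:ode_beta_hat} satisfies $\hat{s}_\tau \in \partial\Vert\hat{\beta}_\tau\Vert_1$ for all $\tau$. Since $\nabla L(\beta) = \frac{1}{n} X^\top(X\beta - y)$ always lies in the row space of $X$, so does $\hat{s}_\tau$; write $\hat{s}_\tau = X^\top r_\tau$ for some $r_\tau \in \R^n$. Fix any time $\tau$ with $\beta_c \coloneqq \hat{\beta}_\tau \in S$ and set $\xi \coloneqq \supp(\beta_c)$. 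From $\hat{s}_\tau \in \partial\Vert\beta_c\Vert_1$ we get $\hat{s}_\tau[i] = \sign(\beta_c[i]) \in \{\pm1\}$ for $i \in \xi$, hence $\xi \subseteq I \coloneqq \{i \in [d] : |\hat{s}_\tau[i]| = 1\}$. The equicorrelation argument in part~(1) of the proof of \Cref{lemma:alg_defined} — which uses only that $\hat{s}_\tau = X^\top r_\tau$ together with \Cref{ass:general_pos} — then yields that $X_I$ has full column rank and $|I| \le n$. In particular $|\xi| \le n$, and $X_\xi$, whose columns form a subset of the linearly independent columns of $X_I$, is also of full column rank; the restricted loss $\beta \mapsto \frac{1}{2n}\Vert y - X_\xi\beta\Vert^2$ is then strictly convex, so by \Cref{prop:function_F} (for a visited global minimum the same conclusion holds, since full column rank of $X_\xi$ makes the interpolation constraint uniquely solvable on $\xi$) the point $\beta_c$ is the \emph{unique} minimiser of $L$ on $\{\beta : \beta[i] = 0,\ i \notin \xi\}$. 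Thus $\beta_c$ is determined by $\xi$, so the assignment of a visited critical point to its support is injective into the subsets of $[d]$ of cardinality at most $n$; there are at most $\sum_{k=0}^{\min(n,d)}\binom{d}{k}$ of these, which bounds the number of critical points $\hat{\beta}$ can visit.

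For \textbf{(b)}, recall from \Cref{prop:rescaled_limit} that whenever $|\hat{\beta}_\tau|\odot\nabla L(\hat{\beta}_\tau) \neq \mathbf{0}$ the flow follows the normalised orbit \Cref{eq:heterocline}, along which $L$ strictly decreases, while at the remaining times $\hat{\beta}_\tau \in S$; in particular $\tau \mapsto L(\hat{\beta}_\tau)$ is non-increasing. Suppose $\hat{\beta}_{\tau_0} = \hat{\beta}_{\tau_1} = \beta_c \in S$ with $\tau_0 < \tau_1$ but $\hat{\beta}$ is not identically $\beta_c$ on $[\tau_0,\tau_1]$. Using \textbf{(a)}, the finitely many visited critical points do not accumulate, so some neighbourhood of $\beta_c$ contains no other point of $S$; hence, letting $\tau^\star = \inf\{\tau \ge \tau_0 : \hat{\beta}_\tau \neq \beta_c\} < \tau_1$, continuity forces $\hat{\beta}_\tau \notin S$ for $\tau$ slightly larger than $\tau^\star$, so on a small interval the flow traverses an orbit and $L$ drops strictly below $L(\beta_c)$. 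This contradicts $\hat{\beta}_{\tau_1} = \beta_c$ together with $L$ being non-increasing, so each critical point is visited in a single maximal time interval, i.e.\ at most once.

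The step I expect to be the main obstacle is exactly the part of \textbf{(b)} that excludes the flow "drifting within" $S$ or oscillating in and out of $S$ near a visited saddle — \Cref{prop:rescaled_limit} pins down $\dot{\hat{\beta}}_\tau$ only where $|\hat{\beta}_\tau|\odot\nabla L(\hat{\beta}_\tau) \neq \mathbf{0}$. This is the reason for proving \textbf{(a)} first: the non-accumulation of the finitely many visited critical points is precisely what allows one to conclude, by continuity alone, that any departure from $\beta_c$ lands $\hat{\beta}$ in the open region where the orbit equation and the strict decrease of $L$ apply.
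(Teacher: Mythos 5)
Your proof is correct and follows essentially the same route as the paper: the support of a visited critical point is contained in the equicorrelation set of $\hat{s}_\tau = X^\top r_\tau$, whose submatrix is full column rank by \Cref{ass:general_pos}, so the point is the unique minimiser of $L$ over its support and there are at most finitely many (indeed $\sum_{k\le\min(n,d)}\binom{d}{k}$) visited critical points; the strict decrease of $L$ along the heteroclinic orbits then rules out revisits. One small wording fix: a neighbourhood of $\beta_c$ need not be free of other points of $S$ (which can contain continua, e.g.\ the affine set of global minima when $d>n$), only of other \emph{visited} critical points — which is all your argument actually uses, since any point of $S$ the flow occupies is by definition visited, and your extra care on this "no drifting within $S$" point is in fact more explicit than the paper's own one-line treatment.
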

\begin{proof} 
Let $\tau \geq 0$, and assume that $\hat{\beta}_\tau \in S$, i.e., we are at a critical point at time $\tau$. From \Cref{prop:function_F}, we have that 
\begin{align}\label{app:technical:minim_prob}
    \hat{\beta}_\tau \in \argmin_{\beta_i = 0 \ \mathrm{ for } \  i \notin \supp( \hat{\beta}_\tau )} \ L(\beta),
\end{align}
Let us define the sets
\begin{align*}
% \label{eq:equicor_sets}
    &I_{\pm} \coloneqq \{ i \in \{1, \dots, d\} , \text{ s.t. } \hat{s}_{\tau}(i)=\pm1 \} \quad  \text{ and } \quad  I = I_{+} \cup I_{-}.
\end{align*}
The set $I$ corresponds to the coordinate of $\hat{\beta}_\tau$ which ``are allowed'' (but not obliged) to be non-zero since from \cref{eq:ode_beta_hat}, $\supp(\hat{\beta}_{\tau}) \subset I$. Now given the fact that the sub-matrix $X_{I} = (\tilde{x}_i)_{i \in I} \in \R^{n \times \mathrm{card}(I)}$ is full rank (see part (1) of the proof of \Cref{lemma:alg_defined} for the explanation), the solution of the minimisation problem \eqref{app:technical:minim_prob} is unique and equal to $\beta[\xi] = (X_\xi^\top X_\xi)^{-1} X_\xi^\top y$ and  $\beta[\xi^C] = 0$ where $\xi = \supp(\hat{\beta}_\tau)$.  
There are $2^d = \mathrm{Card}( P([d]) )$ (where $P([d])$ contains all the subsets of $[d]$) number of constraints of the form $\{\beta_i = 0, i \notin \mathcal{A}\}$, where $\mathcal{A} \subset [d]$, and $\hat{\beta}_\tau$ is the unique solution of one of them. $\hat{\beta}_\tau$ can therefore take at most $2^d$ values (very crude upperbound). There is therefore a finite number of critical points which can be reached by the flow $\hat{\beta}$. Furthermore, from \Cref{prop:rescaled_limit}, the loss is strictly decreasing along the heteroclinic orbits, each of these critical points can therefore be visited at most once.
% we get that all the saddles which have a support included in $I$ are disjoint. Therefore $\hat{\beta}_{\tau_\varepsilon} \notin S$ for any $\varepsilon$ small enough. 
\end{proof}

\color{black}

\end{document}